\definecolor{mygreen}{rgb}{0,0.6,0}
\definecolor{mygray}{rgb}{0.5,0.5,0.5}
\definecolor{mymauve}{rgb}{0.58,0,0.82}
\definecolor{cadmiumgreen}{rgb}{0.0, 0.42, 0.24}
\ttfamily\color{mygreen}\bfseries,
\newcommand{\ignore}[1]{}
\newcommand{\batya}[1]{{\texttt{\color{blue} Batya: [{#1}]}}}
\newcommand{\fd}{\rightarrow}
\newcommand{\real}{{\mathbb{R}}}
\newcommand*{\rom}[1]{\expandafter\@slowromancap\romannumeral #1@}
\newcommand{\RNum}[1]{\uppercase\expandafter{\romannumeral #1\relax}}
\newtheorem{defn}{Definition}[section]
\newtheorem{example}[defn]{Example}
\newtheorem{corr}[defn]{Colollary}
\newtheorem{lemma}[defn]{Lemma}
\newtheorem{lem}[defn]{Lemma}
\newtheorem{thm}[defn]{Theorem}
\newtheorem{theorem}[defn]{Theorem}
\newcommand{\sel}[1]{{\sigma}}
\newcommand{\cut}[1]{}
\newcommand{\eat}[1]{}
\newcommand{\defeq}{\stackrel{\text{def}}{=}}
\def\set#1{\mathord{\{#1\}}}
\def\cl#1{\mathbf{cl}\left(#1\right)}
\def\iset{\mathrm{m}}
\def\isetc{\mathrm{m}^c}
\def\measure{\mu}
\def\imeasure{\mu^*}
\def\eqdef{\mathrel{\stackrel{\textsf{\tiny def}}{=}}}
\def\D{\mathcal{D}}
\def\e#1{\emph{#1}}
\newenvironment{citedtheoremJAIR}[1]
{\begin{thm}{\it\e{#1}}\,\,}
	{\end{thm}}
\newenvironment{citedlemmaJAIR}[1]
{\begin{lem}{\it\e{#1}}\,\,}
	{\end{lem}}
\newenvironment{citeddefnJAIR}[1]
{\begin{defn}{\it\e{#1}}\,\,}
	{\end{defn}}
\def\implies{\Rightarrow}
\newcommand{\pow}[1]{2^{{#1}}} 
\def\vplus{{+}}
\newenvironment{repeatresult}[2]
{\vskip0.5em\par\textsc{#1} #2.\em}
{\vskip1em}
\newenvironment{reptheorem}[1]{\begin{repeatresult}{Theorem}{#1}}{\end{repeatresult}}
\newenvironment{replemma}[1]{\begin{repeatresult}{Lemma}{#1}}{\end{repeatresult}}
\def\appendix{\par
	\section*{APPENDIX}
	\setcounter{section}{0}
	\setcounter{subsection}{0}
	\def\thesection{\Alph{section}} }
\def\entropicFunctions{\Gamma^*}
\def\entropicPlhdrl{\Gamma}
\def\rankConen{\mathcal{R}_n}
\def\monotonicConen{\mathcal{M}_n}
\def\positiveConen{\mathcal{P}_n}
\def\iset{\mathrm{m}}
\def\isetc{\mathrm{m}^c}
\def\measure{\mu}
\def\imeasure{\mu^*}
\def\eqdef{\mathrel{\stackrel{\textsf{\tiny def}}{=}}}
\def\e#1{\emph{#1}}
\def\impliedCI{\tau}
\def\Yes{\mathbb{Y}}
\def\No{\mathbb{N}}
\def\bd{\boldsymbol{d}}
\def\vars{\mathbf{var}}
\def\sigmapair{\Sigma_{\text{pair}}}
\def\sigmarb{\Sigma_{\text{RB}}}
\newenvironment{proofOverview}{\begin{proof}[Proof Overview]}{\end{proof}}
\title{Approximate Implication for Probabilistic Graphical Models}
\author{
Batya Kenig
}
\date{}
\begin{document}


\maketitle

\begin{abstract}
The graphical structure of Probabilistic Graphical Models (PGMs) represents the conditional independence (CI) relations that hold in the modeled distribution. Every \e{separator} in the graph represents a conditional independence relation in the distribution, making them the vehicle through which new conditional independencies are inferred and verified. The notion of separation in graphs depends on whether the graph is directed (i.e., a \e{Bayesian Network}), or undirected (i.e., a \e{Markov Network}). \eat{
Graph algorithms, such as \e{d-separation}, use this structure to infer additional conditional independencies.\eat{, and to query whether a specific CI holds in the distribution.}}

The premise of all current systems-of-inference for deriving CIs in PGMs, is that the set of CIs used for the construction of the PGM hold \e{exactly}.
In practice, algorithms for extracting the structure of PGMs from data discover \e{approximate CIs} that do not hold exactly in the distribution. In this paper, we ask how the error in this set propagates to the inferred CIs\eat{, or implied , discovered by current systems of inference} read off the graphical structure. More precisely, what guarantee can we provide on the inferred CI when the set of CIs that entailed it hold only approximately? It has recently been shown that in the general case, no such guarantee can be provided. 

In this work, we prove new negative and positive results concerning this problem. We prove that separators in undirected PGMs do not necessarily represent approximate CIs. In other words, no guarantee can be provided for CIs inferred from the structure of undirected graphs.
We prove that such a guarantee exists for the set of CIs inferred in directed graphical models, making the \e{$d$-separation} algorithm a sound and complete system for inferring \e{approximate CIs}. We also establish improved approximation guarantees for independence relations derived from \e{marginal} and \e{saturated} CIs.

\end{abstract}
\section{Introduction}
Conditional independencies (CI) are assertions of the form $X\bot Y|Z$, stating that the random variables (RVs) $X$ and $Y$ are independent when conditioned on $Z$. The concept of conditional independence is at the core of Probabilistic graphical Models (PGMs) that include Bayesian and Markov networks. The CI relations between the random variables enable the modular and low-dimensional representations of high-dimensional, multivariate distributions, and\eat{enable efficient} tame the complexity of inference and learning, which would otherwise be very inefficient~\cite{DBLP:books/daglib/0023091,DBLP:books/daglib/0066829}.

The \e{implication problem} is the task of determining whether a set of CIs termed \e{antecedents} logically entail another CI, called the \e{consequent}, and it has received considerable attention from both the AI and Database communities~\cite{DBLP:conf/ecai/PearlP86,DBLP:conf/uai/GeigerVP89,DBLP:journals/iandc/GeigerPP91,SAYRAFI2008221,DBLP:conf/icdt/KenigS20,DBLP:conf/sigmod/KenigMPSS20}. Known algorithms for deriving CIs from the topological structure of the graphical model are, in fact, an instance of implication. 
The Directed Acyclic Graph (DAG) structure of Bayesian Networks is generated based on a set of CIs termed the \e{recursive basis}~\cite{DBLP:journals/networks/GeigerVP90}, and the $d$-separation algorithm is used to derive additional CIs, implied by this set. In undirected PGMs, also called Markov networks or Markov Random Fields (MRFs), every pair of non-adjacent vertices $u$ and $v$ signify that $u$ and $v$ are conditionally independent given the rest of the vertices in the graph. If the underlying distribution is strictly positive, then this set of CIs (i.e., corresponding to the pairs of non-adjacent vertices) imply a much larger set of CIs associated with the \e{separators} of the graph~\cite{StudenyBookChapter}. 
A separator in an undirected graph $G(V,E)$ is a subset of vertices $C\subseteq V$ whose removal breaks the graph into two or more connected components. Specifically, if $C$ is a separator in the undirected PGM $G(V,E)$, then the vertex set $V{\setminus}C$ can be partitioned into two disjoint sets $A,B\subseteq V$, where every path between a vertex $a\in A$ and $b\in B$ passes through a vertex in $C$. This partitioning corresponds to the conditional independence relation $A\bot B|C$. 

The $d$-separation algorithm is a sound and complete method for deriving CIs in probability distributions represented by DAGs~\cite{DBLP:conf/uai/GeigerVP89,DBLP:journals/networks/GeigerVP90}. In undirected PGMs, graph-separation completely characterizes the conditional independence statements that can be derived from the simple conditional independence statements associated with the non-adjacent vertex pairs of the graph~\cite{DBLP:journals/kybernetika/PearlGV89,GeigerPearl1993,StudenyBookChapter}.
The foundation of deriving CIs in directed and undirected models is the \e{semigraphoid axioms} and the \e{graphoid axioms}, respectively~\cite{Dawid1979,GEIGER1991128,GeigerPearl1993}.
\eat{
	More precisely, the soundness of $d$-separation was established by Verma, who also showed that the set of CIs derived by $d$-separation is precisely the closure of the recursive basis under the \e{semgraphoid axioms}~\cite{GEIGER1991128,GeigerPearl1993}. Then, Geiger and Pearl proved that the semigraphoid axioms are a complete system-of-inference for deriving CIs from the recursive basis, thus showing that $d$-separation is complete. }

Current systems for inferring CIs, and the graphoid axioms in particular, assume that both antecedents and consequent hold \e{exactly}, hence we refer to these as an exact implication (EI). 
However, almost all known approaches for learning the structure of a PGM rely on CIs extracted from data, which hold to a large degree, but cannot be expected to hold exactly. Of these, structure-learning approaches based on information theory have been shown to be particularly successful, and thus widely used to infer networks in many fields~\cite{CHENG200243,JMLR:v7:decampos06a,Chen2008TKDE,Zhao5130,DBLP:conf/sigmod/KenigMPSS20}. 

In this paper, we drop the assumption that the CIs hold exactly, and
consider the \e{relaxation
	problem}: if an exact implication holds, does an \e{approximate implication} hold too? 
That is, if the antecedents approximately hold in the distribution, does the consequent approximately hold as well?
What guarantees can we give for the approximation? In other words, the relaxation problem
asks whether, and under what conditions, we can convert an exact implication to an approximate one. 
When relaxation holds, then the error
to the consequent can be bounded, and any system-of-inference for deriving exact implications (e.g., the semigraphoid axioms, $d$-separation, graph-separation), can be used to infer an approximate implication.

To study the relaxation problem we need to measure the degree of satisfaction of
a CI. In line with previous work, we use Information Theory. This is the natural semantics for
modeling CIs because $X \bot Y | Z$ if and only if $I(X; Y |Z) = 0$, where $I$ is the
conditional mutual information. Hence, an exact implication (EI) $\sigma_1,\cdots,\sigma_k \implies \tau$ is an assertion of the form $(h(\sigma_1){=}0 \wedge  \cdots \wedge h(\sigma_k){=}0) \implies
h(\tau){=}0$, where 
$\tau,\sigma_1,\sigma_2,\dots$ are triples $(X;Y|Z)$, and
$h$ is the conditional mutual information measure $I(\cdot;\cdot|\cdot)$. An approximate implication (AI) is a linear inequality $h(\tau) \leq \lambda h(\Sigma)$, where $h(\Sigma)\eqdef\sum_{i=1}^kh(\sigma_i)$, and $\lambda \geq 0$ is the approximation factor. We say that a class of CIs \e{$\lambda$-relaxes} if every exact implication (EI) from the class can be transformed to an approximate implication (AI) with an approximation factor $\lambda$. We observe that an approximate implication always implies an exact implication\eat{, even if we do not have an upper bound on $\lambda$, but every concrete AI has a finite lambda This is} because the mutual information $I(\cdot;\cdot|\cdot)\geq 0$ is a nonnegative measure. Therefore, if $0\leq h(\tau)\leq \lambda h(\Sigma)$ for some $\lambda \geq 0$, then $h(\Sigma)=0 \implies h(\tau)=0$.
\eat{\footnote{This is because $I(\cdot;\cdot|\cdot)\geq 0$ is a nonnegative measure. Therefore, if $0\leq h(\tau)\leq \lambda h(\Sigma)$, then $h(\Sigma)=0 \implies h(\tau)=0$. }. }

\textbf{Results.} 
A conditional independence assertion $(A;B|C)$ is called \e{saturated} if it mentions all of the random variables in the joint distribution, and it is called \e{marginal} if $C=\emptyset$. The exact variant of implication was extensively studied ~\cite{DBLP:conf/uai/GeigerVP89,GeigerPearl1993,DBLP:conf/uai/GeigerP88,DBLP:journals/iandc/GeigerPP91,DBLP:journals/networks/GeigerVP90} (see below the related work). In this paper, we study approximate implication. Our results are summarized in Table~\ref{tab:AIResults}.

We first consider exact implications $\Sigma \implies \tau$, where the set of antecedents $\Sigma$ is comprised of saturated CIs, and no assumption is made on the consequent CI $\tau$. Every separator $Z$ in an undirected PGM $G(V,E)$ corresponds to a saturated CI statement $(X;Y|Z)$, where every path from a vertex $x\in X$ to a vertex $y\in Y$ passes through a vertex in $Z$. Hence, $\Sigma$ can be viewed as a set of CIs that hold in a probability distribution represented by a Markov Network.
We show that if $\tau$ can be derived from $\Sigma$ by applying the \e{semigraphoid axioms}, then the implication relaxes. Specifically, if $\tau=(A;B|C)$, then $h(\tau)\leq \min\set{|A|,|B|}h(\Sigma)$ (i.e., where $|A|$ denotes the number of RVs in the set $A$). For $n$ jointly-distributed random variables, this leads to a relaxation bound of $\min\set{|A|,|B|}\leq \frac{n}{2}$. In previous work, it was shown that $h(\tau)\leq |A|{\cdot}|B|{\cdot}h(\Sigma)$, leading to a relaxation bound of $\frac{n^2}{4}$~\cite{DBLP:journals/lmcs/KenigS22}. This work (Theorem~\ref{thm:saturated}) tightens the relaxation bound by an order of magnitude.

We also prove a negative result. If the implication involves the application of the \e{intersection axiom} (i.e., that is one of the \e{graphoid axioms}~\cite{DBLP:books/daglib/0066829}), then no relaxation exists. We present a strictly positive probability distribution in which the intersection axiom does not relax. Consequently, no relaxation exists for implications involving the the application of the intersection axiom, or more broadly, the graphoid axioms. Inferring CIs in Markov Networks relies on the intersection axiom~\cite{DBLP:books/daglib/0066829,StudenyBookChapter}. This negative result essentially establishes that if the CI relations associated with the non-adjacent vertex pairs in the graph do not hold exactly, then no guarantee can be made regarding the CI relations associated with the separators of the graph.

We show that every conditional independence relation $(A;B|C)$ read off a Directed Acyclic Graph (DAG) by the $d$-separation algorithm~\cite{DBLP:conf/uai/GeigerVP89}, admits a $1$-approximation (Theorem~\ref{thm:recursiveCIs}). In other words, if $\Sigma$ is the \e{recursive basis} of CIs used to build the Bayesian network~\cite{DBLP:conf/uai/GeigerVP89}, then it is guaranteed that $I(A;B|C)\leq \sum_{i\in \Sigma}h(\sigma_i)$. Furthermore, we present a family of implications for which our $1$-approximation is tight (i.e., $I(A;B|C)=\sum_{i\in \Sigma}h(\sigma_i)$). This result first appeared in~\cite{DBLP:conf/uai/Kenig21}. In this paper, we simplify the proof, and relate it to the $d$-separation algorithm.

We prove that every CI $(A;B|C)$ implied by a set of marginal CIs admits a $\min\set{|A|,|B|}$-approximation. For $n$ jointly-distributed random variables, this leads to a relaxation bound of $\min\set{|A|,|B|}\leq \frac{n}{2}$. In previous work, it was shown that $h(\tau)\leq |A|{\cdot}|B|{\cdot}h(\Sigma)$, leading to a relaxation bound of $\frac{n^2}{4}$~\cite{DBLP:conf/uai/Kenig21}. The relaxation bound established in this work (Theorem~\ref{thm:marginal}) is smaller by an order of magnitude.

Of independent interest is the technique used for proving the approximation guarantees. The \e{I-measure}~\cite{DBLP:journals/tit/Yeung91} is a theory which establishes a one-to-one correspondence between information theoretic measures such as entropy and mutual information (defined in Section~\ref{sec:notations}) and set theory. Ours is the first to apply this technique to the study of CI implication. 

\textbf{Related Work.}
The AI community has extensively studied
the exact implication problem for Conditional Independencies (CI).
In a series of papers, Geiger et al. showed that the \e{semi-graphoid axioms}~\cite{DBLP:conf/ecai/PearlP86} are sound and complete for deriving CI statements that are implied by marginal CIs~\cite{GeigerPearl1993}, and \e{recursive CIs} that are used in Bayesian networks~\cite{DBLP:journals/networks/GeigerVP90,DBLP:conf/uai/GeigerP88}.
In the same paper, they also showed that, when restricted to the set of strictly positive probability distributions, the \e{graphoid axioms} are sound and complete for deriving CI statements from saturated CIs~\cite{GeigerPearl1993}.
The completeness of $d$-separation follows from the fact that the set of CIs derived by $d$-separation is precisely the closure of the recursive basis under the \e{semgraphoid axioms}~\cite{VERMA199069}. Studen{\'{y}}
proved that in the general case, when no assumptions are made on the antecendents, no finite axiomatization exists~\cite{StudenyCINoCharacterization1990}. That is, there does not exist a finite set of axioms (deductive rules) from which all general conditional independence implications can be deduced. 

The database community has studied the implication problem for integrity constraints~\cite{DBLP:journals/tods/ArmstrongD80,DBLP:conf/sigmod/BeeriFH77,10.1007/978-3-642-39992-3_17,Maier:1983:TRD:1097039},
and showed that the implication problem is decidable and axiomatizable when the antecedents are Functional
Dependencies (FDs) or \e{Multivalued Dependencies} (which correspond to saturated CIs, see~\cite{DBLP:journals/tse/Lee87,DBLP:conf/icdt/KenigS20}), and undecidable
for \e{Embedded Multivalued Dependencies}~\cite{10.1006/inco.1995.1148}. 

The relaxation problem was first studied by Kenig and Suciu in the context of database dependencies~\cite{DBLP:conf/icdt/KenigS20}, where they showed that CIs derived from a set of saturated antecedents, admit an approximate implication. Importantly, they also showed that not all exact implications relax, and presented a family of 4-variable distributions along with an exact implication that does not admit an approximation (see Theorem 16 in~\cite{DBLP:conf/icdt/KenigS20}).  Consequently, it is not straightforward that exact implication necessarily imply its approximation counterpart, and arriving at meaningful approximation guarantees requires making certain assumptions on the antecedents, consequent, derivation rules, or combination thereof.

\textbf{Organization.}
We start in Section~\ref{sec:notations} with preliminaries. In Section~\ref{sec:exactInPGMs} we describe the role exact implication has played in probabilistic graphical models, and introduce the notion of approximate implication.
We formally state the results, and their practical implications in Section~\ref{sec:results}. We prove that the intersection axiom does not relax in Section~\ref{sec:intersectionAxiomDoesNotRelax}. 
In Section~\ref{sec:lemmas} we establish, through a series of lemmas, properties of exact implication that will be used for proving our results. 
In Sections~\ref{sec:saturatedCIs},~\ref{sec:recursiveProof}, and~\ref{sec:marginalProof}, we prove the relaxation bounds for exact implications from the set of saturated CIs, the recursive basis, and marginal CIs respectively, (see Table~\ref{tab:AIResults}).
We conclude in Section~\ref{sec:conclusion}.

\begin{table*}[t]
	\parbox{1.0\linewidth}{
		\centering
		\begin{tabular}{|c|c|c|c|c|}
			\hline
			\multirow{3}{*}{Type of EI}&\multicolumn{4}{c}{Relaxation Bounds}\vline\\
			\cline{2-5}
			&\multirow{2}{*}{General} &{Saturated+FDs}&{Recursive Basis}&Marginals\\		
			&&$\implies$ any&$\implies$ any&$\implies$ any\\
			\hline
			\hline			
			Semigraphoid&$(2^n)!$~\cite{DBLP:journals/lmcs/KenigS22}&$\frac{n}{2}$~(Thm.~\ref{thm:saturated})&$1$~(Thm.~\ref{thm:recursiveCIs})&$\frac{n}{2}$~(Thm.~\ref{thm:marginal})\\
			\hline 		
			Graphoid&\multicolumn{4}{c}{$\infty$~~(Thm.~\ref{thm:intersection})}\vline\\	
			\hline		
		\end{tabular}
		\vspace{0.2cm}
		\caption{Summary of results: relaxation bounds for the implication $\Sigma \implies \tau$ under various restrictions.
                (1) \textit{General}; derivation rules are the semi-graphoid axioms, and no restrictions are placed on $\Sigma$. (2) $\Sigma$ is a set of saturated CIs and conditional entropies, and  $\impliedCI$ is any CI. (3) $\Sigma$ is the \e{recursive basis} used to generate the Bayesian network, and $\impliedCI$ is any CI. (4) $\Sigma$ is a set of marginal CIs, and $\impliedCI$ is any CI. (5) When the set of derivation rules include the \e{intersection axiom} (e.g., the \e{graphoid axioms}), then no finite relaxation bound exists.
           }
		\label{tab:AIResults}
	}
\end{table*}

\eat{

\begin{table*}[t]
	\parbox{1.0\linewidth}{
		\centering
		\begin{tabular}{|c|c|cl|}
			\hline	
			Cone&Polyhedral&AI&\\
			\hline	
			\hline
			$\entropicPlhdrl_n$&$\Yes$&$\Yes$&Sec.~\ref{sec:SoftApproxProperty}+\cite{DBLP:journals/tit/KacedR13}\\
			\hline	
			$\cl{\entropicFunctions_n}$&$\No$&$\No$&Sec.~\ref{sec:SoftApproxProperty}\\
			\hline
			$\monotonicConen$&$\Yes$&$(\Yes)$&Sec.~\ref{sec:SoftApproxProperty}+\cite{DBLP:journals/tit/KacedR13}\\
			\hline
			$\positiveConen$&$\Yes$&$(\Yes)$&Sec.~\ref{sec:SoftApproxProperty}+\cite{DBLP:journals/tit/KacedR13}\\
			\hline
			\end{tabular}
		\vspace{0.2cm}
		\caption{Results for Approximate Implication (AI). A $\Yes$ in parenthesis (i.e., $(\Yes)$) indicates that this result is implied by the one for $\entropicPlhdrl_n$. \eat{\sout{ It is currently not known whether the cone $\rankConen$ is polyhedral, and this is one of the open problems in Matroid theory}~\cite{DBLP:conf/isit/ChanGK10,DBLP:journals/corr/abs-0910-0284}.}
		}
		\label{tab:AIResults}
	}
	\hfill
	\\
	\parbox{1.0\linewidth}{
		\centering
		\begin{tabular}{|c|cl|cl|cl|cl|}
			\hline
			\multirow{2}{*}{Cone}&\multicolumn{8}{c}{UAI}\vline\\
			\cline{2-9}
			&None&&FD&&Disj.&&Disj.+Sat.&\\
			\hline
			\hline
			$\entropicPlhdrl_n$&\sout{?}\batya{$\No$}&&$\Yes$&Sec.~\ref{sec:FDs}&$?$&&$\Yes$&Sec.~\ref{sec:SyntacticRestrictions}\\
			\hline 
			$\entropicFunctions_n$&$\No$&Sec.~\ref{sec:SoftApproxProperty}&$\Yes$&Sec.~\ref{sec:FDs}&?&&?&\\
			\hline
			$\monotonicConen$&?&&$\Yes$&Sec.~\ref{sec:FDs}&$\Yes$&Sec.~\ref{sec:IMonotoneCone}&($\Yes$)&Sec.~\ref{sec:IMonotoneCone}\\
			\hline
			$\positiveConen$&$\Yes$&Sec.~\ref{sec:PositiveCone}&$\Yes$&Sec.~\ref{sec:FDs}&$(\Yes)$&Sec.~\ref{sec:PositiveCone}&$(\Yes)$&Sec.~\ref{sec:PositiveCone}\\
			\hline
		\end{tabular}
		\vspace{0.2cm}
		\caption{Results for Unit Approximate Implication (UAI) under various restrictions: (1) \textit{None}; no restrictions are assumed to either $\Sigma$ or $\impliedCI$ (2) \textit{Functional Dependency (FD)}. We assume that all terms in $\Sigma$, as well as $\impliedCI$ are conditional entropies. (3) \textit{Disj.} The terms in $\Sigma$ are \e{disjoint} (see definition~\ref{def:disjoint} in Sec.~\ref{sec:SyntacticRestrictions}) (4) \textit{Disj.+Sat.} The terms in $\Sigma$ are disjoint, and the implied constraint $\impliedCI$ is \e{saturated} (i.e., involves all variables in the distribution).
		A $(\Yes)$ indicates that this result is implied by the result of the less restrictive restriction. 
		A `?' indicates that the question is open.}
		\label{tab:UAIResults}
	}
\end{table*}
}

\section{Preliminaries}
\label{sec:notations}

We denote by $[n] = \set{1,2,\ldots,n}$.  If
$\Omega=\set{X_1,\ldots, X_n}$ denotes a set of variables and
$U, V \subseteq \Omega$, then we abbreviate the union $U \cup V$ with
$UV$. 

\subsection{Conditional Independence}
Recall that two discrete random variables $X, Y$ are called {\em
	independent} if $p(X=x, Y=y) = p(X=x)\cdot p(Y=y)$ for all outcomes
$x,y$. Fix $\Omega=\set{X_1,\dots,X_n}$, a set of $n$ jointly
distributed discrete random variables with finite domains
$\D_1,\dots,\D_n$, respectively; let $p$ be the probability mass.  For
$\alpha \subseteq [n]$, denote by $X_\alpha$ the joint random variable
$(X_i: i \in \alpha)$ with domain
$\D_\alpha \defeq \prod_{i \in \alpha} D_i$.  We write
$p \models X_\beta \perp X_\gamma | X_\alpha$ when $X_\beta, X_\gamma$
are conditionally independent given $X_\alpha$; in the special case
that $X_\alpha$ functionally determines $X_\beta$, we write $p \models X_\alpha \fd X_\beta$. We say that a set of random variables $\set{X_1,\dots,X_k}$ are \e{mutually independent given $Z$} if $p(X_1=x_1,\dots, X_k=x_k|Z)=p(X_1=x_1|Z)\cdots p(X_k=x_k|Z)$. If $Z=\emptyset$, then we say that $\set{X_1,\dots,X_k}$ are \e{mutually independent}.

An assertion $X {\perp} Y | Z$ is called a {\em Conditional
	Independence} statement, or a CI; this includes $Z \rightarrow Y$ as
a special case.  When $XYZ=\Omega$ we call it \e{saturated}, and when $Z=\emptyset$ we call it \e{marginal}.  A set of
CIs $\Sigma$ {\em implies} a CI $\tau$, in notation
$\Sigma \Rightarrow \tau$, if every probability distribution that
satisfies $\Sigma$ also satisfies $\tau$.

\subsection{Background on Information Theory}
\label{subsec:information:theory}

We adopt required notation from the literature on information
theory~\cite{Yeung:2008:ITN:1457455}.  For $n > 0$, we
identify the functions
$\pow{[n]}\rightarrow \real$ with the vectors in $\real^{2^n}$.

\noindent {\bf Polymatroids.} A function\eat{\footnote{Most authors
		consider rather the space $\real^{2^n-1}$, by dropping
		$h(\emptyset)$ because it is always $0$.}} $h \in \real^{2^n}$ is
called a \emph{polymatroid} if $h(\emptyset)=0$ and satisfies the
following inequalities, called {\em Shannon inequalities}:
\begin{enumerate}
	\item Monotonicity: $h(A)\leq h(B)$ for $A \subseteq B$.
	\item Submodularity: $h(A\cup B)+h(A\cap B)\leq h(A) + h(B)$ for all $A,B \subseteq [n]$.
\end{enumerate}
The set of polymatroids is denoted $\Gamma_n \subseteq \real^{2^n}$.  For any polymatroid $h$ and subsets $A,B,C,D \subseteq [n]$, we define\footnote{Recall that $AB$
	denotes $A \cup B$.}
\begin{align}
	h(B|A) \eqdef &~h(AB) - h(A) \label{eq:h:cond} \\
	I_h(B;C|A) \eqdef &~h(AB) + h(AC) - h(ABC) - h(A) \label{eq:h:mutual:information}
\end{align}

Then, $\forall h\in \Gamma_n$, $I_h(B;C|A) \geq 0$ by submodularity, and
$h(B|A) \geq 0$ by monotonicity. We say that $A$ \e{functionally determines} $B$, in notation $A \fd B$ if $h(B|A)=0$. The \e{chain rule} is the identity:
\begin{equation} \label{eq:ChainRuleMI}
	I_h(B;CD|A)=I_h(B;C|A)+I_h(B;D|AC)
\end{equation}
We call the triple $(B;C|A)$ \e{elemental}
if $|B|=|C|=1$; $h(B|A)$ is a special case of $I_h$, because
$h(B|A) = I_h(B;B|A)$. By the chain rule, it follows that every CI $(B;C|A)$ can be written as a sum of at most $|B|\cdot|C|\leq \frac{n^2}{4}$ elemental CIs.

\noindent {\bf Entropy.} If $X$ is a random variable with
a finite domain $\D$ and probability mass $p$, then $H(X)$ denotes its
entropy
\begin{equation}\label{eq:entropy}
	H(X)\eqdef\sum_{x\in \D}p(x)\log\frac{1}{p(x)}
\end{equation}
For a set of jointly distributed random variables
$\Omega=\set{X_1,\dots,X_n}$ we define the function
$h : \pow{[n]} \rightarrow \real$ as $h(\alpha) \eqdef H(X_\alpha)$;
$h$ is called an \e{entropic function}, or, with some abuse, an
\e{entropy}. It is easily verified that the entropy $H$ satisfies the Shannon inequalities, and is thus a polymatroid. 
The quantities $h(B|A)$ and $I_h(B;C|A)$ are called the \e{conditional
	entropy} and \e{conditional mutual information} respectively.  The
conditional independence $p \models B \perp C \mid A$ holds iff
$I_h(B;C|A)=0$, and similarly $p \models A \fd B$ iff $h(B|A)=0$,
thus, entropy provides us with an alternative characterization of
CIs.

The following identity holds for conditional entropy~\cite{Yeung:2008:ITN:1457455}:
\begin{equation}
	\label{eq:chainRuleH}
	h(X_1\cdots X_k|Z)=h(X_1|Z) + h(X_2|X_1Z)+\cdots + h(X_k|X_1\cdots X_{k-1}Z)
\end{equation}
If the RVs $X_1,\dots,X_k$ are mutually independent given $Z$ then:
\begin{equation}
	\label{eq:sumMutuallyIndependent}
	h(X_1\cdots X_k|Z)=h(X_1|Z) + h(X_2|Z)+\cdots + h(X_k|Z)
\end{equation}

\noindent {\bf Axioms for Conditional Independence.} 
A dependency model $M$ is a subset of triplets $(X;Y|Z)$ for which the CI $X\bot Y|Z$ holds. For example, we say that $M$ is a dependency model of a joint-distribution $p$ if the CI $X\bot Y|Z$ holds in $p$ for every triple $(X;Y|Z)\in M$. A \e{semi-graphoid} is a dependency model that is closed under the following four axioms:
\begin{enumerate}[noitemsep]
	\item Symmetry: $X\bot Y|Z$ $\Leftrightarrow$ $X\bot Y|Z$.
	\item Decomposition: $X\bot YW |Z$ $\Rightarrow$ $X\bot Y|Z$.
	\item Weak Union: $X\bot YW |Z$ $\Rightarrow$ $X\bot Y|WZ$.
	\item Contraction: $X\bot Y|Z$ and $X\bot W|YW$ $\Rightarrow$ $X\bot YW |Z$.
\end{enumerate}
If, in addition, the dependency model is closed under the \e{intersection axiom}:
\begin{align}
	\label{eq:intersection}
	X \bot Y | ZW \text{ and } X \bot W | ZY \Rightarrow X \bot YW | Z
\end{align}
then the dependency model is called a \e{graphoid}. The \e{closure} of a set of CIs $\Sigma$ with respect to the semi-graphoid (graphoid) axioms, is a set of CIs $\Sigma' \supseteq \Sigma$ that can be derived from $\Sigma$ by repeated application of the semi-graphoid (graphoid) axioms.

Let $p$ be a joint probability distribution over the random variables $\Omega= \set{X_1,\dots,X_n}$, and let $h$ be its entropy function. Since $p\models X\bot Y|Z$ iff $I_h(X;Y|Z)=0$, and since $I_h(X;Y|Z)\geq 0$, it follows that the semi-graphoid axioms are corollaries of the chain rule (see~\eqref{eq:ChainRuleMI}). Since the chain rule, and the non-negativity of $I_f(X;Y|Z)$ hold for all polymatroids $f\in \Gamma_n$, it follows that the semi-graphoid axioms hold for all polymatroids $f\in \Gamma_n$. In fact, the semi-graphoid axioms can be generalized to the following information inequalities:
\begin{enumerate}[noitemsep]
	\item Symmetry: $I_h(X; Y|Z)=I_h(Y;X|Z)$.
	\item Decomposition: $I_h(X; YW |Z) \geq I_h(X;Y|Z)$.
	\item Weak Union: $I_h(X; YW |Z)\geq I_h(X;Y|WZ)$.
	\item Contraction: $I_h(X; Y|Z)+I_h(X;W|YZ)=I_h(X;YW|Z)$.
\end{enumerate}
This is not the case for the intersection axiom (see~\eqref{eq:intersection}), which holds only for a strict subset of the probability distributions (i.e., strictly positive distributions). In Section~\ref{sec:intersectionAxiomDoesNotRelax}, we show that contrary to the semi-graphoid axioms, the intersection axiom does not correspond to any information inequality.
\eat{
\subsubsection{The I-measure}\label{sec:imeasure}
The I-measure~\cite{DBLP:journals/tit/Yeung91,Yeung:2008:ITN:1457455} is a theory which establishes a one-to-one correspondence between Shannon's information measures and set theory. 
Let $h\in \entropicPlhdrl_n$ denote a polymatroid defined over the variables $\set{X_1,\dots,X_n}$. Every variable $X_i$ is associated with a set $\iset(X_i)$, and it's complement $\isetc(X_i)$.
The universal set is $\Lambda \eqdef \bigcup_{i=1}^n\iset(X_i)$.
Let $\alpha \subseteq [n]$. We denote by $X_\alpha\eqdef\set{X_j \mid j \in \alpha}$. For the variables-set $X_\alpha$, we define:
\begin{align}
	\iset(X_\alpha)\eqdef \bigcup_{i\in \alpha}\iset(X_i) &&  \isetc(X_\alpha)\eqdef \bigcap_{i\in \alpha}\isetc(X_i)
\end{align}

\begin{citeddefnJAIR}{\cite{DBLP:journals/tit/Yeung91,Yeung:2008:ITN:1457455}} \label{thm:YeungUniqueness}\label{def:field}
	The field $\mathcal{F}_n$ generated by sets $\iset(X_1),\dots,\iset(X_n)$ is the collection of sets which can be obtained by any sequence of usual set operations (union, intersection, complement, and difference) on $\iset(X_1),\dots,\iset(X_n)$.
\end{citeddefnJAIR}

The \e{atoms} of $\mathcal{F}_n$ are sets of the form $\bigcap_{i=1}^nY_i$, where $Y_i$ is either $\iset(X_i)$ or $\isetc(X_i)$. We denote by $\mathcal{A}$ the atoms of $\mathcal{F}_n$.
We consider only atoms in which at least one set appears in positive form (i.e., the atom $\bigcap_{i=1}^n\isetc(X_i)\eqdef \emptyset$ is empty).
There are $2^n-1$ non-empty atoms and $2^{2^n-1}$ sets in $\mathcal{F}_n$ expressed as the union of its atoms. 
A function $\measure:\mathcal{F}_n \rightarrow \real$ is \e{set additive} if for every pair of disjoint sets $A$ and $B$ it holds that $\measure(A\cup B)=\measure(A)\vplus\measure(B)$.
A real function $\measure$ defined on $\mathcal{F}_n$ is called a \e{signed measure} if it is set additive, and $\measure(\emptyset)=0$.

The $I$-measure $\imeasure$ on $\mathcal{F}_n$ is defined by $\imeasure(m(X_\alpha))\eqdef H(X_\alpha)$ for all nonempty subsets $\alpha \subseteq \set{1,\dots,n}$, where $H$ is the entropy~\eqref{eq:entropy}. 
Table~\ref{tab:ImeasureSummary} summarizes the extension of this definition to the rest of the Shannon measures.
\begin{table}[]
	\centering
	\small
	\begin{tabular}{|c|c|}	
		\hline
		Information & \multirow{2}{*}{$\imeasure$} \\
		Measures &  \\ \hline
		$H(X)$	& $\imeasure(\iset(X))$ \\ \hline
		$H(XY)$	& $\imeasure\left(\iset(X)\cup\iset(Y)\right)$ \\ \hline
		$H(X|Y)$	& $\imeasure\left(\iset(X)\cap \isetc(Y)\right)$ \\ \hline
		$I_H(X;Y)$ & $\imeasure\left(\iset(X)\cap \iset(Y)\right)$  \\ \hline
		$I_H(X;Y|Z)$	&  $\imeasure\left(\iset(X)\cap \iset(Y) \cap \isetc(Z)\right)$ \\ \hline
	\end{tabular}
	\vspace{0.2cm}
	\caption{Information measures and associated I-measure}
	\label{tab:ImeasureSummary}
\end{table}
Yeung's I-measure Theorem establishes the one-to-one correspondence between Shannon's information measures and $\imeasure$.
\begin{citedtheoremJAIR}{\cite{DBLP:journals/tit/Yeung91,Yeung:2008:ITN:1457455}} \label{thm:YeungUniqueness}\e{[I-Measure Theorem]}
	$\imeasure$ is the unique signed measure on $\mathcal{F}_n$ which is consistent with all Shannon's information measures (i.e., entropies, conditional entropies, and mutual information). 	
\end{citedtheoremJAIR}

Let $\sigma=(X;Y|Z)$. We denote by $\iset(\sigma)\eqdef\iset(X)\cap\iset(Y)\cap\isetc(Z)$ the set associated with $\sigma$ (see Table~\ref{tab:ImeasureSummary}). For a set of triples $\Sigma$, we define:
\begin{equation}
	\label{eq:SigmaSet}
	\iset(\Sigma)\eqdef\bigcup_{\sigma \in \Sigma}\iset(\sigma)
\end{equation}
\begin{example}
	Let $A$, $B$, and $C$ be three disjoint sets of RVs defined as follows: $A{=}A_1A_2A_3$, $B{=}B_1B_2$ and $C{=}C_1C_2$. Then, by Theorem~\ref{thm:YeungUniqueness}: $H(A){=}\mu^*(\iset(A)){=}\mu^*(\iset(A_1){\cup}\iset(A_2){\cup}\iset(A_3))$, $H(B){=}\mu^*(\iset(B)){=}\mu^*(\iset(B_1){\cup}\iset(B_2))$, and $\mu^*(\isetc(C)){=}\mu^*(\isetc(C_1){\cap} \isetc(C_2))$. By Table~\ref{tab:ImeasureSummary}:  $I(A;B|C){=}\mu^*(\iset(A)\cap \iset(B)\cap \isetc(C))$.
\end{example}
We denote by $\Delta_n$ the set of signed measures $\mu^*:\mathcal{F}_n \rightarrow \real_{\geq 0}$ that assign non-negative values to the atoms $\mathcal{F}_n$. We call these \e{positive I-measures}.
\begin{citedtheoremJAIR}{\cite{Yeung:2008:ITN:1457455}}\label{thm:YeungBuildMeasure}
	If there is no constraint on $X_1,\dots,X_n$, then $\imeasure$ can take any set of nonnegative values on the nonempty atoms of $\mathcal{F}_n$.
\end{citedtheoremJAIR}	
Theorem~\ref{thm:YeungBuildMeasure} implies that every positive I-measure $\mu^*$ corresponds to a function that is consistent with the Shannon inequalities, and is thus a polymatroid. Hence, $\Delta_n\subset \Gamma_n$ is the set of polymatroids with a positive I-measure that we call \e{positive polymatroids}.
}

\section{Exact Implication for Probabilistic Graphical Models}
\label{sec:exactInPGMs}
In this section, we formally define the notions of exact and approximate implication, and their role in undirected and directed PGMs. This provides the appropriate context for which to present our results on approximate implication in later sections. 
We fix a set of variables $\Omega = \set{X_1, \ldots, X_n}$, and consider triples of
the form $\sigma = (X;Y|Z)$, where $X,Y,Z \subseteq \Omega$, which we
call a \e{conditional independence}, CI.  An \e{implication} is a
formula $\Sigma \implies \impliedCI$, where $\Sigma$ is a set of CIs
called \e{antecedents} and $\tau$ is a CI called \e{consequent}.  For an $n$-dimensional polymatroid $h\in \Gamma_n$, and 
a CI $\sigma=(X;Y|Z)$, we define $h(\sigma)\eqdef I_h(Y;Z|X)$ (see~\eqref{eq:h:mutual:information}), for a
set of CIs $\Sigma$, we define $h(\Sigma)\eqdef\sum_{\sigma \in \Sigma}h(\sigma)$.  Fix a set $K$
s.t. $K \subseteq \Gamma_n$.
\begin{defn} \label{def:ei} The \e{exact implication} (EI)
	$\Sigma \implies \impliedCI$ holds in $K$, denoted
	$K \models_{EI} \Sigma \implies \impliedCI$ if, forall $h \in K$,
	$h(\Sigma)=0$ implies $h(\tau)=0$.  The \e{$\lambda$-approximate
		implication} ($\lambda$-AI) holds in $K$, in notation
	$K \models\lambda\cdot h(\Sigma) \geq h(\tau)$, if
	$\forall h \in K$, $\lambda\cdot h(\Sigma) \geq h(\tau)$. The
	\e{approximate implication} holds, in notation
	$K \models_{AI} (\Sigma \implies \impliedCI)$, if there exist a finite
	$\lambda \geq 0$ such that the $\lambda$-AI holds.	
\end{defn}
Notice that both exact (EI) and approximate (AI) implications
are preserved under subsets of $K$: if
$K_1 \subseteq K_2$ and $K_2 \models_x \Sigma \implies \impliedCI$, then
$K_1 \models_x \Sigma \implies \impliedCI$, for $x \in \set{\text{EI,AI}}$.

Approximate implication always implies its exact counterpart.  Indeed, if $h(\tau) \leq \lambda \cdot h(\Sigma)$
and $h(\Sigma)=0$, then $h(\tau)\leq 0$, which further implies that
$h(\tau)=0$, because $h(\tau)\geq 0$ for every triple $\tau$, and every
polymatroid $h$.  In this paper we study the reverse.

\begin{defn}
	\label{def:}
	Let $\mathcal{L}$ be a syntactically-defined class of implication
	statements $(\Sigma \Rightarrow \tau)$, and let
	$K \subseteq \Gamma_n$.  We say that $\mathcal{L}$ \e{admits a
		$\lambda$-relaxation} in $K$, if every exact implication statement
	$(\Sigma \Rightarrow \impliedCI)$ in $\mathcal{L}$ has a $\lambda$-approximation:
	$$K\models_{EI} \Sigma \Rightarrow \impliedCI \text{ if and only if }
	K \models \lambda \cdot h(\Sigma) \geq h(\impliedCI).$$\eat{  We say that
		$\mathcal{I}$ admits a \e{$\lambda$-relaxation} if every EI admits a
		$\lambda$-AI.}
\end{defn}
\begin{example} 
	Let $\Sigma{=} \set{(A;B|\emptyset),(A;C|B)}$,~and
	$\impliedCI{=}(A;C|\emptyset)$. 
	Since $I_h(A;C|\emptyset) {\leq}I_h(A;BC)$, and since $I_h(A;BC){=}I_h(A;B|\emptyset) {+} I_h(A;C|B)$ by the chain rule~\eqref{eq:ChainRuleMI}, then the exact implication $\entropicPlhdrl_n \models_{EI}\Sigma\implies \tau$ admits an AI with $\lambda=1$ (i.e., a $1$-$AI$).
	\eat{
		Then both EI and AI hold forall
		polymatroids:
		\begin{align*}
			& \text{EI} && I_h(A;B|\emptyset)=0\wedge I_h(A;C|B)=0 \Rightarrow I_h(A;C|\emptyset)=0\\
			& \text{AI} && I_h(A;C|\emptyset) \leq I_h(A;B|\emptyset) + I_h(A;C|B)
		\end{align*}
	}
\end{example}
In this paper, we focus on $\lambda$-relaxation in different subsets of $\Gamma_n$, and three syntactically-defined classes: 1) Where $\Sigma$ is a set of saturated CIs (Section~\ref{sec:MarkovNets}), 2) Where $\Sigma$ is the recursive basis of a Bayesian network (Section~\ref{sec:BNs}), and 3) Where $\Sigma$ is a set of marginal CIs.

\subsection{Markov Networks and Saturated Independence}
\label{sec:MarkovNets}
Recall that a CI $(X;Y|Z)$ is saturated if $XYZ=\Omega$.
\begin{citedtheoremJAIR}{\cite{GeigerPearl1993,DBLP:conf/sigmod/BeeriFH77}}
	\label{thm:saturatedPolymatroids}
	Let $\Sigma$ be a set of saturated CIs over the set $\Omega\eqdef \set{X_1,\dots,X_n}$ of variables, and let $\Sigma^+$ denote the closure of $\Sigma$ with respect to the semi-graphoid axioms. Let $\tau$ be a saturated CI over $\Omega$. 
	\begin{align*}
		\Gamma_n \models_{EI} \Sigma \implies \tau && \text{ if and only if } && \tau \in \Sigma^+
	\end{align*}	
\end{citedtheoremJAIR}
Theorem~\ref{thm:saturatedPolymatroids} establishes that the semi-graphoid axioms are sound and complete for inferring saturated CIs from a set of saturated CIs. \cite{DBLP:journals/ipl/GyssensNG14} improve this result by dropping any restrictions on the consequent $\tau$.
In Section~\ref{sec:saturatedCIs}, we prove that if $\Sigma$ is a set of saturated CIs, then the exact implication $\Gamma_n \models_{EI} \Sigma \implies \tau$ has an $\frac{n}{2}$-relaxation for any implied CI $\tau$. In other words, $\Gamma_n \models \frac{n}{2}{\cdot}h(\Sigma)\geq h(\tau)$.

When restricted to polymatroids that are also graphoids, then CI relations can be represented by an undirected graph.
Let $G(V,E)$ be an undirected graph, and let $u,v\in V$. We say that $u$ and $v$ are \e{adjacent} if $(u,v)\in E$. A \e{path} $t=(v_1,\dots,v_n)$ is a sequence of vertices $(v_1,\dots,v_n)$ such that $(v_i,v_{i+1})\in E$ for every $i\in \set{1,\dots,n-1}$. We say that $u$ and $v$ are \e{connected} if there is a path $(u=v_1,\dots,v_n=v)$ starting at $u$ and ending at $v$; otherwise, we say that $u$ and $v$ are \e{disconnected}. Let $X,Y \subseteq V$ be disjoint sets of vertices. We say that $X$ and $Y$ are disconnected if $x$ and $y$ are disconnected for every $x\in X$ and $y\in Y$.
Let $V'\subseteq V$. The graph \e{induced by $V'$} denoted $G[V']$ is the graph $G'(V',E')$ where $E'\eqdef \set{(u,v)\in E \mid u,v \in V'}$.  
We say that $Z\subseteq V$ is an \e{$XY$-separator} if, in the graph $G[V{\setminus}Z]$, that results from $G$ by removing the vertex-set $Z$ and the edges adjacent to $Z$, $X$ and $Y$ are disconnected. For an undirected graph $G(V,E)$, we denote by $X\bot_G Y|Z$ the fact that $Z$ is an $XY$-separator.

Let $p$ be a joint probability distribution over the random variables $\Omega = \set{X_1,\dots,X_n}$. We define $\sigmapair\eqdef \set{(u;v|V{\setminus}uv): p\models u\bot v|V{\setminus}uv}$. That is $\sigmapair$ is the set of vertex-pairs that are conditionally independent given the rest of the variables. Observe that every CI in $\sigmapair$ is, by definition, a saturated CI.
We define the \e{independence graph} $G(\Omega, E)$, where $E\eqdef \set{(u,v) : (u;v|V{\setminus}uv) \notin \sigmapair}$. That is, the edges of the independence graph are between vertices that are not independent given the rest of the variables. 
\begin{citedtheoremJAIR}{\cite{GeigerPearl1993}}
	\label{thm:saturatedGraphoids}
	Let $p$ be a joint probability distribution over the random variables $\Omega = \set{X_1,\dots,X_n}$, with entropy function $h_p$. Let $G(\Omega,E)$ be the independence graph generated from $\sigmapair\eqdef \set{(u;v|V{\setminus}uv): p\models u\bot v|V{\setminus}uv}$. Let $\sigmapair^+$ denote the closure of $\sigmapair$ with respect to the graphoid axioms. If $h_p$ is a graphoid, then for any three disjoint sets $X,Y,Z \subseteq \Omega$, where $XYZ=\Omega$, the following holds:
	\begin{align*}
		h_p\models \Sigma_{pair} \implies (X;Y|Z)&& \text{ iff } && (X; Y|Z) \in \sigmapair^+ && \text{ iff } && X\bot_G Y|Z
	\end{align*}
\end{citedtheoremJAIR}
Theorem~\ref{thm:saturatedGraphoids} establishes that the graphoid axioms are sound and complete for inferring saturated CIs from $\sigmapair$ which, in turn, correspond to graph-separation in the independence graph. In Section~\ref{sec:intersectionAxiomDoesNotRelax}, we show that the intersection axiom does not relax. An immediate consequence is that no relaxation exists for exact implications involving the intersection axiom. In particular, no implication of the form $\sigmapair \implies (X;Y|Z)$ admits a relaxation. Practically, this means that the current method of inferring CIs in Markov Networks does not extend to the case where the CIs in $\Sigma_{pair}$ do not hold exactly.
\subsection{Bayesian Networks}
\label{sec:BNs}
Let $G(V,E)$ be a directed Acyclic Graph (DAG), and let $u,v \in V$. We say that $u$ is a \e{parent} of $v$, and $v$ a \e{child} of $u$ if $(u\rightarrow v) \in E$. A \e{directed path} $t=(v_1,v_2,\dots,v_n)$ is a sequence of vertices $(v_1,v_2,\dots,v_n)$ such that there is an edge $(v_i \rightarrow v_{i+1})\in E$ for every $i\in \set{1,\dots,n-1}$. We say that $v$ is a \e{descendant} of $u$, and $u$ an \e{ancestor} of $v$ if there is a directed path from $u$ to $v$.
A \e{trail} $t=(v_1,v_2,\dots,v_n)$ is a sequence of vertices $(v_1,v_2,\dots,v_n)$ such that there is an edge between $v_i$ and $v_{i+1}$ for every $i\in \set{1,\dots,n-1}$. That is, $(v_i\rightarrow v_{i+1})\in E$ or $(v_i \leftarrow v_{i+1})\in E$ for every $i\in \set{1,\dots,n-1}$. A vertex $v_i$ is said to be \e{head-to-head} with respect to $t$ if $(v_{i-1}\rightarrow v_i)\in E$ and $(v_{i}\leftarrow v_{i+1})\in E$.
A trail $t=(v_1,v_2,\dots,v_n)$ is \e{active} given $Z\subseteq V$ if (1) every $v_i$ that is a head-to-head vertex with respect to $t$ either belongs to $Z$ or has a descendant in $Z$, and (2) every $v_i$ that is not a head-to-head vertex with respect to $t$ does not belong to $Z$. If a trail $t$ is not active given $Z$, then it is \e{blocked} given $Z$. Let $X,Y,Z\subseteq V$ be pairwise disjoint. We say that $Z$ \e{$d$-separates} $X$ from $Y$ if every trail between $x\in X$ and $y\in Y$ is blocked given $Z$. We denote by $X\bot_{dsep} Y|Z$ that $Z$ $d$-separates $X$ from $Y$

A Bayesian network encodes the CIs of a probability distribution using a
DAG. Each node $X_i$ in a Bayesian network corresponds
to the variable $X_i\in \Omega$, a set of nodes $\alpha$ correspond to the set of variables $X_\alpha$, and $x_i \in \D_i$ is a
value from the domain of $X_i$. Each node $X_i$ in the network represents the distribution $p(X_i \mid X_{\pi(i)})$ where $X_{\pi(i)}$ is a set of variables that
correspond to the parent nodes $\pi(i)$ of $i$. The distribution represented by a Bayesian network is
\begin{equation}\label{eq:BN}
	p(x_1,\dots,x_n)=\prod_{i=1}^np(x_i| x_{\pi(i)})
\end{equation}
(when $i$ has no parents then $X_{\pi(i)}=\emptyset$).

Equation~\ref{eq:BN} implicitly encodes a set of $n$ conditional independence statements, called the \e{recursive basis} for the network:
\begin{equation}
	\label{eq:recursiveSet}
	\sigmarb\eqdef\set{(X_i; X_1\dots X_{i-1}{\setminus} \pi(X_i) \mid \pi(X_i)): i\in [n]}
\end{equation}
The implication problem associated with Bayesian Networks is to determine whether $\Gamma_n \models \sigmarb \implies \tau$ for a CI $\tau$. Let $G(\Omega,E)$ be a DAG generated by the recursive basis $\sigmarb$. That is, the vertices of $G$ are the RVs $\Omega$, and its edges are $E=\set{X_i\rightarrow X_j | X_i\in \pi(X_j)}$.
Given a CI $\tau=(A;B|C)$, the $d$-separation algorithm efficiently determines whether $C$ $d$-separates $A$ from $B$ in $G$. It has been shown that $\Gamma_n \models \sigmarb \implies \tau$ if and only if $C$ $d$-separates $A$ from $B$~\cite{DBLP:journals/networks/GeigerVP90}.
\begin{citedtheoremJAIR}{\cite{DBLP:journals/networks/GeigerVP90,DBLP:conf/uai/GeigerP88,DBLP:conf/uai/VermaP88}}
	\label{thm:soundnessCompletenessdSep}
	Let $f\in \Gamma_n$ be a polymatroid, and $G(\Omega,E)$ be the DAG generated by the recursive basis $\sigmarb$ (see~\eqref{eq:recursiveSet}). Let $\sigmarb^+$ denote the closure of $\sigmarb$ with respect to the semi-graphoid axioms. The following holds for every three disjoint sets $A,B,C \subseteq \Omega$:
	\begin{align*}
		\Gamma_n \models_{EI} \sigmarb \implies (A;B|C) && \text{ iff } && (A;B|C)\in \sigmarb^+ && \text{ iff } && A\bot_{dsep} B|C
	\end{align*} 
\end{citedtheoremJAIR}
Theorem~\ref{thm:soundnessCompletenessdSep} establishes that both the semi-graphoid axioms, and the $d$-separation criterion, are sound and complete for inferring CI statements from the recursive basis. Since the semigraphoid axioms follow from the Shannon inequalities (\eqref{eq:h:cond} and~\eqref{eq:h:mutual:information}), Theorem~\ref{thm:soundnessCompletenessdSep} estsablishes that the Shannon inequalities are both sound and complete for inferring CI statements from the recursive basis. In Section~\ref{sec:recursiveProof}, we show that the exact implication $\Gamma_n \models_{EI} \sigmarb \implies (A;B|C)$ admits a $1$-relaxation.

\section{Formal Statement of Results and Practical Implications}
\label{sec:results}
We begin by proving a negative result concerning the intersection axiom (see~\eqref{eq:intersection}).
\def\intersectionAxiomDoesNotRelax{
	For any finite $\lambda> 0$, there exists a strictly positive probability distribution $p_\lambda(A,B,C)$, such that:
	\begin{align*}
		\lambda\left(I_{h_{p_\lambda}}(A;B|C)+I_{h_{p_\lambda}}(A;C|B)\right) < I_{h_{p_\lambda}}(A;BC)
	\end{align*}
	where $h_{p_\lambda}$ is the entropy function of $p_\lambda$.
}
\begin{theorem}[The intersection axiom does not relax]
	\label{thm:intersection}
\intersectionAxiomDoesNotRelax
\end{theorem}
Theorem~\ref{thm:intersection} establishes that the intersection axiom (see~\eqref{eq:intersection}) does not relax, even when restricted to strictly positive distributions! This result has the following practical implication. Recall from theorem~\ref{thm:saturatedGraphoids} that every $XY$-separator $Z$ in the independence graph $G(V,E)$, generated using the saturated set of CIs $\sigmapair\eqdef \set{(u;v|V{\setminus}uv) :I_h(u;v|V{\setminus}uv)=0 }$, corresponds to the CI $I_h(X;Y|Z)=0$. Now, let $\varepsilon >0$, and suppose that we define $\sigmapair^\varepsilon \eqdef \set{(u;v|V{\setminus}uv) :I_h(u;v|V{\setminus}uv)\leq \varepsilon }$. We ask: for an $XY$-separator $S$ in $G$, can we bound the value $I_h(X;Y|S)$ ? More formally, is there a finite $0 < \lambda$, such that $I_h(X;Y|S)\leq \lambda{\cdot}h(\sigmapair^{\varepsilon})$ holds for all strictly positive probability distributions? Theorem~\ref{thm:intersection} establishes that the answer is negative. In other words, approximate CIs cannot be derived from $\sigmapair^{\varepsilon}$. 
In stark contrast to the negative result of Theorem~\ref{thm:intersection}, we show that approximate CIs can be derived using the semi-graphoid axioms, and the the $d$-separation algorithm in Bayesian networks.

We recall that a \e{conditional} is a statement of the form $A\rightarrow B$, which holds in a probability distribution $p$ if and only if $h(B|A)=0$.
\begin{theorem}
	\label{thm:saturated}
	Let $\Sigma$ be a set of saturated CIs and conditionals over the set of variables $\Omega \eqdef \set{X_1,\dots,X_n}$, and let $\tau\eqdef (A;B|C)$ be any CI. 
	\begin{align*}
		\Gamma_n \models_{EI} \Sigma \implies \tau && \text{ if and only if } && \Gamma_n \models I_h(A;B|C)\leq \min\set{|A|,|B|}h(\Sigma)
	\end{align*}
\end{theorem}
Theorem~\ref{thm:saturated} generalizes Theorem~\ref{thm:saturatedPolymatroids} in several aspects. 
The antecedents $\Sigma$ may contain both saturated CIs and conditionals, $\tau$ is not restricted to be a saturated CI, and most importantly, we prove that every exact implication from a set of saturated CIs and conditionals relaxes to the inequality $\Gamma_n \models I_h(A;B|C)\leq \min\set{|A|,|B|}h(\Sigma)$. Note that the only-if direction of Theorem~\ref{thm:saturated} is immediate, and follows from the non-negativity of Shannon's information measures. In previous work~\cite{DBLP:journals/lmcs/KenigS22}, it was shown that $\Gamma_n \models_{EI} \Sigma \implies \tau$ if and only if $\Gamma_n \models I_h(A;B|C)\leq |A|{\cdot}|B|{\cdot}h(\Sigma)\leq \frac{n^2}{4}h(\Sigma)$. Noting that $\min\set{|A|,|B|}\leq \frac{n}{2}$, the result of Theorem~\ref{thm:saturated} tightens the bound by an order of magnitude.
\begin{theorem}
	\label{thm:recursiveCIs}
	Let $\Sigma$ be a recursive set of CIs (see~\eqref{eq:recursiveSet}), and let $\tau = (A;B|C)$. Then the following holds: 
	\begin{align}
		\Gamma_n \models_{EI} \Sigma \Rightarrow \tau& &\text{ if and only if } & &\Gamma_n \models h(\Sigma) \geq h(\tau)
	\end{align}
\end{theorem}
The result of Theorem~\ref{thm:recursiveCIs} has the following practical implication. Theorem~\ref{thm:soundnessCompletenessdSep} establishes that if $X$ and $Y$ are $d$-separated given $Z$ in the DAG $G(V,E)$, generated by the recursive basis $\sigmarb$ (see~\eqref{eq:recursiveSet}), then $I_h(X;Y|Z)=0$. Now, let $\varepsilon >0$, and suppose that for every $(X_i;X_1\dots X_{i-1}|\pi(X_i))\in \sigmarb$, it holds that $I_h(X_i;X_1\dots X_{i-1}|\pi(X_i))\leq \varepsilon$. We ask: can we bound the value of $I_h(X;Y|Z)$? Theorem~\ref{thm:recursiveCIs} establishes that $I_h(X;Y|Z)\leq h(\sigmarb)$. Theorem~\ref{thm:recursiveCIs} was first proved in~\cite{DBLP:conf/uai/Kenig21}. In this paper, we simplify the proof, and relate it to the $d$-separation algorithm.

Finally, we consider implications from the set of marginal CIs.
\begin{thm}
	\label{thm:marginal}
	Let $\Sigma$ be a set of marginal CIs, and $\tau=(A;B|C)$ be any CI.
	\begin{align}
		\Gamma_n \models_{EI} \Sigma \Rightarrow \tau& &\text{ if and only if }& &\Gamma_n \models   I_h(A;B|C)\leq \min\set{|A|,|B|}h(\Sigma)
	\end{align}
\end{thm}
Theorem~\ref{thm:marginal} generalizes the result of~\cite{GEIGER1991128}, which proved that the semi-graphoid axioms are sound and complete for deriving marginal CIs. In previous work~\cite{DBLP:conf/uai/Kenig21}, it was shown that $\Gamma_n \models_{EI} \Sigma \implies \tau$ if and only if $\Gamma_n \models I_h(A;B|C)\leq |A|{\cdot}|B|{\cdot}h(\Sigma)\leq \frac{n^2}{4}h(\Sigma)$. Noting that $\min\set{|A|,|B|}\leq \frac{n}{2}$, the result of Theorem~\ref{thm:marginal} tightens the bound by an order of magnitude.
\eat{

We generalize the results of Geiger et al.~\cite{DBLP:conf/uai/GeigerVP89,GEIGER1991128}, by proving that implicates $\tau{=}(X;Y|Z)$ of the recursive set~\cite{DBLP:conf/uai/GeigerVP89}, and of marginal CIs~\cite{GEIGER1991128}, admit a $1$, and $|X||Y|$-approximation respectively, and thus continue to hold also approximately. Note that the only-if direction of Theorem

\subsection{Implication From Recursive CIs}
\eat{We show that the implication from a recursive set of CIs (see~\eqref{eq:recursiveSet}) and functional dependencies admits a 1-relaxation.}
Geiger et al.~\cite{DBLP:conf/uai/GeigerVP89} prove that the semigraphoid axioms are sound and complete for the implication from the recursive set (see~\eqref{eq:recursiveSet}). They further showed that the set of implicates can be read off the appropriate DAG via the d-separation procedure.
We show that every such exact implication can be relaxed, admitting a $1$-relaxation, guaranteeing a bounded approximation for the implicates (CI relations) read off the DAG by d-separation. 

We recall the definition of the recursive basis $\Sigma$ from~\eqref{eq:recursiveSet}:
\begin{equation}
	\label{eq:nRecursiveSet}
	\Sigma \eqdef \set{(X_i;R_i|B_i) : i\in [1,n], R_iB_i=U^{(i)}}
\end{equation}
where $B_i{\eqdef}\pi(X_i)$ and $U^{(i)}{\eqdef} \set{X_1,\dots,X_{i-1}}$.
\eat{
	\begin{defn}[Recursive Set]
		\label{def:enhancedRecursiveSet}
		Let $\Omega=\set{X_1,\dots,X_n}$ denote a set of ordered RVs, and for every $i\in [1,n]$ we let $U^{(i)}\eqdef \set{X_1,\dots,X_{i-1}}$, and $R_i=\pi(X_i)$. The recursive set is defined:
		\begin{equation}
			\Sigma \eqdef \set{(X_i;R_i|B_i) | i\in [1,n], R_iB_i=U^{(i)}}
		\end{equation}
	\end{defn}
}
We observe that $|\Sigma|{=}n$, there is a single triple $\sigma_n{=}(X_n;R_n|B_n){\in} \Sigma$ that mentions $X_n$, and that $\sigma_n$ is saturated.

We recall that $\Delta_n\subset \Gamma_n$ is the set of polymatroids whose I-measure assigns non-negative values to the atoms $\mathcal{F}_n$ (see Section~\ref{sec:imeasure}).
\begin{theorem}
	\label{thm:recursiveCIs}
	Let $\Sigma$ be a recursive set of CIs (see~\eqref{eq:nRecursiveSet}), and let $\tau = (A;B|C)$. Then the following holds: 
	\begin{align}
		\Delta_n \models_{EI} \Sigma \Rightarrow \tau& &\text{ iff } & &\Gamma_n \models h(\Sigma) \geq h(\tau)
	\end{align}
\end{theorem}
We note that the only-if direction of Theorem~\ref{thm:recursiveCIs} is immediate, and follows from the non-negativity of Shannon's information measures. We prove the other direction in Section~\ref{sec:recursiveProof}.
Theorem~\ref{thm:recursiveCIs} states that it is enough that the exact implication holds on all of the positive polymatroids $\Delta_n$, because this implies the (even stronger!) statement
$\Gamma_n \models h(\Sigma) \geq h(\tau)$.
\eat{
	Observe that proving the claim only requires us to assume that the exact implication holds in the positive subset of polymatroids $\Delta_n \subset \Gamma_n$.
}
\subsection{Implication from Marginal CIs}
We show that \e{any} implicate $\tau{=}(A;B|C)$ from a set of marginal CIs has an $|A|{\cdot}|B|$-approximation. This generalizes the result of Geiger, Paz, and Pearl~\cite{GEIGER1991128}, which proved that the semigraphoid axioms are sound and complete for deriving marginal CIs. \eat{, while we show bounded relaxation, and thus implication, for \e{any} implicate of marginal CIs.}
\eat{ That is, when $\Sigma$ is a set of marginal CIs (e.g., $(X;Y)$), and $\tau$ is \e{any} (not necessarily marginal) CI, then $\Sigma \implies \tau$ implies that $h(\tau) \leq \frac{n^2}{4}h(\Sigma)$. }
\begin{thm}
	\label{thm:marginal}
	Let $\Sigma$ be a set of marginal CIs, and $\tau=(A;B|C)$ be any CI.
	\begin{align}
		\Gamma_n \models_{EI} \Sigma \Rightarrow \tau& &\text{ iff }& &\Gamma_n \models   (|A||B|)h(\Sigma) \geq h(\tau) \eat{\frac{n^2}{4}h(\Sigma) \geq h(\tau)}
	\end{align}
\end{thm}
Also here, the only-if direction of Theorem~\ref{thm:marginal} is immediate\eat{, and follows from the positivity of Shannon's information measures.}, and we prove the other direction in Section~\ref{sec:marginalProof}. 
\eat{
	To prove the claim, we need to assume that the exact implication holds for the entire set of polymatroids $\Gamma_n$ (as opposed to only the subset $\Delta_n$ in Theorem~\ref{thm:recursiveCIs}). 
}
\eat{
	In the proof, we make use of the \e{covering property} of EI in the set of polymatroids $\Gamma_n$, described in Lemma~\ref{lem:implicationCovers}.
}
\eat{
	We observe that to prove the $\lambda$-approximation for the enhanced recursive set of CIs (Definition~\ref{def:enhancedRecursiveSet}), it is enough to assume that the exact implication holds only for the set $\Delta_n\subset \Gamma_n$ of positive polymatroids (Theorem~\ref{thm:recursiveCIs}).
	This assumption is not enough for proving $\lambda$-approximation in the case of marginal CIs, where we need to assume that the implication holds for \e{all} polymatroids $\Gamma_n$ (Theorem~\ref{thm:marginal}). 
} 

}

\def\dom{\mathcal{D}}
\section{Intersection Axiom Does not Relax}
\label{sec:intersectionAxiomDoesNotRelax}
It is well-known that the intersection axiom~\eqref{eq:intersection} does not hold for all probability distributions. From this, we can immediately conclude that the intersection axiom does not relax for all polymatroids. This follows from the fact that the entropic function associated with any probability distribution is a polymatroid, and that approximate implication generalizes exact implication. In this section, we prove Theorem~\ref{thm:intersection}, establishing that the intersection axiom does not relax even for strictly positive probability distributions. In other words, we prove the (surprising) result that even for the class of distributions in which the intersection axiom holds, it does not relax.
\eat{
	Let $A,B,C,D$ denote jointly distributed random variables, with joint distribution $P$. Since the entropic function of $P$ is a polymatroid, then it meets the semi-graphoid axioms summarized as:
	\begin{align}
		& \text{If } I(A;B|C)=0, \text{then } I(B;A|	C)=0 \\
		& I(AB;C|D) = I(A;C|D)+ I(B;C|AD)
	\end{align}
	The semi-graphoid axioms trivially relax due to the chain rule of mutual information.
	Another property of conditional independence often used is the \e{Intersection axiom}.
	\begin{equation}
		\label{eq:intersection}
		\text{If } I(A;B|CD)=0, \text{and } I(A;C|BD)=0, \text{then } I(A;BC|D)=0
	\end{equation}
	Contrary to the semi-graphoid axioms, ~\eqref{eq:intersection} does not hold for all polymatroids (and even not for all probability distributions). However, it does hold for strictly positive distributions.
	The intersection axiom is of major importance for deriving conditional independence statements in undirected Markov networks. 
	
	Let $G(V,E)$ be an undirected Markov network, and let $P(V)$ be a joint probability distribution over $V$. We say that $P$ satisfies the (exact) pairwise Markov property with respect to $G$ if, for any pair $\set{u,v}\subseteq V$, it holds that $(u,v)\notin E$ if and only if $I(u;v|V\setminus \set{u,v})=0$. Likewise, we say that $P$ satisfies the (approximate) pairwise Markov property with respect to $G$ if, for any pair $\set{u,v}\subseteq V$, it holds that $(u,v)\notin E$ if and only if $I(u;v|V\setminus \set{u,v})\leq \varepsilon$.
	
	Let $u,v\in V$ such that $(u,v)\notin E$. A subset of vertices $S\subseteq V\setminus \set{u,v}$ is a $uv$-separator if in the graph induced by $V\setminus S$, denoted $G[V\setminus S]$, $u$ and $v$ reside in distinct connected components.
	\begin{theorem}
		\label{thm:HC}
		Let $P$ be a joint probability distribution such that~\eqref{eq:intersection} holds for any set of disjoint subsets $A,B,C,D$. Also, suppose that $P$ satisfies the (exact) pairwise Markov property with respect to $G(V,E)$. Then, for any $uv$-separator $S$ in $G$, it holds that $I(u;v|S)=0$ (this is called the \e{global Markov} property).
	\end{theorem}
	
	The question we ask is: suppose that $P$ is a joint probability distribution over $V\eqdef\set{V_1,\dots,V_n}$ such that~\eqref{eq:intersection} holds for any set of disjoint subsets $A,B,C,D$ (i.e., $P$ is a trictly positive probability distribution). Also, suppose that $P$ satisfies the (approximate) pairwise Markov property with respect to $G(V,E)$. Is it the case that for any $uv$-separator $S$ in $G$, there exists a finite $\lambda >0$ such that $I(u;v|S)\leq \lambda \sum_{(x,y)\notin E}I(x;y|V\setminus \set{x,y})$ holds for all probability distributions that satisfy~\eqref{eq:intersection}.
	We answer this question negatively, by showing that there exists a simple probability distribution such that~\eqref{eq:intersection} does not relax.
}

Let $p(A,B,C)$ be a strictly positive probability distribution, where $I_h(A;B|C)=0$ and $I_h(A;C|B)=0$. According to the intersection axiom~\eqref{eq:intersection}, it holds that $I_h(A;BC)=0$. \eat{Graphically, this is depicted in Figure~\ref{fig:simpleCE}.
	\begin{figure}
		\centering
		\includegraphics[width=0.25\textwidth]{ABCGraph.pdf}
		\caption{Let $P(A,B,C)$ be a strictly positive joint probability distribution such that $I(A;B|C)=I(A;C|B)=0$, and $I(B;C|A)>0$. Hence, $P$ is (exactly) pairwise Markov with respect to $G$.
			Since $\emptyset$ separates $A$ from $B$ in $G$, then by Theorem~\ref{thm:HC}, we have that $I(A;B|\emptyset)=0$.}
		\label{fig:simpleCE}
	\end{figure}
}

To prove the Theorem, we describe the following ``helper'' distribution. Let $A_1,\dots,A_7$ denote mutually independent, random variables. The random variable $A_7$ is defined $A_7\eqdef(A_{7,1},A_{7,2},A_{7,3})$, where $A_{7,j}$ is a binary RV for all $j\in \set{1,2,3}$. For $i\in \set{4,5,6}$, $A_i\eqdef(A_{i,1},A_{i,2})$, where $A_{i,1}$ and $A_{i,2}$ are binary RVs. The joint distributions of $A_7\eqdef (A_{7,1},A_{7,2},A_{7,3})$, and $A_i\eqdef (A_{i,1},A_{i,2})$ for $i\in \set{4,5,6}$, are defined as follows:
	\begin{table}[H]	
		\parbox{0.45\linewidth}{
			\centering
			\begin{tabular}{c c c | c}	
				$A_{7,1}$ & $A_{7,2}$ & $A_{7,3}$ & $P$ \\
				\hline
				$0$ & $0$ & $0$ & $\frac{1}{2}-3y$ \\
				$0$ & $0$ & $1$ & $y$ \\
				$0$ & $1$ & $0$ & $y$ \\
				$0$ & $1$ & $1$ & $y$ \\
				$1$ & $0$ & $0$ & $y$ \\
				$1$ & $0$ & $1$ & $y$ \\
				$1$ & $1$ & $0$ & $y$ \\
				$1$ & $1$ & $1$ & $\frac{1}{2}-3y$ 
			\end{tabular}
			\caption{Joint distribution of $A_7\eqdef (A_{7,1},A_{7,2},A_{7,3})$, where $y\in (0,\frac{1}{6})$.}
			\label{tab:A7}
		}
		\hfill
		\parbox{0.45\linewidth}{
			\centering
			\begin{tabular}{c c | c}
				$A_{i,1}$ & $A_{i,2}$ & $P$ \\
				\hline
				$0$ & $0$ & $1-3x$ \\
				$0$ & $1$ & $x$ \\
				$1$ & $0$ & $x$ \\
				$1$ & $1$ & $x$
			\end{tabular}
			\caption{Joint distribution of $A_i\eqdef(A_{i,1},A_{i,2})$ for $i\in \set{4,5,6}$, where $x\in (0,\frac{1}{3})$.}
			\label{tab:A456}
		}
	\end{table}
	Finally, for $A_1,A_2,A_3$ we have that $P(A_i=1)=P(A_i=0)=\frac{1}{2}$.

	We define the RVs $A,B$, and $C$ as follows:
	\begin{align}
		A & \eqdef (A_2,A_{6,1},A_{7,1}, A_{5,1}) \label{eq:A}\\
		B & \eqdef (A_3,A_{6,2},A_{7,2}, A_{4,1}) \label{eq:B}\\
		C & \eqdef (A_1,A_{5,2},A_{7,3}, A_{4,2}) \label{eq:C}
	\end{align}
	
	The information diagram corresponding to $p(A,B,C)$ is presented in Figure~\ref{fig:InformationDiagram}.
	
	\begin{figure}
		\centering
		\includegraphics[width=0.3\textwidth]{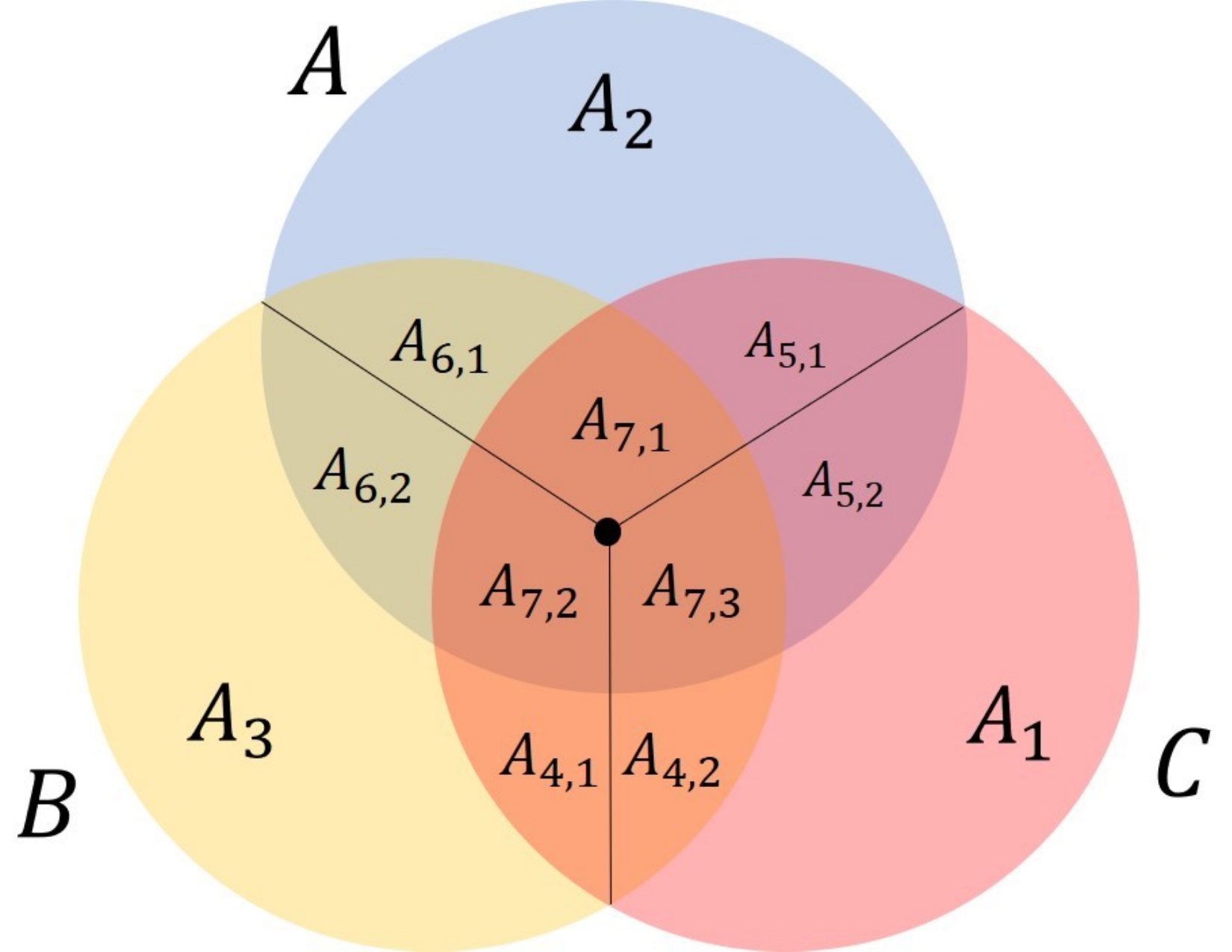}
		\caption{The information diagram for the joint probability $p(A,B,C)$ where $A,B$ and $C$ are defined in~\eqref{eq:A}-\eqref{eq:C}.}
		\label{fig:InformationDiagram}
	\end{figure}
	
	\begin{lemma}
		\label{lem:positive}
		The joint distribution $P: \dom(A)\times \dom(B)\times \dom(C) \rightarrow (0,1)$ is strictly positive.
	\end{lemma}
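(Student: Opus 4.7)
The plan is very direct. First I would observe that the random variables $A, B, C$ defined in \eqref{eq:A}--\eqref{eq:C} are nothing more than a repackaging of the mutually independent random variables $A_1, A_2, \ldots, A_7$: every coordinate of $A$, $B$, and $C$ is one of the sub-variables $A_{i,j}$ (or a $A_k$ for $k\in\{1,2,3\}$), and each sub-variable appears exactly once across the triple $(A,B,C)$. Consequently, specifying an outcome $(a,b,c)\in \mathrm{dom}(A)\times \mathrm{dom}(B)\times\mathrm{dom}(C)$ is the same as specifying an outcome for each $A_1,\dots,A_7$, and by mutual independence
\[
P(A=a,B=b,C=c)=\prod_{i=1}^{7} P(A_i=a_i),
\]
where each $a_i$ denotes the values that the outcome $(a,b,c)$ assigns to the coordinates of $A_i$.

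The argument then reduces to checking that each factor on the right is strictly positive for every possible value. The variables $A_1,A_2,A_3$ are uniform on $\{0,1\}$, so each of their marginal atoms has probability $\tfrac{1}{2}>0$. For $A_i=(A_{i,1},A_{i,2})$ with $i\in\{4,5,6\}$, the four joint atoms in Table~\ref{tab:A456} are $1-3x,x,x,x$, all strictly positive because $x\in(0,\tfrac{1}{3})$. For $A_7=(A_{7,1},A_{7,2},A_{7,3})$, the eight joint atoms in Table~\ref{tab:A7} are $\tfrac{1}{2}-3y$ and six copies of $y$ (and again $\tfrac{1}{2}-3y$), all strictly positive because $y\in(0,\tfrac{1}{6})$.

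Since each factor in the product is strictly positive on its entire domain, the product is strictly positive for every $(a,b,c)$, which proves that $P$ takes values in $(0,1)$ on all of $\mathrm{dom}(A)\times \mathrm{dom}(B)\times \mathrm{dom}(C)$. There is really no obstacle here; the only thing one must be careful about is verifying that the decomposition of $(A,B,C)$ into the $A_i$'s is a bijection of outcome spaces (so that no atom of the joint is inadvertently a sum of cross-terms), but this is immediate from \eqref{eq:A}--\eqref{eq:C}.
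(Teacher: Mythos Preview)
Your proposal is correct and follows essentially the same approach as the paper: observe that an outcome of $(A,B,C)$ corresponds bijectively to an outcome of $(A_1,\dots,A_7)$, factor using mutual independence, and note that each factor is strictly positive by the choice of $x\in(0,\tfrac13)$ and $y\in(0,\tfrac16)$. Your version is slightly more explicit in spelling out the individual atom probabilities, but the argument is the same.
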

	\begin{proof}
		Take any assignment $d \in\dom(A)\times \dom(B)\times \dom(C)$. Since $A$, $B$, and $C$ have no common RVs, then $d$ uniquely maps to an assignment to the binary RVs $A_1$,$A_2$,$A_3$,$A_{4,1}$,$A_{4,2}$,$A_{5,1}$, etc.\eat{$A_{5,2}$,$A_{6,1}$,$A_{6,2}$,$A_{7,1}$,$A_{7,2}$, and $A_{7,3}$.} In particular, $d$ maps to a unique assignment to $A_1,A_2,A_3,A_{4},A_5,A_6$, and $A_7$. For all $i\in \set{1,\dots,7}$, denote by $d_i$ the assignment to $A_i$ induced by $d$. 
		By definition, these RVs are mutually independent. Hence:
		\begin{align*}
			P(d)&=P(A_1=d_1,A_2=d_2,A_3=d_3,A_4=d_4,A_5=d_5,A_6=d_6,A_7=d_7)\\
			&=P(A_1=d_1) \cdots  P(A_7=d_7)
		\end{align*}
		By definition, for every $A_i$ it holds that $p(A_i)$ is strictly positive (see Tables~\ref{tab:A7} and~\ref{tab:A456}). This proves the claim.
	\end{proof}
	
	We define the following functions where $x\in (0,\frac{1}{3})$ and $y\in (0,\frac{1}{6})$:
	\begin{align}
		\delta_1(x)&\eqdef -\left((1-3x)\log(1-3x)+3x\log x\right)&& \text{ where }x\in (0,\frac{1}{3}) \label{eq:delta1}\\
		\delta_2(x)& \eqdef -\left((1-2x)\log(1-2x)+2x\log 2x\right)&& \text{ where }x\in (0,\frac{1}{3}) \label{eq:delta2}\\
		f_1(y)& \eqdef -\left(2(\frac{1}{2}-3y)\log(\frac{1}{2}-3y)+6y\log y \right)&& \text{ where }y\in (0,\frac{1}{6}) \label{eq:f1}\\
		f_2(y)& \eqdef -\left(2(\frac{1}{2}-2y)\log(\frac{1}{2}-2y)+4y\log 2y \right)&& \text{ where }y\in (0,\frac{1}{6}) \label{eq:f2} 
	\end{align}
	
	\begin{lemma}
		\label{lem:limits}
		It holds that:
		\begin{align}
			& \lim_{x\rightarrow 0}\delta_1(x)=\lim_{x\rightarrow 0}\delta_2(x)=0\\
			& \lim_{y\rightarrow 0}f_1(y)=\lim_{y\rightarrow 0}f_2(x)=1
		\end{align}
	\end{lemma}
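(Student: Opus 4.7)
The plan is to compute each of the four limits directly, relying on the standard fact that $\lim_{t\to 0^+} t\log t = 0$ (for any fixed logarithm base, and in particular for base $2$, which is the convention in the paper). This limit can be justified either by L'H\^opital's rule applied to $\frac{\log t}{1/t}$, or by the substitution $t = 2^{-s}$ which turns $t\log t$ into $-s\cdot 2^{-s} \to 0$ as $s \to \infty$. All four statements will then follow by identifying which terms tend to $0$ versus which tend to a finite nonzero constant.

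For $\delta_1(x) = -\bigl((1-3x)\log(1-3x) + 3x\log x\bigr)$, as $x \to 0^+$ the first term satisfies $(1-3x)\log(1-3x) \to 1 \cdot \log 1 = 0$ by continuity, and the second term $3x\log x = 3(x\log x) \to 0$ by the standard limit above. Hence $\delta_1(x) \to 0$. For $\delta_2(x) = -\bigl((1-2x)\log(1-2x) + 2x\log 2x\bigr)$, the same reasoning applies after writing $2x\log 2x = 2x\log 2 + 2x\log x$, both of whose summands tend to $0$.

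For $f_1(y) = -\bigl(2(\tfrac{1}{2}-3y)\log(\tfrac{1}{2}-3y) + 6y\log y\bigr)$, as $y \to 0^+$ the first term tends to $2 \cdot \tfrac{1}{2} \cdot \log \tfrac{1}{2} = -1$ by continuity, while $6y\log y \to 0$ as before; negating gives $f_1(y) \to 1$. For $f_2(y)$, analogously $2(\tfrac{1}{2}-2y)\log(\tfrac{1}{2}-2y) \to \log \tfrac{1}{2} = -1$, and $4y\log 2y = 4y\log 2 + 4y\log y \to 0$, so $f_2(y) \to 1$.

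There is no real obstacle here: the only nontrivial ingredient is $\lim_{t\to 0^+} t\log t = 0$, and once that is invoked, every remaining evaluation is a continuity argument at an interior point of the domain of the logarithm. I would simply state the standard limit up front, then handle the four cases in the two-line fashion above.
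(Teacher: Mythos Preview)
Your proposal is correct and follows essentially the same approach as the paper: both invoke $\lim_{t\to 0^+} t\log t = 0$ (the paper cites L'H\^opital's rule, as you do) and then evaluate each of the four limits termwise by continuity. Your treatment is slightly more explicit in decomposing $2x\log 2x$ and $4y\log 2y$, but the argument is the same.
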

	\begin{proof}
		Both $x$ and $\log x$ are differentiable on $(0,\infty)$. Therefore, we can apply LHospital's rule to get that $\lim_{x\rightarrow 0}x\log x=0$, and that $\lim_{y\rightarrow 0}y\log y=0$. Therefore:
		\begin{align*}
			\lim_{x\rightarrow 0}\delta_1(x)&=-\left(\lim_{x\rightarrow 0}((1-3x)\log(1-3x)+\lim_{x\rightarrow 0}3x\log x)\right)=-\left(1\log 1+0\right)=0\\
			\lim_{x\rightarrow 0}\delta_2(x)&=-\left(\lim_{x\rightarrow 0}((1-2x)\log(1-2x)+\lim_{x\rightarrow 0}2x\log 2x)\right)=-\left(1\log 1+0\right)=0\\
			\lim_{y\rightarrow 0}f_1(y)&=-\left(\lim_{y\rightarrow 0}(2(\frac{1}{2}-3y)\log(\frac{1}{2}-3y)+\lim_{y\rightarrow 0}6y\log y)\right)=-\left(\log\frac{1}{2}+0\right)=1\\
			\lim_{y\rightarrow 0}f_2(y)&=-\left(\lim_{y\rightarrow 0}(2(\frac{1}{2}-2y)\log(\frac{1}{2}-2y)+\lim_{y\rightarrow 0}4y\log 2y)\right)=-\left(\log\frac{1}{2}+0\right)=1
		\end{align*}
	\end{proof}
\def\technicalLemmaEntropy{
			The following holds for any $x\in (0,\frac{1}{3})$, and $y\in (0,\frac{1}{6})$.  
	\begin{enumerate}
		\setlength{\itemsep}{0.1em}
		\item $H(A_i)=1$ for $i\in \set{1,2,3}$.
		\item $H(A_i)=H(A_{i,1},A_{i,2})=\delta_1(x)$ for $i\in \set{4,5,6}$.
		\item $H(A_{i,1})=H(A_{i,2})=\delta_2(x)$ for $i\in \set{4,5,6}$.	
		\item $H(A_{7,j})=1$ for $j\in \set{1,2,3}$.
		\item $H(A_{7,j},A_{7,k})=f_2(y)$ for $j\neq k$ and $j,k\in \set{1,2,3}$.
		\item $H(A_{7})=H(A_{7,1},A_{7,2},A_{7,3})=f_1(y)$.
	\end{enumerate}
	where $\delta_1,\delta_2,f_1$, and $f_2$ are defined in~\eqref{eq:delta1}-\eqref{eq:f2}.
}
	\begin{lemma}
		\label{lem:technicalLemmaEntropy}
		\technicalLemmaEntropy
	\end{lemma}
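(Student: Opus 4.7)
The plan is to verify each of the six identities by direct computation from the entropy formula $H(X)=-\sum_x p(x)\log p(x)$ applied to the joint distributions specified in Tables~\ref{tab:A7} and~\ref{tab:A456}, together with the convention that $A_1,A_2,A_3$ are uniform binary variables. Items (1)--(2) and (6) are plainly restatements of the marginal entropies of a single table, so I would dispatch them first: for (1), $H(A_i)=-2\cdot\frac12\log\frac12=1$; for (6), reading off Table~\ref{tab:A7} directly gives
\[
H(A_7)=-\bigl(2(\tfrac12-3y)\log(\tfrac12-3y)+6y\log y\bigr)=f_1(y);
\]
for (2), reading off Table~\ref{tab:A456} gives $H(A_i)=-((1-3x)\log(1-3x)+3x\log x)=\delta_1(x)$.

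The remaining items (3), (4), (5) require marginalising and then applying the entropy formula. For (3) I would sum the rows of Table~\ref{tab:A456}: $P(A_{i,1}=0)=(1-3x)+x=1-2x$ and $P(A_{i,1}=1)=2x$, giving $H(A_{i,1})=-((1-2x)\log(1-2x)+2x\log 2x)=\delta_2(x)$, with the identical computation for $A_{i,2}$ by symmetry of the table under swapping the two coordinates (both marginals assign $x+x$ to outcome $1$). For (4), summing the four rows of Table~\ref{tab:A7} with $A_{7,j}=0$ (any fixed $j\in\{1,2,3\}$) yields $(\tfrac12-3y)+y+y+y=\tfrac12$, so the marginal is uniform and $H(A_{7,j})=1$. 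For (5), the pairwise joint is obtained by summing out the remaining coordinate: a short case analysis for, say, $(A_{7,1},A_{7,2})$ gives $P(0,0)=(\tfrac12-3y)+y=\tfrac12-2y$, $P(0,1)=y+y=2y$, $P(1,0)=2y$, and $P(1,1)=\tfrac12-2y$, whence
\[
H(A_{7,j},A_{7,k})=-\bigl(2(\tfrac12-2y)\log(\tfrac12-2y)+4y\log 2y\bigr)=f_2(y).
\]
The symmetry of the Table~\ref{tab:A7} distribution under permutations of the three coordinates (which can be read off directly: every row and its bitwise complement have the same probability, and the construction is invariant under coordinate relabeling) ensures the identity holds for all three pairs.

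There is no real obstacle here; the proof is bookkeeping. The only mild care needed is to check that the table entries actually sum to $1$ (for Table~\ref{tab:A7}: $2(\tfrac12-3y)+6y=1$; for Table~\ref{tab:A456}: $(1-3x)+3x=1$), ensuring the distributions are well-defined, and to verify the claimed symmetry of Table~\ref{tab:A7} under coordinate permutations before invoking it in item (5). Once these points are noted, items (1)--(6) follow from substitution into the entropy formula and the definitions of $\delta_1,\delta_2,f_1,f_2$ in~\eqref{eq:delta1}--\eqref{eq:f2}.
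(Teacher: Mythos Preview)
Your proposal is correct and follows essentially the same approach as the paper's own proof: both proceed by direct computation of the marginal and joint probabilities from Tables~\ref{tab:A7} and~\ref{tab:A456} and then substitute into the entropy formula, matching against the definitions~\eqref{eq:delta1}--\eqref{eq:f2}. The only difference is that you add explicit sanity checks (normalisation of the tables, the permutation symmetry of Table~\ref{tab:A7}) that the paper leaves implicit, but the computations themselves are identical.
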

\begin{proofOverview}
	The proof follows from the definition of the RVs $A_1,A_2,\dots,A_7$ in Tables~\ref{tab:A7}, and~\ref{tab:A456}. The technical details are deferred to Section~\ref{sec:IntersectionTechnicalDetailes} in the Appendix.
\end{proofOverview}
\eat{
	\begin{proof}
		The proof is by definition, and we provide the technical details.
		For $i\in \set{1,2,3}$:
		\[
		H(A_i)=2\cdot\frac{1}{2}\log 2= 1
		\]
		By the definition of $A_4$ in Table~\ref{tab:A456}, we have that:
		\[
		H(A_4)=-\left((1-3x)\log(1-3x)+3x\log x\right)=\delta_1(x).
		\]
		Proof is the same for $H(A_5)$ and $H(A_6)$.
		
		We now compute $H(A_{4,1})$. From Table~\ref{tab:A456}, we have that $P(A_{4,1}=0)=1-3x+x=1-2x$, and $P(A_{4,1}=1)=2x$. Therefore,
		\[
		H(A_{4,1})=-\left((1-2x)\log(1-2x)+2x\log 2x\right)=\delta_2(x).
		\]
		For symmetry reasons, the same holds for $H(A_{4,2})$ and $H(A_{i,j})$ for $i\in \set{5,6}$ and $j\in \set{1,2}$.
		
		From Table~\ref{tab:A7}, we have that for all $i\in \set{1,2,3}$:
		\[
		P(A_{7,i}=0)=P(A_{7,i}=1)=\frac{1}{2}
		\]
		Therefore, $H(A_{7,i})=1$ for all $i\in \set{1,2,3}$.
		
		Now, take any $i,j\in \set{1,2,3}$ where $i<j$. Then, from Table~\ref{tab:A7}, we have that:
		\begin{equation*}
			P(A_{7,i}=a,A_{7,j}=b)=
			\begin{cases}
				\frac{1}{2}-2y & \text{if }a=b \\
				2y & \text{ otherwise }
			\end{cases}
		\end{equation*}
		Therefore, we have that:
		\[
		H(A_{7,i},A_{7,j})=-\left(2(\frac{1}{2}-2y)\log(\frac{1}{2}-2y)+2\cdot 2y\log 2y \right)=f_2(y)
		\]
		Finally, by Table~\ref{tab:A7}, we have that:
		\[
		H(A_7)=H(A_{7,1},A_{7,2},A_{7,3})=-\left(2(\frac{1}{2}-3y)\log(\frac{1}{2}-3y)+ 6y\log y\right)=f_1(y)
		\]
	\end{proof}
}

	\def\mainTechnicalLemmaIntersection{
			The following holds:
		\begin{enumerate}
			\setlength{\itemsep}{0.1em}
			\item $H(A)=H(B)=H(C)=2+2\delta_2(x)$
			\item $H(AC)=H(AB)=H(BC)=2+ \delta_1(x)+2\delta_2(x)+f_2(y)$
			\item $H(ABC)=3+3\delta_1(x)+f_1(y)$
		\end{enumerate}
	}
	\begin{lemma}
		\label{lem:mainTechnicalLemmaIntersection}
		\mainTechnicalLemmaIntersection
	\end{lemma}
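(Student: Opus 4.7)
The plan is to exploit the mutual independence of the random variables $A_1,\dots,A_7$: the entropy of any subset of their components decomposes as a sum of entropies over the independent blocks, so each of the six joint entropies reduces to summing values already computed in Lemma~\ref{lem:technicalLemmaEntropy}. The only real work is careful bookkeeping of which components of which $A_i$ appear in each of $A$, $B$, $C$, $AB$, $AC$, $BC$, $ABC$.

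First I would handle $H(A)$. Inspecting~\eqref{eq:A}, $A=(A_2,A_{6,1},A_{7,1},A_{5,1})$ takes one component from each of the mutually independent variables $A_2,A_5,A_6,A_7$. Thus $H(A)=H(A_2)+H(A_{5,1})+H(A_{6,1})+H(A_{7,1})=1+\delta_2(x)+\delta_2(x)+1=2+2\delta_2(x)$ by Lemma~\ref{lem:technicalLemmaEntropy}. The values $H(B)$ and $H(C)$ follow immediately by the symmetric roles of $A,B,C$ in definitions~\eqref{eq:A}--\eqref{eq:C}.

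For the two-way joints I would observe the following recurring pattern. In $AB$ the components $A_{6,1}$ (from $A$) and $A_{6,2}$ (from $B$) together reconstitute the full $A_6$, while from $A_7$ we get only the pair $(A_{7,1},A_{7,2})$, and from $A_4,A_5$ only a single component each. By mutual independence,
\begin{equation*}
H(AB)=H(A_2)+H(A_3)+H(A_6)+H(A_{7,1},A_{7,2})+H(A_{5,1})+H(A_{4,1}),
\end{equation*}
which, after substituting $1,1,\delta_1(x),f_2(y),\delta_2(x),\delta_2(x)$ from Lemma~\ref{lem:technicalLemmaEntropy}, yields $2+\delta_1(x)+2\delta_2(x)+f_2(y)$. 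The joints $AC$ and $BC$ exhibit the same structure: one of $A_4,A_5,A_6$ is assembled in full (contributing $\delta_1(x)$), one pair of $A_7$-components is assembled (contributing $f_2(y)$), and the remaining $A_i$ contribute either a full $1$ (for $A_1,A_2,A_3$) or a single $\delta_2(x)$ (for the leftover singletons).

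Finally, for $H(ABC)$ the union on the right-hand sides of~\eqref{eq:A}--\eqref{eq:C} contains all of $A_1,A_2,A_3$, both components of each of $A_4,A_5,A_6$, and all three components of $A_7$. Hence $ABC=(A_1,A_2,A_3,A_4,A_5,A_6,A_7)$ in full, and mutual independence gives $H(ABC)=\sum_{i=1}^{7}H(A_i)=3\cdot 1+3\delta_1(x)+f_1(y)$. The main (and only) pitfall is the indexing: it is easy to miscount which component of $A_5$ or $A_7$ falls into which joint, so I would lay out the block decomposition for each of the six target quantities in a small table before substituting numerical values from Lemma~\ref{lem:technicalLemmaEntropy}.
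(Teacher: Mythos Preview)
Your proposal is correct and relies on the same underlying idea as the paper: the mutual independence of $A_1,\dots,A_7$ lets every entropy decompose as a sum over the independent blocks, after which Lemma~\ref{lem:technicalLemmaEntropy} supplies each summand. The only difference is organizational: the paper computes the pairwise and triple joints indirectly, via $H(AC)=H(A\mid C)+H(C)$ and $H(ABC)=H(A\mid BC)+H(BC)$, expanding the conditional entropies through the chain rule before invoking independence; you instead decompose $H(AB)$, $H(AC)$, $H(BC)$, and $H(ABC)$ directly into their independent blocks. Your route is slightly more streamlined and avoids the intermediate conditional-entropy bookkeeping, but both arguments are the same in substance.
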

\begin{proofOverview}
	The proof follows from the definition of the RVs $A_1,A_2,\dots,A_7$ (see tables~\ref{tab:A7}, and~\ref{tab:A456}), the fact that they are mutually independent, the definition of RVs $A$, $B$, and $C$ (see~\eqref{eq:A}-\eqref{eq:C}), and the application of Lemma~\ref{lem:technicalLemmaEntropy}.  The technical details are deferred to Section~\ref{sec:IntersectionTechnicalDetailes} in the Appendix.
\end{proofOverview}
\eat{
	\begin{proof}
		\begin{align*}
			H(A)&\underbrace{=}_{\eqref{eq:A}}H(A_2,A_{6,1},A_{7,1},A_{5,1})\\
			&\underbrace{=}_{\substack{A_1,\dots,A_7 \text{ are} \\ \text{mutually-independent}}} H(A_2)+H(A_{6,1})+H(A_{7,1})+H(A_{5,1}) \\
			&\underbrace{=}_{\text{Lemma }\ref{lem:technicalLemmaEntropy}} 1+\delta_2(x)+1+\delta_2(x) \\
			&=2+2\delta_2(x)
		\end{align*}
		By symmetry, we have that $H(B)=H(C)=2+2\delta_2(x)$ as well.
		
		We now compute $H(A|C)$.
		\begin{align*}
			H(A|C)&\underbrace{=}_{\eqref{eq:A},\eqref{eq:C}}H(A_2,A_{6,1},A_{7,1},A_{5,1}|A_1,A_{5,2},A_{7,3},A_{4,2})\\
			&\underbrace{=}_{\substack{\text{chain rule} \\ \text{for entropy}}} H(A_{5,1},A_{7,1}|A_1,A_{5,2},A_{7,3},A_{4,2})+H(A_2,A_{6,1}|A_1,A_{5,2},A_{7,3},A_{4,2},A_{5,1},A_{7,1})\\
			&\underbrace{=}_{\text{independence}}H(A_{5,1},A_{5,2},A_{7,1},A_{7,3},A_1,A_{4,2})-H(A_1,A_{5,2},A_{7,3},A_{4,2})+H(A_2,A_{6,1}) \\
			&=H(A_{5,1},A_{5,2})+H(A_{7,1},A_{7,3})+H(A_1)+ H(A_{4,2})\\
			&~~~~~-H(A_1)-H(A_{5,2})-H(A_{7,3})-H(A_{4,2})+H(A_2)+H(A_{6,1})\\
			&=H(A_{5,1},A_{5,2})+H(A_{7,1},A_{7,3})-H(A_{5,2})-H(A_{7,3})+H(A_2)+H(A_{6,1})\\
			&\underbrace{=}_{\text{Lemma }~\ref{lem:technicalLemmaEntropy}} \delta_1(x)+f_2(y)-\delta_2(x)-1+1+\delta_2(x)\\
			&=\delta_1(x)+f_2(y)
		\end{align*}
		Now, since $H(AC)=H(A|C)+H(C)$, then by the above, and the fact that $H(C)=2+2\delta_2(x)$, we get that:
		\[
		H(AC)=H(A|C)+H(C)=\delta_1(x)+f_2(y)+2+2\delta_2(x)
		\]
		as required.

		Finally, we compute $H(A|BC)$.
		\begin{align*}
			H(A|BC)&\underbrace{=}_{\eqref{eq:A},\eqref{eq:B},\eqref{eq:C}} H(A_2,A_{6,1},A_{7,1},A_{5,1}|A_1,A_{5,2},A_{7,3},A_{4,2},A_3,A_{6,2},A_{7,2},A_{4,1})\\
			&=H(A_1, A_2, A_3,(A_{4,1},A_{4,2}),(A_{6,1},A_{6,2}), (A_{7,1},A_{7,2},A_{7,3}),(A_{5,1},A_{5,2}))\\
			&~~~~~-H(A_1,A_3,(A_{4,1},A_{4,2}),A_{5,2},A_{6,2},A_{7,2},A_{7,3})\\
			&= H(A_1)+H(A_2)+H(A_3)+H(A_4)+H(A_5)+H(A_6)+H(A_7)\\
			&~~~~~-H(A_1)-H(A_3)-H(A_4)-H(A_{5,2})-H(A_{6,2})-H(A_{7,2},A_{7,3})\\
			&=H(A_2)+H(A_5)+H(A_6)+H(A_7)-H(A_{5,2})-H(A_{6,2})-H(A_{7,2},A_{7,3})\\
			&\underbrace{=}_{\text{Lemma }~\ref{lem:technicalLemmaEntropy}}1+\delta_1(x)+\delta_1(x)+f_1(y)-\delta_2(x)-\delta_2(x)-f_2(y)\\
			&=1+2\delta_1(x)+f_1(y)-f_2(y)-2\delta_2(x)
		\end{align*}
		Since $H(ABC)=H(A|BC)+H(BC)$, we get that:
		\begin{align*}
			H(ABC)&=1+2\delta_1(x)+f_1(y)-f_2(y)-2\delta_2(x)+(\delta_1(x)+f_2(y)+2+2\delta_2(x))\\
			&=3+3\delta_1(x)+f_1(y)
		\end{align*}
	\end{proof}
}
	An immediate consequence from Lemma~\ref{lem:mainTechnicalLemmaIntersection} is that:
	\begin{align}
		I(A;B|C)&=H(AC)+H(BC)-H(C)-H(ABC)\\
		&=2(\delta_1(x)+f_2(y)+2+2\delta_2(x))-(2+2\delta_2(x))-(3+3\delta_1(x)+f_1(y))\nonumber \\
		&=2\delta_2(x)-\delta_1(x)+2f_2(y)-f_1(y)-1 \label{eq:IABgC}
	\end{align}
	By symmetry, $I(A;C|B)=I(B;C|A)=2\delta_2(x)-\delta_1(x)+2f_2(y)-f_1(y)-1$ as well.
	And, that:
	\begin{align}
		I(A;B)&=H(A)+H(B)-H(AB) \nonumber \\
		&=2(2+2\delta_2(x))-(2+\delta_1(x)+2\delta_2(x)+f_2(y)) \nonumber \\
		&=4+4\delta_2(x)-2-\delta_1(x)-2\delta_2(x)-f_2(y) \nonumber \\	
		&=2\delta_2(x)-\delta_1(x)-f_2(y)+2 \label{eq:IAB}
	\end{align}
	and, by symmetry, $I(A;C)=I(B;C)=2\delta_2(x)-\delta_1(x)-f_2(y)+2$ as well.

	\begin{reptheorem}{\ref{thm:intersection}}
		\intersectionAxiomDoesNotRelax
	\end{reptheorem}
\eat{
	\begin{theorem}[The Intersection Axiom does not relax]
		For every finite $\lambda>0$, there exists a 
		strictly positive joint probability distribution $p:\dom(A)\times \dom(B)\times \dom(C) \rightarrow (0,1)$ such that 
		\[
		\lambda(I(A;B|C)+I(A;C|B)) < I(A;B)
		\]
	\end{theorem}
}
	\begin{proof}
		Suppose otherwise, and consider the joint probability distribution $p:\dom(A)\times \dom(B)\times \dom(C) \rightarrow (0,1)$, where $A$, $B$ and $C$ are defined in~\eqref{eq:A}-\eqref{eq:C}. By Lemma~\ref{lem:positive}, $p$ is a  strictly positive distribution for all $x\in (0,\frac{1}{3})$, and $y\in (0,\frac{1}{6})$.
		Then there exists a fixed, finite $\lambda$ such that for all $x\in (0,\frac{1}{3})$, and all $y\in (0,\frac{1}{6})$ it holds that:
		\[
		\lambda(I(A;B|C)+I(A;C|B))\geq I(A;B)
		\]
		From~\eqref{eq:IABgC} and~\eqref{eq:IAB}, it means that for all $x\in (0,\frac{1}{3})$, and all $y\in (0,\frac{1}{6})$ it holds that:
		\begin{equation}
			\label{eq:thmProof1}
			2\lambda(2\delta_2(x)-\delta_1(x)+2f_2(y)-f_1(y)-1)\geq 2\delta_2(x)-\delta_1(x)-f_2(y)+2
		\end{equation}
		By the assumption,~\eqref{eq:thmProof1} holds for all $x\in (0,\frac{1}{3})$, and all $y\in (0,\frac{1}{6})$. In particular, it holds in the limits $x\rightarrow 0$ and $y\rightarrow 0$. That is:
		\begin{align}	
			\lim_{x\rightarrow 0, y\rightarrow 0}\left(2\lambda(2\delta_2(x)-\delta_1(x)+2f_2(y)-f_1(y)-1)\right)&\geq \lim_{x\rightarrow 0, y\rightarrow 0}\left(2\delta_2(x)-\delta_1(x)-f_2(y)+2\right)\nonumber \\
			2\lambda\left(2\cdot 0-0+2-1-1\right)&\geq  \left(2\cdot 0-0-1+2\right) \label{eq:thmProof3}\\
			2\lambda \cdot 0 & \geq 1 \nonumber
		\end{align}
		The transition in~\eqref{eq:thmProof3} is due to Lemma~\ref{lem:limits}.
		This brings us to a contradiction. Hence, no such finite $\lambda$ exists, and the intersection axiom does not relax.
	\end{proof}

\section{Properties of Exact Implication}
\label{sec:lemmas}
This section proves various technical lemmas that establish some general properties of exact implication in the set $\Gamma_n$ of $n$-dimensional polymatroids, and a certain subset of polymatroids called \e{positive polymatroids}, to be defined later.
The lemmas in this section will be used for proving the approximate implication guarantees presented in Section~\ref{sec:results}.
A central tool in our analysis of exact and approximate implication is the \e{I-measure theory}~\cite{DBLP:journals/tit/Yeung91,Yeung:2008:ITN:1457455}. We present the required background on the I-Measure theory in Section~\ref{sec:imeasure}. 
\eat{
In this section, we use the I-measure to characterize some general properties of exact implication in the set of positive polymatroids $\Delta_n$ (Section~\ref{sec:implicationPositivePolymatroids}), and the entire set of polymatroids $\Gamma_n$ (Section~\ref{sec:EIPolymatroids}). The lemmas in this section will be used for proving the approximate implication guarantees presented in Section~\ref{sec:results}.
}

In what follows, $\Omega\eqdef\set{X_1,\dots,X_n}$ is a set of $n$ jointly-dstributed RVs, $\Sigma$ denotes a set of triples $(A;B|C)$, and $\tau$ denotes a single triple. We denote by $\vars(\sigma)$ the set of RVs mentioned in $\sigma$ (e.g., if $\sigma=(X_1X_2;X_3|X_4)$ then $\vars(\sigma)=\set{X_1,X_2,X_3,X_4}$).

\subsection{The I-measure}
\label{sec:imeasure}
The I-measure~\cite{DBLP:journals/tit/Yeung91,Yeung:2008:ITN:1457455} is a theory which establishes a one-to-one correspondence between Shannon's information measures and set theory. 
Let $h\in \entropicPlhdrl_n$ denote a polymatroid defined over the variables $\set{X_1,\dots,X_n}$. Every variable $X_i$ is associated with a set $\iset(X_i)$, and it's complement $\isetc(X_i)$.
The universal set is $\Lambda \eqdef \bigcup_{i=1}^n\iset(X_i)$.
Let $\alpha \subseteq [n]$. We denote by $X_\alpha\eqdef\set{X_j \mid j \in \alpha}$. For the variable-set $X_\alpha$, we define:
\begin{align}
	\iset(X_\alpha)\eqdef \bigcup_{i\in \alpha}\iset(X_i) &&  \isetc(X_\alpha)\eqdef \bigcap_{i\in \alpha}\isetc(X_i)
\end{align}

\begin{citeddefnJAIR}{\cite{DBLP:journals/tit/Yeung91,Yeung:2008:ITN:1457455}} \label{thm:YeungUniqueness}\label{def:field}
	The field $\mathcal{F}_n$ generated by sets $\iset(X_1),\dots,\iset(X_n)$ is the collection of sets which can be obtained by any sequence of usual set operations (union, intersection, complement, and difference) on $\iset(X_1),\dots,\iset(X_n)$.
\end{citeddefnJAIR}

The \e{atoms} of $\mathcal{F}_n$ are sets of the form $\bigcap_{i=1}^nY_i$, where $Y_i$ is either $\iset(X_i)$ or $\isetc(X_i)$. We denote by $\mathcal{A}$ the atoms of $\mathcal{F}_n$.
We consider only atoms in which at least one set appears in positive form (i.e., the atom $\bigcap_{i=1}^n\isetc(X_i)\eqdef \emptyset$ is empty).
There are $2^n-1$ non-empty atoms and $2^{2^n-1}$ sets in $\mathcal{F}_n$ expressed as the union of its atoms. 
A function $\measure:\mathcal{F}_n \rightarrow \real$ is \e{set additive} if, for every pair of disjoint sets $A$ and $B$, it holds that $\measure(A\cup B)=\measure(A)\vplus\measure(B)$.
A real function $\measure$ defined on $\mathcal{F}_n$ is called a \e{signed measure} if it is set additive, and $\measure(\emptyset)=0$.

The $I$-measure $\imeasure$ on $\mathcal{F}_n$ is defined by $\imeasure(m(X_\alpha))\eqdef H(X_\alpha)$ for all nonempty subsets $\alpha \subseteq \set{1,\dots,n}$, where $H$ is the entropy~\eqref{eq:entropy}. 
Table~\ref{tab:ImeasureSummary} summarizes the extension of this definition to the rest of the Shannon measures.
\begin{table}[]
	\centering
	\small
	\begin{tabular}{|c|c|}	
		\hline
		Information & \multirow{2}{*}{$\imeasure$} \\
		Measures &  \\ \hline
		$H(X)$	& $\imeasure(\iset(X))$ \\ \hline
		$H(XY)$	& $\imeasure\left(\iset(X)\cup\iset(Y)\right)$ \\ \hline
		$H(X|Y)$	& $\imeasure\left(\iset(X)\cap \isetc(Y)\right)$ \\ \hline
		$I_H(X;Y)$ & $\imeasure\left(\iset(X)\cap \iset(Y)\right)$  \\ \hline
		$I_H(X;Y|Z)$	&  $\imeasure\left(\iset(X)\cap \iset(Y) \cap \isetc(Z)\right)$ \\ \hline
	\end{tabular}
	\vspace{0.2cm}
	\caption{Information measures and associated I-measure}
	\label{tab:ImeasureSummary}
\end{table}
Yeung's I-measure Theorem establishes the one-to-one correspondence between Shannon's information measures and $\imeasure$.
\begin{citedtheoremJAIR}{\cite{DBLP:journals/tit/Yeung91,Yeung:2008:ITN:1457455}} \label{thm:YeungUniqueness}\e{[I-Measure Theorem]}
	$\imeasure$ is the unique signed measure on $\mathcal{F}_n$ which is consistent with all Shannon's information measures (i.e., entropies, conditional entropies, and mutual information). 	
\end{citedtheoremJAIR}

Let $\sigma=(X;Y|Z)$. We denote by $\iset(\sigma)\eqdef\iset(X)\cap\iset(Y)\cap\isetc(Z)$ the set associated with $\sigma$ (see Table~\ref{tab:ImeasureSummary}). For a set of triples $\Sigma$, we define:
\begin{equation}
	\label{eq:SigmaSet}
	\iset(\Sigma)\eqdef\bigcup_{\sigma \in \Sigma}\iset(\sigma)
\end{equation}
\begin{example}
	Let $A$, $B$, and $C$ be three disjoint sets of RVs defined as follows: $A{=}A_1A_2A_3$, $B{=}B_1B_2$ and $C{=}C_1C_2$. Then, by Theorem~\ref{thm:YeungUniqueness}: $H(A){=}\mu^*(\iset(A)){=}\mu^*(\iset(A_1){\cup}\iset(A_2){\cup}\iset(A_3))$, $H(B){=}\mu^*(\iset(B)){=}\mu^*(\iset(B_1){\cup}\iset(B_2))$, and $\mu^*(\isetc(C)){=}\mu^*(\isetc(C_1){\cap} \isetc(C_2))$. By Table~\ref{tab:ImeasureSummary}, $I(A;B|C){=}\mu^*(\iset(A)\cap \iset(B)\cap \isetc(C))$.
\end{example}
We denote by $\Delta_n$ the set of signed measures $\mu^*:\mathcal{F}_n \rightarrow \real_{\geq 0}$ that assign non-negative values to the atoms $\mathcal{F}_n$. We call these \e{positive I-measures}.
\begin{citedtheoremJAIR}{\cite{Yeung:2008:ITN:1457455}}\label{thm:YeungBuildMeasure}
	If there is no constraint on $X_1,\dots,X_n$, then $\imeasure$ can take any set of nonnegative values on the nonempty atoms of $\mathcal{F}_n$.
\end{citedtheoremJAIR}	
Theorem~\ref{thm:YeungBuildMeasure} implies that every positive I-measure $\mu^*$ corresponds to a function that is consistent with the Shannon inequalities, and is thus a polymatroid. Hence, $\Delta_n\subset \Gamma_n$ is the set of polymatroids with a positive I-measure that we call \e{positive polymatroids}.

\subsection{Exact implication in the set of positive polymatroids}
\label{sec:implicationPositivePolymatroids}
\def\implicationInclusion{
	\eat{	Let $\Sigma$ denote a set of mutual information terms, and $\tau$ a single mutual information term. Then:
	}
	The following holds:
	\begin{align*}
		\eat{	 \text{if  } && \Gamma_n \models_{EI} \Sigma \implies \tau && \text{ then }&& \iset(\Sigma) \supseteq \iset(\tau) \\}
		&&\Delta_n \models_{EI} \Sigma \implies \tau&& \text{ if and only if }&& \iset(\Sigma) \supseteq \iset(\tau) 		
	\end{align*}
}
\begin{lemma}\label{lem:implicationInclusion}
	\implicationInclusion
\end{lemma}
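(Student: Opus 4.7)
The plan is to use the I-measure correspondence to reduce the semantic question about exact implication in $\Delta_n$ to a purely set-theoretic question about atoms of the field $\mathcal{F}_n$. By Theorem \ref{thm:YeungUniqueness}, for any polymatroid $h\in \Delta_n$ we have $h(\sigma) = \imeasure(\iset(\sigma))$ for every triple $\sigma$, and since every element of $\mathcal{F}_n$ is a finite disjoint union of atoms, $\imeasure(\iset(\sigma)) = \sum_{a \subseteq \iset(\sigma)} \imeasure(a)$ where the sum ranges over atoms $a$ of $\mathcal{F}_n$. The crucial feature of $\Delta_n$ is that $\imeasure(a) \geq 0$ for every atom $a$, so $h(\sigma) = 0$ if and only if $\imeasure(a) = 0$ for every atom $a \subseteq \iset(\sigma)$. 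Consequently $h(\Sigma) = 0$ if and only if every atom contained in $\iset(\Sigma) = \bigcup_{\sigma \in \Sigma}\iset(\sigma)$ has $\imeasure$-value zero.

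For the ($\Leftarrow$) direction I would argue directly: assume $\iset(\Sigma) \supseteq \iset(\tau)$ and pick any $h \in \Delta_n$ with $h(\Sigma) = 0$. Every atom inside $\iset(\tau)$ is also inside $\iset(\Sigma)$, hence has zero $\imeasure$-value, so summing these zeros gives $h(\tau) = \imeasure(\iset(\tau)) = 0$.

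For the ($\Rightarrow$) direction I would prove the contrapositive: suppose $\iset(\tau) \not\subseteq \iset(\Sigma)$, and pick any atom $a^*$ of $\mathcal{F}_n$ with $a^* \subseteq \iset(\tau)$ but $a^* \not\subseteq \iset(\Sigma)$. Because the atoms are pairwise disjoint and $\iset(\Sigma)$ is a union of atoms, this means $a^* \cap \iset(\Sigma) = \emptyset$. I then invoke Theorem \ref{thm:YeungBuildMeasure} to construct a positive I-measure $\imeasure$ that assigns $1$ to the atom $a^*$ and $0$ to every other non-empty atom of $\mathcal{F}_n$. This $\imeasure$ induces a polymatroid $h \in \Delta_n$; by the atom decomposition, $h(\Sigma) = \imeasure(\iset(\Sigma)) = 0$, while $h(\tau) = \imeasure(\iset(\tau)) \geq \imeasure(a^*) = 1 > 0$. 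Hence $\Delta_n \not\models_{EI} \Sigma \implies \tau$, completing the proof.

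The only step that requires care is the $(\Rightarrow)$ construction, specifically the observation that a single-atom I-measure actually realises a polymatroid in $\Delta_n$ without any hidden obstruction; but this is exactly what Theorem \ref{thm:YeungBuildMeasure} guarantees, so once the atom-level reformulation is in place the argument is essentially bookkeeping about which atoms lie in $\iset(\sigma)$ versus $\iset(\tau)$.
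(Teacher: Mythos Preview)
Your proof is correct and follows essentially the same approach as the paper: for the contrapositive of $(\Rightarrow)$ you pick an atom in $\iset(\tau)\setminus\iset(\Sigma)$ and invoke Theorem~\ref{thm:YeungBuildMeasure} to build a single-atom positive I-measure witnessing failure of the implication, and for $(\Leftarrow)$ you use non-negativity on atoms to push $h(\Sigma)=0$ down to $h(\tau)=0$. The only cosmetic point is that $h(\Sigma)=\sum_{\sigma\in\Sigma}\imeasure(\iset(\sigma))$ rather than $\imeasure(\iset(\Sigma))$, but in your construction both are zero so the argument goes through unchanged.
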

\begin{proof}
 Suppose that $\iset(\impliedCI) \not\subseteq \iset(\Sigma)$, and let $b\in \mathcal{F}_n$ be an atom such that $b\in \iset(\impliedCI){\setminus}\iset(\Sigma)$. By Theorem~\ref{thm:YeungBuildMeasure}, there exists a positive polymatroid in $\Delta_n$ with an $I$-measure $\imeasure$ that takes the following non-negative values on its atoms: $\imeasure(b)=1$, and $\imeasure(a)=0$ for any atom $a\in \mathcal{F}_n$ where $a\neq b$.
	Since $b\notin \iset(\Sigma)$, then $\imeasure(\Sigma)=0$ while $\imeasure(\impliedCI)= 1$. Hence, $\Delta_n \not\models\Sigma \implies \tau$.\eat{ which contradicts the exact implication.}
	
	Now, suppose that $\iset(\Sigma) \supseteq \iset(\tau)$.
	Then for any positive $I$-measure $\imeasure: \mathcal{F}_n {\rightarrow} \real_{\geq 0}$, we have that $\imeasure(\iset(\Sigma)) \geq \imeasure(\iset(\tau))$. By Theorem~\ref{thm:YeungUniqueness}, $\mu^*$ is the unique signed measure on $\mathcal{F}_n$ that is consistent with all of Shannon's information measures. Therefore, $h(\Sigma) \geq h(\tau)$. The result follows from the non-negativity of the Shannon information measures.
\end{proof}
An immediate consequence of Lemma~\ref{lem:implicationInclusion} is that $\iset(\Sigma) \supseteq \iset(\tau)$ is a necessary condition for implication between polymatroids. 
\begin{corr}
	\label{corr:inclusionGamman}
	If $\Gamma_n \models_{EI} \Sigma \implies \tau$ then $\iset(\Sigma) \supseteq \iset(\tau)$.
\end{corr}
\begin{proof}
	If $\Gamma_n \models_{EI} \Sigma \implies \tau$ then it must hold for any subset of polymatroids, and in particular, $\Delta_n \models_{EI} \Sigma \implies \tau$. The result follows from Lemma~\ref{lem:implicationInclusion}.
\end{proof}
\eat{
	In what follows, we recall the definition of $\iset(\sigma)$ from Table~\ref{tab:ImeasureSummary}.
}
\begin{lemma}
	\label{lem:excludeSigma}
	Let $\Delta_n \models_{EI} \Sigma \implies \tau$, and let $\sigma \in \Sigma$ such that $\iset(\sigma)\cap \iset(\tau)=\emptyset$. Then $\Delta_n \models_{EI} \Sigma{\setminus}\set{\sigma} \implies \tau$.
\end{lemma}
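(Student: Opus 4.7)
The plan is to reduce the lemma to a purely set-theoretic statement about the sets $\iset(\cdot)$ in the field $\mathcal{F}_n$, and then invoke Lemma~\ref{lem:implicationInclusion} in both directions.

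First, I would apply Lemma~\ref{lem:implicationInclusion} to the hypothesis $\Delta_n \models_{EI} \Sigma \implies \tau$ to obtain the inclusion $\iset(\Sigma) \supseteq \iset(\tau)$. Unpacking the definition in~\eqref{eq:SigmaSet}, this says
\[
\iset(\tau) \subseteq \bigcup_{\sigma' \in \Sigma}\iset(\sigma') = \iset(\Sigma\setminus\set{\sigma}) \cup \iset(\sigma).
\]

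Next I would use the hypothesis $\iset(\sigma)\cap \iset(\tau)=\emptyset$. Since $\iset(\tau)$ is covered by the union $\iset(\Sigma\setminus\set{\sigma}) \cup \iset(\sigma)$, and since none of $\iset(\tau)$ lies in $\iset(\sigma)$, every element of $\iset(\tau)$ must already lie in $\iset(\Sigma\setminus\set{\sigma})$. Formally, $\iset(\tau) = \iset(\tau)\cap(\iset(\Sigma\setminus\set{\sigma})\cup \iset(\sigma)) = (\iset(\tau)\cap \iset(\Sigma\setminus\set{\sigma}))\cup \emptyset \subseteq \iset(\Sigma\setminus\set{\sigma})$.

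Finally, I would apply the other direction of Lemma~\ref{lem:implicationInclusion} to conclude $\Delta_n \models_{EI} \Sigma\setminus\set{\sigma}\implies \tau$ from the inclusion $\iset(\Sigma\setminus\set{\sigma})\supseteq \iset(\tau)$. There is no real obstacle here: the whole argument is one short set-theoretic manipulation, and the lemma is essentially a corollary of the characterization of exact implication in $\Delta_n$ via the $I$-measure sets. The only subtlety to flag is the use of both directions of Lemma~\ref{lem:implicationInclusion}, which is why the result is stated and proved specifically for the positive polymatroids $\Delta_n$ rather than for all of $\Gamma_n$ (where Corollary~\ref{corr:inclusionGamman} gives only one direction).
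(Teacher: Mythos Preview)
Your proof is correct and follows essentially the same approach as the paper's: both reduce the statement to the set-theoretic characterization of Lemma~\ref{lem:implicationInclusion} and use the hypothesis $\iset(\sigma)\cap\iset(\tau)=\emptyset$ to drop $\iset(\sigma)$ from the cover. The only cosmetic difference is that the paper argues by contradiction (assuming $\Delta_n \not\models_{EI} \Sigma\setminus\{\sigma\}\implies\tau$ and exhibiting an atom of $\iset(\tau)$ missing from $\iset(\Sigma)$), whereas you argue directly.
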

\begin{proof}
	Let $\Sigma'=\Sigma {\setminus} \set{\sigma}$, and suppose that $\Delta_n \not\models_{EI} \Sigma' \implies \tau$. By Lemma~\ref{lem:implicationInclusion}, we have that $\iset(\Sigma')\not\supseteq \iset(\tau)$. In other words, there is an atom $a\in \mathcal{F}_n$ such that $a\in \iset(\tau){\setminus} \iset(\Sigma')$.
	In particular, $a \notin \iset(\sigma) \cup \iset(\Sigma')= \iset(\Sigma)$. Hence, $\iset(\tau) \not\subseteq \iset(\Sigma)$, and by Lemma~\ref{lem:implicationInclusion} we get that $\Delta_n \not\models_{EI} \Sigma \implies \tau$, which brings us to a contradiction.
\end{proof}

\begin{corr}
	\label{corr:excludeSigma}
	Let $\Delta_n \models_{EI} \Sigma \implies \tau$ where $\tau=(A;B|C)$, and let $\sigma=(X;Y|Z)\in \Sigma$. If $A\subseteq Z$, $B\subseteq Z$, $X\subseteq C$, or $Y\subseteq C$, then $\Delta_n \models_{EI} \Sigma{\setminus}\set{\sigma} \implies \tau$.
\end{corr}
\begin{proof}
By definition, it holds that $\iset(\tau)=\iset(A)\cap \iset(B)\cap \isetc(C)=(\cup_{a\in A}\iset(a))\cap (\cup_{b\in B}\iset(b))\cap(\cap_{c\in C}\isetc(c))$, and likewise $\iset(\sigma)=(\cup_{x\in X}\iset(x))\cap (\cup_{y\in Y}\iset(y))\cap(\cap_{z\in Z}\isetc(z))$. If $A\subseteq Z$, then $\iset(\sigma)\subseteq \isetc(Z) \subseteq \isetc(A)$, while $\iset(\tau)\subseteq \iset(A)$. Therefore, $\iset(\tau)\cap \iset(\sigma)=\emptyset$. Similarly, if $B\subseteq Z$, then $\iset(\sigma)\subseteq \isetc(Z) \subseteq \isetc(B)$, while $\iset(\tau)\subseteq \iset(B)$, and hence $\iset(\tau)\cap \iset(\sigma)=\emptyset$.  Similarly, it is shown that if $X\subseteq C$ or $Y\subseteq C$, then $\iset(\tau)\cap \iset(\sigma)=\emptyset$.  The corollary then directly follows from Lemma~\ref{lem:excludeSigma}.
\end{proof}

\begin{lemma}
	\label{lem:simpleExistenceLemma}
	Let $\Delta_n \models_{EI} \Sigma \implies \tau=(A;B|C)$. There exists a CI $\sigma=(X;Y|Z)\in \Sigma$ such that $X\not\subseteq C$, $Y\not\subseteq C$, and $Z\subseteq C$. 
\end{lemma}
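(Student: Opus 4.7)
The plan is to use the I-measure framework of Section~\ref{sec:imeasure} together with Lemma~\ref{lem:implicationInclusion}, which for any $\Delta_n \models_{EI}\Sigma \implies \tau$ gives $\iset(\Sigma) \supseteq \iset(\tau)$. Since the required conclusion is an assertion about a single $\sigma \in \Sigma$, the strategy is to exhibit one specific atom $a \in \iset(\tau)$ and then read off the desired properties of whichever $\sigma$ has $a \in \iset(\sigma)$.

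Let $D \eqdef \Omega \setminus (A \cup B \cup C)$ denote the variables not mentioned in $\tau$. I would consider the atom
\[
a \eqdef \bigcap_{v \in A \cup B \cup D}\iset(v) \;\cap\; \bigcap_{v \in C}\isetc(v),
\]
in which every variable of $A \cup B \cup D$ appears in positive form and every variable of $C$ appears in negative form. Since $A$ is non-empty, at least one set appears in positive form, so $a$ is one of the $2^n-1$ non-empty atoms of $\mathcal{F}_n$. The first step is to check that $a \in \iset(\tau)$: by construction $a \subseteq \iset(v)$ for every $v \in A$, hence $a \subseteq \iset(A)$; symmetrically $a \subseteq \iset(B)$; and $a \subseteq \isetc(v)$ for every $v \in C$, hence $a \subseteq \isetc(C)$. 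Together these give $a \subseteq \iset(A) \cap \iset(B) \cap \isetc(C) = \iset(\tau)$.

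By Lemma~\ref{lem:implicationInclusion}, $\iset(\tau) \subseteq \iset(\Sigma) = \bigcup_{\sigma \in \Sigma}\iset(\sigma)$, so there exists some $\sigma = (X;Y|Z) \in \Sigma$ with $a \in \iset(\sigma) = \iset(X) \cap \iset(Y) \cap \isetc(Z)$. From $a \subseteq \iset(X) = \bigcup_{v \in X}\iset(v)$ we obtain that some $v \in X$ appears in positive form in $a$; but the positive-form variables of $a$ are exactly the elements of $A \cup B \cup D$, a set disjoint from $C$, so $X \not\subseteq C$. Symmetrically, $Y \not\subseteq C$. Finally, $a \subseteq \isetc(Z) = \bigcap_{v \in Z}\isetc(v)$ forces every $v \in Z$ to appear in negative form in $a$, i.e., $v \in C$, giving $Z \subseteq C$. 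This $\sigma$ witnesses the lemma.

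The single delicate design choice---and essentially the only place a misstep would break the argument---is placing the ``extra'' variables $D$ on the positive side of $a$. Had I instead chosen $D$ in negative form (the seemingly natural ``small'' atom of $\iset(\tau)$), the conclusion from $a \in \isetc(Z)$ would weaken to $Z \subseteq C \cup D$, which is insufficient. Once the right atom is fixed, the remainder is a direct translation between atoms and triples via Theorem~\ref{thm:YeungUniqueness}, with no calculation required.
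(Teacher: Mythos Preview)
Your proof is correct and uses essentially the same approach as the paper: both arguments hinge on the very same atom $t=\bigcap_{v\in\Omega\setminus C}\iset(v)\cap\isetc(C)$ (your $A\cup B\cup D$ is exactly $\Omega\setminus C$) and on Lemma~\ref{lem:implicationInclusion}. The only difference is organizational: the paper first prunes $\Sigma$ to $\Sigma'=\{(X;Y|Z):X\not\subseteq C,\ Y\not\subseteq C\}$ via Corollary~\ref{corr:excludeSigma} and then uses the atom to contradict $Z\not\subseteq C$, whereas you read off all three conditions $X\not\subseteq C$, $Y\not\subseteq C$, $Z\subseteq C$ directly from membership of the atom in $\iset(\sigma)$---a slightly cleaner one-shot version of the same idea.
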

\begin{proof}
Let $\Sigma'\eqdef \set{(X;Y|Z) \in \Sigma \mid X\not\subseteq C \text{ and }Y\not\subseteq C}$. By Corollary~\ref{corr:excludeSigma}, if $\Delta_n \models_{EI} \Sigma \implies \tau$, then $\Delta_n \models_{EI} \Sigma' \implies \tau$. Suppose, by way of contradiction, that for every $\sigma=(X;Y|Z)\in \Sigma'$, it holds that $Z\not\subseteq C$. Consider the atom $t\eqdef \bigcap_{a\in \Omega{\setminus}C}\iset(a)\cap \isetc(C)$. Clearly, $t\in \iset(\tau)$. Now, take any $\sigma=(X;Y|Z)\in \Sigma'$. Since $Z\not\subseteq C$, then there exists a variable $a\in Z{\setminus}C$, and $\iset(\sigma)\subseteq \isetc(Z)\subseteq \isetc(a)$. On the other hand, since $a\notin C$, then $t\in \iset(a)$, and hence $t\notin \iset(\sigma)$. Since this is the case for all $\sigma \in \Sigma'$, then $t\notin \iset(\Sigma')$, and by Lemma~\ref{lem:implicationInclusion}, that $\Delta_n \not\models \Sigma'\implies \tau$, and this brings us to a contradiction. 
\end{proof}
\subsection{Exact Implication in the set of polymatroids}
\label{sec:EIPolymatroids}
The main technical result of this section is Lemma~\ref{lem:implicationCovers}. We start with a short technical lemma.

\def\chainRuleTechnicalLemma{ Let $\sigma = (X;Y|Z)$ and
	$\tau = (A;B|C)$ be CIs such that $A\subseteq X$, $B\subseteq Y$,
	$Z \subseteq C$, and $C \subseteq XYZ$. Then,
	$\entropicPlhdrl_n \models h(\tau) \leq h(\sigma)$.  }

\begin{lemma}\label{lem:chainRuleTechnicalLemma}
	\chainRuleTechnicalLemma
\end{lemma}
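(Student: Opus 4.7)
The plan is to reduce $h(\tau) = I_h(A;B|C)$ to $h(\sigma) = I_h(X;Y|Z)$ by a sequence of standard information-theoretic steps (decomposition and the chain rule), each of which is either an equality or an inequality in the direction we want, and each of which is valid for every polymatroid in $\Gamma_n$. Since the I-measure argument of Lemma~\ref{lem:implicationInclusion} only applies to $\Delta_n$, I avoid it here and work directly from the definitions in \eqref{eq:h:cond}--\eqref{eq:ChainRuleMI}.

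First, I would use the hypothesis $C \subseteq XYZ$ together with $Z \subseteq C$ to decompose $C$ as $C = Z \cup D_X \cup D_Y$, where $D_X \eqdef (C\setminus Z) \cap X$ and $D_Y \eqdef (C\setminus Z) \setminus X$; note $D_X \subseteq X$ and $D_Y \subseteq Y$ (the latter because $C\setminus Z \subseteq X\cup Y$). Combined with the assumptions $A\subseteq X$ and $B\subseteq Y$, this gives $A D_X \subseteq X$ and $B D_Y \subseteq Y$. Applying the decomposition inequality of the semi-graphoid axioms (which is valid on all of $\Gamma_n$, being a direct consequence of submodularity), I would conclude
\[
I_h(X;Y|Z) \;\geq\; I_h(A D_X;\, B D_Y \mid Z).
\]

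Next, I would apply the chain rule \eqref{eq:ChainRuleMI} twice to peel off the nuisance variables $D_X$ and $D_Y$ from the left and right arguments, respectively, dropping the non-negative mutual-information terms at each step:
\[
I_h(A D_X;\, B D_Y \mid Z) \;=\; I_h(D_X;\, B D_Y \mid Z) + I_h(A;\, B D_Y \mid Z D_X) \;\geq\; I_h(A;\, B D_Y \mid Z D_X),
\]
\[
I_h(A;\, B D_Y \mid Z D_X) \;=\; I_h(A;\, D_Y \mid Z D_X) + I_h(A;\, B \mid Z D_X D_Y) \;\geq\; I_h(A;\, B \mid Z D_X D_Y).
\]
Finally, I would observe that $Z D_X D_Y = C$ by construction, so the last quantity equals $I_h(A;B|C) = h(\tau)$. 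Chaining the three inequalities yields $h(\sigma) \geq h(\tau)$, as required.

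I do not anticipate a significant obstacle: the proof is essentially bookkeeping on top of the chain rule and non-negativity of conditional mutual information. The only mild subtlety is correctly partitioning $C$ into $Z$, a subset of $X$, and a subset of $Y$ given that the hypothesis only asserts $C \subseteq XYZ$ (not a disjoint decomposition); once this partition is fixed, the two chain-rule expansions are forced and everything lines up.
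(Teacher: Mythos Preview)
Your proposal is correct and takes essentially the same approach as the paper: both arguments partition $C$ into $Z$ together with pieces lying in $X$ and in $Y$, then use the chain rule \eqref{eq:ChainRuleMI} and non-negativity of conditional mutual information to extract $I_h(A;B|C)$ as one summand of $I_h(X;Y|Z)$. The only cosmetic difference is that the paper performs one large chain-rule expansion of $I_h(X'C_XA;\,Y'C_YB\mid Z)$ into six non-negative pieces (one of which is $I_h(A;B|C)$), whereas you first apply decomposition to shrink $X,Y$ to $AD_X,\,BD_Y$ and then peel off $D_X$ and $D_Y$ in two further steps; the underlying information inequalities are identical.
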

\begin{proof}
	Since $Z\subseteq C \subseteq XYZ$, then $C=C_XC_YZ$, where $C_X\eqdef X\cap C$, and $C_Y\eqdef Y \cap C$. Also, denote by $X'\eqdef X{\setminus} (C_X\cup A)$, $Y'
	\eqdef Y{\setminus} (C_Y\cup B)$. So, we have that: $I(X;Y|Z)=I(X'C_XA;Y'C_YB|Z)$.
		By the chain rule (see~\eqref{eq:ChainRuleMI}), we have that:
	\begin{align*}
		I(X'C_XA;Y'C_YB|Z)=&I(C_X;C_Y|Z)+I(X'A;C_Y|ZC_X)\\
		&+I(C_X;Y'B|C_YZ)	+\boldsymbol{I(A;B|ZC_XC_Y)}\\
		&+I(A;Y'|ZC_XC_YB)+I(X';Y'B|ZC_XC_YA)
	\end{align*}
	Since $C=ZC_XC_Y$, we get that $I(A;B|C) \leq I(X;Y|Z)$ as required.
\end{proof}

\def\lessVarsLemma{
	Let $\Sigma=\set{\sigma_1,\dots,}$ be a set of triples such that $\vars(\sigma_i)\subseteq \set{X_1,\dots,X_{n-1}}$ for all $\sigma_i \in \Sigma$. Likewise, let $\tau$ be a triple such that $\vars(\tau)\subseteq \set{X_1,\dots,X_{n-1}}$. Then:
	\eat{	If $\Sigma$ and $\tau$ are defined over RVs $\set{X_1,\dots,X_{n-1}}$ then:}
	\begin{align}
		\label{eq:lessVarsLemma}
		\Gamma_n \models_{EI} \Sigma \implies \tau && \text{ if and only if } && \Gamma_{n-1} \models_{EI} \Sigma \implies \tau
	\end{align}
}
\eat{
\begin{lemma}
	\label{lem:lessVarsLemma}
	\lessVarsLemma
\end{lemma}
\begin{proof}
	Suppose that $\Gamma_n \not\models_{EI} \Sigma \implies \tau$. Then there exists a polymatroid (Section~\ref{subsec:information:theory}) $f:2^{[n]}\rightarrow \real$ such that $f(\sigma)=0$ for all $\sigma \in \Sigma$, and $f(\tau)\neq 0$.
	We define $g:2^{[n-1]}\rightarrow \real$ as follows:
	\begin{align}
		\label{eq:nminus1Proof}
		g(A)=f(A) && \text{ for all }&& A\subseteq \set{X_1,\dots,X_{n-1}}
	\end{align}
	Since $f$ is a polymatroid, then so is $g$. Further, since $\Sigma$ does not mention $X_n$ then, by~\eqref{eq:nminus1Proof}, we have that $g(\sigma)=f(\sigma)$ for all $\sigma \in \Sigma$. Hence, $\Gamma_{n-1}\not\models_{EI} \Sigma \implies \tau$.
	
	If $\Gamma_{n-1} \not\models_{EI} \Sigma \implies \tau$, then there exists a polymatroid  $g:2^{[n-1]}\rightarrow \real$ such that $g(\sigma)=0$
	for all $\sigma \in \Sigma$, and $g(\tau)\neq 0$.
	Define $f:2^{[n]}\rightarrow \real$ as follows:
	\begin{align}
		\label{eq:nminus1Proof2}
		f(A)=g(A{\setminus} \set{X_n}) && \text{ for all }&& A\subseteq \set{X_1,\dots,X_{n}}
	\end{align}
	We claim that $f\in \Gamma_n$ (i.e., $f$ is a polymatroid). 
	It then follows that $\Gamma_n\not\models \Sigma \implies \tau$ because by the assumption that $\vars(\Sigma)$ and $\vars(\tau)$ are subsets of $\set{X_1,\dots,X_{n-1}}$, then $f(\sigma)=g(\sigma)$ for all $\sigma \in \Sigma$. Hence, $f(\Sigma)=g(\Sigma)=0$ while $f(\tau)=g(\tau)\neq 0$.
	
	We now prove that $f\in \Gamma_n$.
	First, by~\eqref{eq:nminus1Proof2}, we have that $f(\emptyset)=g(\emptyset)=0$. We show that $f$ is monotonic. So let $A\subseteq B \subseteq \set{X_1,\dots,X_n}$. If $X_n\notin B$ then $X_n \notin A$ and we have that:
	\[
	f(B)-f(A)=g(B)-g(A)\underbrace{\geq}_{\substack{B\supseteq A\\ g\in \Gamma_{n-1}}} 0
	\]
	If $X_n \in B{\setminus} A$ then we let $B=B'X_n$, and we have:
	\[
	f(B'X_n)-f(A)\underbrace{=}_{\eqref{eq:nminus1Proof2}}g(B')-g(A)\underbrace{\geq}_{B'\supseteq A} 0
	\]
	Finally, if $X_n\in A\subseteq B$, then by letting $B=B'X_n$, $A=A'X_n$, we have that:
	\[
	f(B'X_n)-f(A'X_n)\underbrace{=}_{\eqref{eq:nminus1Proof2}}g(B')-g(A')\geq 0
	\]
	
	We now show that $f$ is submodular. Let $A,B \subseteq \set{X_1,\dots,X_n}$. If $X_n\notin A\cup B$ then $f(Y)=g(Y)$ for every set $Y\in\set{A,B,A\cup B,A \cap B}$. Since $g$ is submodular, then $$f(A)+f(B)=g(A)+g(B)\geq g(A\cap B)+g(A\cup B)=f(A\cap B)+f(A\cup B).$$
	If $X_n\in A{\setminus} B$ then we write $A=A'X_n$ and observe that, by~\eqref{eq:nminus1Proof2}: $f(A'X_n)=g(A')$, $f(A \cup B)=f(A'X_n \cup B)=g(A' \cup B)$, that $f(B)=g(B)$, and that $f(A \cap B)=f(A' \cap B)=g(A' \cap B)$. 
	Hence:
	$$f(A)+f(B)=f(A'X_n)+f(B)=g(A')+g(B)\geq g(A' \cup B)+g(A' \cap B)=f(A\cup B)+f(A\cap B).$$
	The case where $X_n\in B{\setminus}A$ is symmetrical.
	Finally, if $X_n\in A\cap B$ then $X_n\in Y$ for all $Y\in \set{A,B,A\cap B,A \cup B}$. Hence, for every $Y$ in this set, we write $Y=Y'X_n$. In particular, by~\eqref{eq:nminus1Proof2} we have that $f(Y)=f(Y'X_n)=g(Y')$, and the claim follows since $g\in \Gamma_n$. Specifically:
\begin{align*}
	f(A)+f(B)&=f(A'X_n)+f(B'X_n)=g(A')+g(B')\geq g(A'\cap B')+g(A'\cup B') \\
	&=f((A'\cap B') \cup \set{X_n})+f(A'\cup B' \cup \set{X_n})\\
	&=f(A'X_n\cap B'X_n)+f(A'X_n\cup B'X_n)=f(A\cap B)+f(A\cup B)
\end{align*}
\end{proof}
}

\begin{lemma}
	\label{lem:implicationCovers}
	Let $\tau=(A;B|C)$.	If $\Gamma_n \models_{EI} \Sigma \implies \tau$ then there exists a triple $\sigma=(X;Y|Z) \in \Sigma$ such that:
	\begin{enumerate}[noitemsep]
		\item 	$XYZ \supseteq ABC$, and
		\item $ABC \cap X \neq \emptyset$ and $ABC \cap Y \neq \emptyset$.
	\end{enumerate}
\end{lemma}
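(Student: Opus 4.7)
The plan is to argue by contradiction: assume every $\sigma=(X;Y|Z)\in \Sigma$ is ``bad'' in the sense that it violates (1) or (2), and construct an entropic polymatroid $h\in \Gamma_n$ with $h(\sigma)=0$ for every $\sigma\in \Sigma$ yet $h(\tau)>0$. This directly contradicts $\Gamma_n \models_{EI} \Sigma \implies \tau$ and proves the lemma. I may assume $A$ and $B$ are nonempty, otherwise $\tau$ is trivially zero and the statement is vacuous.

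Let $n\eqdef |ABC|\geq 2$ and fix an arbitrary ordering $v_1,\ldots,v_n$ of $ABC$. I will use binary random variables $\{X_v\}_{v\in \Omega}$ defined as follows: $X_{v_1},\ldots,X_{v_{n-1}}$ are mutually independent uniform bits, $X_{v_n}\eqdef X_{v_1}\oplus \cdots \oplus X_{v_{n-1}}$, and each $X_w$ for $w\in \Omega\setminus ABC$ is an independent uniform bit, independent of everything else. Let $h(T)\eqdef H(\{X_v : v \in T\})$; this is an entropic function, hence $h\in \Gamma_n$. The crucial combinatorial property of the XOR coupling is that $H(\{X_v : v\in ABC\})=n-1$, while any \emph{proper} subset of $\{X_v : v\in ABC\}$ is mutually independent and uniform.

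For any CI $\sigma=(X;Y|Z)$, let $X'\eqdef X\cap ABC$ and similarly $Y',Z'$. Because the RVs indexed by $\Omega\setminus ABC$ are jointly independent of everything else, a direct expansion of $I(X;Y|Z)=H(XZ)+H(YZ)-H(XYZ)-H(Z)$ shows that the ``outside'' contributions cancel, leaving $h(\sigma)=I(X';Y'|Z')$. Now I check the two ways a $\sigma$ can be bad: if $X\cap ABC=\emptyset$ (or symmetrically $Y\cap ABC=\emptyset$), then $X'=\emptyset$ and $h(\sigma)=I(\emptyset;Y'|Z')=0$; if instead $\vars(\sigma)\not\supseteq ABC$, then $X'Y'Z'\subsetneq ABC$, so these variables form a proper subset of $\{X_v : v\in ABC\}$, are therefore mutually independent and uniform, and again $h(\sigma)=0$. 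For $\tau$ itself, $X',Y',Z'$ are just $A,B,C$; since $B\neq\emptyset$ we have $|AC|,|BC|,|C|<n$ so those entropies equal the set sizes, while $H(ABC)=n-1$. Plugging in gives $h(\tau)=I(A;B|C)=|A|+|B|+|C|-(n-1)=1>0$.

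The main conceptual obstacle that this construction overcomes is producing a \emph{single} polymatroid that simultaneously nullifies every bad antecedent in $\Sigma$ regardless of the specific shape each $\sigma$ takes. The XOR trick accomplishes exactly this because all non-trivial dependence is concentrated on the full set $ABC$: any CI that misses a variable of $ABC$, or that places one of its two ``sides'' entirely outside $ABC$, collapses to a trivial mutual information. Once $h$ is in hand the proof is complete in one line, since $h\in \Gamma_n$, $h(\Sigma)=0$ and $h(\tau)>0$ contradict the exact implication hypothesis.
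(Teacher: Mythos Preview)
Your proof is correct and takes essentially the same approach as the paper: both construct the parity (XOR) distribution on $ABC$ with independent uniform bits outside, then verify that any ``bad'' antecedent has zero mutual information while $I_h(A;B|C)=1$. Your presentation is slightly more streamlined via the reduction $h(\sigma)=I_h(X';Y'|Z')$, whereas the paper computes each case directly, but the underlying argument is identical (and both tacitly assume $X,Y,Z$ are pairwise disjoint and $A,B$ nonempty).
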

\begin{proof}
	Let $\tau=(A;B|C)$, where $A=a_1\dots a_m$, $B=b_1\dots b_\ell$, $C=c_1\dots c_k$, and $U\eqdef \Omega{\setminus}ABC$. Following~\cite{DBLP:journals/iandc/GeigerPP91}, we construct the parity  distribution $P(\Omega)$ as follows. 
	We let all the RVs, except $a_1$, be independent binary RVs with probability $\frac{1}{2}$ for each of their two values, and let $a_1$ be determined from $ABC{\setminus }\set{a_1}$ as follows:
	\begin{equation}
		\label{eq:parity}
		a_1 = \sum_{i=2}^m a_i + \sum_{i=1}^\ell b_i + \sum_{i=1}^k c_i \pmod{2}
	\end{equation}
	Let $D \subseteq \Omega$ and $\bd \in \D(D)$. We denote by $D_{ABC}\eqdef D \cap ABC$, and by $\bd_{ABC}$ the assignment $\bd$ restricted to the RVs $D_{ABC}$.
	We show that if $D_{ABC}\subsetneq ABC$ then the RVs in $D$ are mutually independent.\eat{
		Now, let $D \subseteq \Omega$, such that $D \cap \vars(\tau) \subsetneq ABC$. We show that the RVs in $D$ are pairwise independent. Let $\bd \in \D(D)$, and we denote $D{\cap} ABC$ by $D_{ABC}$, and by $\bd_{ABC}$ the assignment $\bd$ restricted to the RVs $D_{ABC}$.}
	By the definition of $P$ we have that:
	\begin{align*}
		P(D=\bd)=\left(\frac{1}{2}\right)^{|D\cap U|}P(D_{ABC}=\bd_{ABC})
	\end{align*}
	There are two cases with respect to $D$. If $a_1\notin D$ then, by definition, $P(D_{ ABC}=\bd_{ABC})=\left(\frac{1}{2}\right)^{|D_{ABC}|}$, and overall we get that $P(D=\bd)=\left(\frac{1}{2}\right)^{|D|}$, proving that the RVs in $D$ are mutually independent.
	If $a_1 \in D$, then since $D_{ABC}\subsetneq ABC$ it holds that $P(a_1|D_{ABC}{\setminus}\set{a_1}){=}P(a_1)$. To see this, observe that:
	\begin{align*}
		P(a_1=1|D_{ABC}{\setminus}\set{a_1}) =\begin{cases}
			\frac{1}{2} & \text{if } \sum_{y\in D_{ABC}{\setminus}\set{a_1}}y \pmod{2}{=}0 \\
			\frac{1}{2} & \text{if }\sum_{y\in D_{ABC}{\setminus}\set{a_1}}y \pmod{2}{=}1 
		\end{cases}
	\end{align*}
	because if, w.l.o.g, $\sum_{y\in D_{ABC}{\setminus}\set{a_1}}y \pmod{2}=0$, then $a_1=1$ implies that $\sum_{y\in ABC{\setminus}D}y \pmod{2}=1$, and this is the case for precisely half of the assignments $ABC{\setminus}D{\rightarrow}\set{0,1}^{|ABC{\setminus}D|}$. 
	Hence, for any $D \subseteq \Omega$ such that $D \cap \vars(\tau) \subsetneq ABC$, it holds that $P(D=\bd)=\prod_{y \in D}P(y=\bd_y)=\left(\frac{1}{2}\right)^{|D|}$, and therefore the RVs in $D$ are mutually independent.
	
	By definition of entropy (see~\eqref{eq:entropy}) we have that $H(X_i)=1$ for every binary RV in $\Omega$. Since the RVs in $D$ are mutually independent, then $H(D)=\sum_{y\in D}H(y)=|D|$ (see Section~\ref{subsec:information:theory}). Furthermore, for any $(X;Y|Z)\in \Sigma$ such that $XYZ \not\supseteq ABC$ we have that:
	\begin{align*}
		I(X;Y|Z)&=H(XZ)+H(YZ)-H(Z)-H(XYZ)\\
		&=|XZ|+|YZ|-|Z|-|XYZ| \\
		&=|X|+|Y|+|Z|-|XYZ| \\
		&=0
	\end{align*}	
	On the other hand, letting $A'{\eqdef}A{\setminus}\set{a_1}$, then by chain rule for entropies (see~\eqref{eq:chainRuleH}), and noting that, by~\eqref{eq:parity}, $ABC{\setminus}a_1 \fd a_1$, then:
	\begin{align*}
		H(\vars(\tau))=H(ABC)&=H(a_1A'BC) \\
		&=H(a_1|A'BC)+H(A'BC)\\
		&=0+|ABC|-1=|ABC|-1.
	\end{align*}
	and thus
	\begin{align}
		I(A;B|C)&=H(AC)+H(BC)-H(C)-H(ABC) \nonumber \\
		&=|AC|+|BC|-|C|-(|ABC|-1) \label{eq:tau1}\\
		&=1 \nonumber
	\end{align}
	In other words, the parity distribution $P$ of~\eqref{eq:parity} has an entropic function $h_P\in \Gamma_n$, such that $h_P(\sigma)=0$ for all $\sigma \in \Sigma$ where $\vars(\sigma)\not\supseteq ABC$, while $h_P(\tau)=1$. Hence, if $\Gamma_n \models \Sigma \implies \tau$, then there must be a triple $\sigma=(X;Y|Z) \in \Sigma$ such that $XYZ \supseteq ABC$. \eat{and we arrive at a contradiction to the assumption $\Gamma_n \models_{EI} \Sigma \implies \tau$.}
	
	Now, suppose that $ABC\subseteq XYZ$ and that $ABC \cap Y=\emptyset$. In other words, $ABC\subseteq XZ$. We denote $X_{ABC}\eqdef X\cap ABC$ and $Z_{ABC}=Z\cap ABC$. Therefore, we can write $I(X;Y|Z)$ as $I(X_{ABC}X';Y|Z_{ABC}Z')$ where $X'=X{\setminus}X_{ABC}$ and $Z'=Z{\setminus}Z_{ABC}$. Since $X_{ABC}Z_{ABC}=ABC$, and due to the properties of the parity distribution, we get: 
	\begin{align*}
		&I(X_{ABC}X';Y|Z_{ABC}Z')=\\
		&H(X_{ABC}Z_{ABC}X'Z')+H(YZ_{ABC}Z')-H(Z_{ABC}Z')-H(X_{ABC}Z_{ABC}X'Z'Y)=\\
		&H(ABCX'Z')+H(YZ_{ABC}Z')-H(Z_{ABC}Z')-H(ABCX'Z'Y)=\\
		&H(ABC)+H(X'Z')+H(Y)+H(Z_{ABC}Z')-H(Z_{ABC}Z')-H(ABC)-H(X'Z'Y)=0
	\end{align*}
Symmetrically, if $ABC \subseteq YZ$, then $I(X;Y|Z)=0$.\eat{
	It is easily shown that if $ABC \subseteq X$ or $ABC \subseteq Z$ then $I(X;Y|Z)=0$.	
	Otherwise (i.e., $X_{ABC}\neq \emptyset$ and $Z_{ABC}\neq \emptyset$), then due to the properties of the parity function, we have that $H(YZ'Z_{ABC})=H(Y)+H(Z')+H(Z_{ABC})$. Noting that $X_{ABC}Z_{ABC}=ABC$, we get that $I(X_{ABC}X';Y|Z_{ABC}Z')=0$.}
	
	Overall, we showed that for all triples $(X;Y|Z)\in \Sigma$ that do not meet the conditions of the lemma, it holds that $I_{h_P}(X;Y|Z)=0$, while $I_{h_P}(A;B|C)=1$ (see~\eqref{eq:tau1}) where $h_P$ is the entropic function associated with the parity function $P$ in~\eqref{eq:parity}. Therefore, there must be a triple $\sigma\in \Sigma$ that meets the conditions of the lemma. Otherwise, we arrive at a contradiction to the assumption that $\Gamma_n \models \Sigma \implies \tau$.	
\end{proof}

\section{Approximate Implication For Saturated CIs}
\label{sec:saturatedCIs}
In this section we prove Theorem~\ref{thm:saturated}. In fact, we prove the following stronger statement.
\begin{theorem}
	\label{thm:newSaturated}
Let $\tau=(A;B|C)$, and let $\Sigma$ be a set of saturated CIs and conditionals. Then:
\begin{equation}
	\Delta_n\models_{EI} \Sigma \implies \tau \text{ if and only if } \Gamma_n \models h(\tau) \leq \min\set{|A|,|B|}\cdot h(\Sigma)
\end{equation}
\end{theorem}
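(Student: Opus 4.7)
The $(\Leftarrow)$ direction is immediate from non-negativity of conditional mutual information: if $\Gamma_n \models h(\tau) \leq \lambda \cdot h(\Sigma)$ with $\lambda = \min\{|A|, |B|\}$ and $h(\Sigma) = 0$, then $h(\tau) \leq 0$, and combined with $h(\tau) \geq 0$ this forces $h(\tau) = 0$; since $\Delta_n \subseteq \Gamma_n$, the exact implication holds in $\Delta_n$.

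For $(\Rightarrow)$, my plan is to reduce to a single elemental key lemma via the chain rule. Fix an ordering $B = \{b_1, \ldots, b_m\}$ with $m = |B|$ and let $C_j \eqdef C \cup \{b_1, \ldots, b_{j-1}\}$. By~\eqref{eq:ChainRuleMI},
\[ h(A; B | C) = \sum_{j=1}^{m} h(A; b_j | C_j). \]
Non-negativity of each summand together with the hypothesis $\Delta_n \models_{EI} \Sigma \implies (A;B|C)$ forces $\Delta_n \models_{EI} \Sigma \implies (A; b_j | C_j)$ for every $j$. The proof then reduces to the following \emph{key lemma}: whenever $\Sigma$ is a set of saturated CIs and conditionals and $\Delta_n \models_{EI} \Sigma \implies (A; b | C)$ with $b$ a single variable, we have $\Gamma_n \models h(A; b | C) \leq h(\Sigma)$. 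Summing over $j$ then yields $h(\tau) \leq |B| \cdot h(\Sigma)$, and running the symmetric argument on the $A$-side yields $h(\tau) \leq |A| \cdot h(\Sigma)$; the minimum of these two bounds is the theorem.

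To prove the key lemma I will induct on $|A|$. The base case $|A| = 0$ is trivial. For the inductive step, Lemma~\ref{lem:simpleExistenceLemma} yields $\sigma = (X; Y | Z) \in \Sigma$ with $Z \subseteq C$, $X \not\subseteq C$, and $Y \not\subseteq C$; saturation forces $XYZ = \Omega$. Without loss of generality $b \in Y$. Since $A \cap C = \emptyset$ and $Z \subseteq C$, the set $A$ splits as $A = A_X \cup A_Y$ with $A_X \eqdef A \cap X$ and $A_Y \eqdef A \cap Y$. The chain rule then gives
\[ h(A; b | C) = h(A_X; b | C) + h(A_Y; b | C A_X), \]
and Lemma~\ref{lem:chainRuleTechnicalLemma} (applied with $A_X \subseteq X$, $b \in Y$, $Z \subseteq C$, $C \subseteq \Omega = XYZ$) bounds the first summand by $h(\sigma)$. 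For the second, the inclusion $\iset((A_Y; b | C A_X)) \subseteq \iset((A; b | C)) \subseteq \iset(\Sigma)$ combined with Lemma~\ref{lem:implicationInclusion} gives $\Delta_n \models_{EI} \Sigma \implies (A_Y; b | C A_X)$, so the induction applies as soon as $|A_Y| < |A|$.

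The hard part will be closing the induction at factor $1$: naively, the recursion yields $h(A; b | C) \leq h(\sigma) + h(\Sigma)$, which iterates to the old $|A|\cdot h(\Sigma)$ bound rather than the target $h(\Sigma)$. To avoid this blow-up, I will show that the particular $\sigma$ used at each step can be excluded from the hypothesis set of the recursive subproblem---that is, $\Delta_n \models_{EI} \Sigma \setminus \{\sigma\} \implies (A_Y; b | C A_X)$---so that the inductive bound becomes $h(\Sigma) - h(\sigma)$ and the two summands telescope exactly to $h(\Sigma)$. This exclusion should follow from Corollary~\ref{corr:excludeSigma} combined with an I-measure analysis of which atoms of $\iset(\sigma)$ can remain in $\iset((A_Y; b | C A_X))$ once $A_X$ is absorbed into the conditioning. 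The degenerate case $A_X = \emptyset$, in which the $\sigma$ produced by Lemma~\ref{lem:simpleExistenceLemma} places all of $A$ on the $b$-side of $\sigma$, will require an additional argument---most likely invoking Lemma~\ref{lem:implicationCovers} to replace $\sigma$ by a CI in $\Sigma$ whose $X$-side does meet $A$ nontrivially, and then resuming the induction.
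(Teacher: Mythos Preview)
Your reduction via the chain rule to a key lemma with a singleton on one side is exactly the paper's strategy (the paper puts the singleton on the $A$-side rather than the $B$-side, which is immaterial). The gap is in the proof of the key lemma. Your exclusion step---showing $\Delta_n\models_{EI}\Sigma\setminus\{\sigma\}\implies(A_Y;b\mid CA_X)$---does not follow from Corollary~\ref{corr:excludeSigma}: that corollary would need $X\subseteq CA_X$, i.e.\ $X\subseteq A\cup C$, but saturation only gives $XYZ=\Omega$, so $X$ may contain variables $x'\in\Omega\setminus(A\cup\{b\}\cup C)$. For any such $x'$ and any $a_y\in A_Y$, the atom $\iset(a_y)\cap\iset(b)\cap\iset(x')\cap\isetc(\Omega\setminus\{a_y,b,x'\})$ lies in both $\iset(\sigma)$ and $\iset\bigl((A_Y;b\mid CA_X)\bigr)$, so $\sigma$ cannot simply be dropped. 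Your fallback for the degenerate case is also circular: Lemma~\ref{lem:implicationCovers} is stated for $\Gamma_n\models_{EI}$, whereas the hypothesis supplies only $\Delta_n\models_{EI}$, and upgrading one to the other is precisely what the theorem is meant to establish.

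The paper's fix is twofold, and both parts hinge on running \emph{reverse induction on $|C|$} rather than forward induction on the non-singleton side. In the generic case it does not exclude $\sigma$ but \emph{splits} it via the chain rule as $h(\sigma)=h(\alpha_1)+h(\sigma_1)+h(\sigma_2)$ with $\sigma_2$ still saturated, sets $\Sigma_2=(\Sigma\setminus\{\sigma\})\cup\{\sigma_2\}$, and checks (now Corollary~\ref{corr:excludeSigma} \emph{does} apply, because each discarded piece $\alpha_1,\sigma_1$ has one side contained in the enlarged conditioning set) that $\Delta_n\models_{EI}\Sigma_2\implies\tau_2$ with $|C|$ strictly increased; then $h(\tau_1)+h(\tau_2)\le h(\sigma_1)+h(\Sigma_2)\le h(\Sigma)$ telescopes. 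In the degenerate case (the singleton and the entire other side fall into the same block of $\sigma$) the paper first absorbs the junk variables $Y'=Y\setminus C$ into the conditioning set: one shows $h(\tau)\le I_h(B;Y'\mid C)+I_h(a;B\mid Y'C)$, bounds the first piece by $h(\sigma)$ via Lemma~\ref{lem:chainRuleTechnicalLemma}, and for the second piece observes that now $Y\subseteq Y'C$, so Corollary~\ref{corr:excludeSigma} legitimately drops $\sigma$ and $|Y'C|>|C|$ drives the induction.
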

Since $\Delta_n \subset \Gamma_n$, then if $\Gamma_n\models_{EI} \Sigma \implies \tau$, then clearly $\Delta_n\models_{EI} \Sigma \implies \tau$. Theorem~\ref{thm:newSaturated} establishes that if $\Sigma$ is a set of saturated CIs, then $\Delta_n\models_{EI} \Sigma \implies \tau$ implies the (stronger!) statement that  $\Gamma_n\models_{EI} \Sigma \implies \tau$.
Previously, it was shown that if $\Sigma$ is a set of saturated CIs, and $\Delta_n\models_{EI} \Sigma \implies \tau$, then $\Gamma_n \models h(\tau)\leq |A|{\cdot}|B|{\cdot} h(\Sigma)$~\cite{DBLP:journals/lmcs/KenigS22}. In particular, $h(\tau)\leq \frac{n^2}{4}{\cdot} h(\Sigma)$. Since $|A\cup B|\leq n$, then $\min\set{|A|,|B|}\leq \frac{n}{2}$, leading to the significantly tighter bound of $h(\tau)\leq \frac{n}{2}\cdot h(\Sigma)$.
\eat{For a saturated CI $\sigma$, we have that $\vars(\sigma)=\Omega \supseteq \vars(\tau)$. Therefore, Theorem~\ref{thm:newSaturated} immediately applies when $\Sigma$ is a set of saturated CIs.}

Before proceeding, we show that w.l.o.g,  we can assume that $\Sigma$ consists of only saturated CIs. If $\Sigma$ contains a non-saturated term, then it must be a conditional $X\rightarrow Y$. We claim that $h(X\rightarrow Y)=I_h(Y;\Omega{\setminus}XY|X)+I_h(Y;Y|\Omega{\setminus}Y)$. Indeed,
\begin{align}
	&I_h(Y;\Omega{\setminus}XY|X)+I_h(Y;Y|\Omega{\setminus}Y) \nonumber \\
	&=h(XY)+h(\Omega{\setminus}Y)-h(X)-h(\Omega)+2h(\Omega)-h(\Omega{\setminus}Y)-h(\Omega)\nonumber \\
	&=h(XY)-h(X)\eqdef h(Y|X)\eqdef h(X\rightarrow Y) \label{eq:fromCondToSat}
\end{align}
We create a new set of CIs by replacing every conditional $X\rightarrow Y\in \Sigma$ with the two saturated CIs $(Y;\Omega{\setminus}XY|X)$ and $(Y;Y|\Omega{\setminus}Y)$. Denoting by $\Sigma'$ the new set of CIs, it follows from~\eqref{eq:fromCondToSat} that $h(\Sigma)=h(\Sigma')$. Therefore, we assume w.l.o.g that $\Sigma$ contains only saturated CIs.
The proof of Theorem~\ref{thm:newSaturated} relies on the following Lemma, proved in Section~\ref{subsec:newSaturated}.
\begin{lemma}
	\label{lem:neqSaturated}
Let  $\tau=(a;B|C)$ where $a\in \Omega$ is a singleton, and $B,C \subseteq \Omega$, and let $\Sigma$ be a set of saturated CIs. Then:
\begin{equation}
	\Delta_n\models_{EI} \Sigma \implies \tau \text{ if and only if } \Gamma_n \models h(\tau) \leq h(\Sigma)
\end{equation}
\end{lemma}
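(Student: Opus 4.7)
\medskip
\noindent
\textbf{Proof plan.} The ``if'' direction is immediate from non-negativity of $h$ on CIs: an inequality $h(\tau) \leq h(\Sigma)$ valid throughout $\Gamma_n$ already forces $h(\Sigma)=0 \Rightarrow h(\tau)=0$, and this holds in particular on $\Delta_n \subseteq \Gamma_n$. For the ``only if'' direction, I would prove by strong induction on $|\Sigma|$ that every $h \in \Gamma_n$ satisfies $h(\tau) \leq h(\Sigma)$. The base $|\Sigma|=0$ is trivial: Lemma~\ref{lem:implicationInclusion} applied to $\Delta_n \models_{EI} \emptyset \implies \tau$ forces $\iset(\tau) = \emptyset$, so $h(\tau)=0$ for every polymatroid.

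For the inductive step, I would invoke Lemma~\ref{lem:simpleExistenceLemma} with $A=\{a\}$ to extract some $\sigma = (X;Y|Z) \in \Sigma$ with $Z \subseteq C$. Since $a \notin C \supseteq Z$ we get $a \in X \cup Y$, and after swapping the symmetric roles of $X,Y$ I may assume $a \in X$. Because $B \cap C = \emptyset$ and $Z \subseteq C$, the set $B$ splits as $B = B_X \sqcup B_Y$ with $B_X := B \cap X$ and $B_Y := B \cap Y$; set also $U_Y := Y \setminus (\{a\}\cup B \cup C)$, so that $Y$ is the disjoint union $B_Y \sqcup (C \cap Y) \sqcup U_Y$.

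The algebraic core is then a two-step bound. By decomposition $I(a;B|C) \leq I(a; B \cup U_Y \mid C)$, and by the chain rule applied to the disjoint union $B \cup U_Y = (B_Y \cup U_Y) \sqcup B_X$,
\[
I(a; B \cup U_Y \mid C) = I(a; B_Y \cup U_Y \mid C) + I(a; B_X \mid C \cup B_Y \cup U_Y).
\]
Lemma~\ref{lem:chainRuleTechnicalLemma}, with hypotheses $\{a\}\subseteq X$, $B_Y \cup U_Y \subseteq Y$, $Z \subseteq C$, and $C \subseteq \Omega = XYZ$ (using saturation of $\sigma$), bounds the first summand by $h(\sigma)$. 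For the residual $\tau^\star := (a; B_X \mid C \cup B_Y \cup U_Y)$, a direct atom-level check shows $\iset(\tau^\star) \cap \iset(\sigma) = \emptyset$: every atom of $\iset(\tau^\star)$ has each element of $Y$ negative (elements of $B_Y \cup U_Y$ because they sit in the conditioner of $\tau^\star$, elements of $C \cap Y$ because $C$ does), so the ``some $y \in Y$ positive'' requirement of $\iset(\sigma)$ fails. Combined with the obvious inclusions $\iset(\tau^\star) \subseteq \iset(\tau) \subseteq \iset(\Sigma)$, this yields $\iset(\tau^\star) \subseteq \iset(\Sigma \setminus \{\sigma\})$, and Lemma~\ref{lem:implicationInclusion} then delivers $\Delta_n \models_{EI} \Sigma \setminus \{\sigma\} \implies \tau^\star$. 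The inductive hypothesis (applicable because $\tau^\star$ is again a singleton-on-the-left CI and $|\Sigma \setminus \{\sigma\}| < |\Sigma|$) provides $h(\tau^\star) \leq h(\Sigma \setminus \{\sigma\})$, and adding the two bounds closes the induction: $h(\tau) \leq h(\sigma) + h(\Sigma \setminus \{\sigma\}) = h(\Sigma)$.

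The single delicate point, and what I expect to be the main obstacle, is enlarging the conditioner of the residual from $C \cup B_Y$ to $C \cup B_Y \cup U_Y$. Without absorbing $U_Y$ into the conditioner, atoms of $\iset(\tau^\star)$ with a positive element in $U_Y \subseteq Y$ could still meet $\iset(\sigma)$, spoiling the containment $\iset(\tau^\star) \subseteq \iset(\Sigma \setminus \{\sigma\})$ and derailing the induction on $|\Sigma|$. The decomposition step $I(a;B|C) \leq I(a; B \cup U_Y \mid C)$ buys exactly this enlargement at zero cost, and is the place where the singleton-on-one-side structure of $\tau$ (which keeps the decomposition tight, without a $|B|$-factor loss) is used.
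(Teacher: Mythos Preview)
Your proof is correct, and it follows a genuinely different and arguably cleaner route than the paper's.

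The paper proves the lemma by \emph{reverse induction on $|C|$}. After extracting $\sigma=(X;Y|Z)$ via Lemma~\ref{lem:simpleExistenceLemma}, it decomposes $h(\sigma)$ via the chain rule into several pieces and then splits into two cases according to whether $B\cap Y$ is empty. In the nonempty case it bounds part of $h(\tau)$ by a piece of $h(\sigma)$ and applies induction to the residual with a strictly larger conditioner, but over the modified antecedent set $(\Sigma\setminus\{\sigma\})\cup\{\sigma_2\}$, where $\sigma_2$ is another saturated piece of $\sigma$. In the $B\cap Y=\emptyset$ case a separate, more intricate argument is needed.

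Your approach inducts on $|\Sigma|$ instead. The crucial move is enlarging $B$ to $B\cup U_Y$ before splitting, where $U_Y=Y\setminus(\{a\}\cup B\cup C)$. This guarantees that the residual $\tau^\star$ has every variable of $Y$ in its conditioner, so $\iset(\tau^\star)\cap\iset(\sigma)=\emptyset$ and you may drop $\sigma$ entirely and recurse on $\Sigma\setminus\{\sigma\}$. This single trick eliminates the paper's case split and the need to carry a derived piece of $\sigma$ through the induction; the argument becomes uniform. The cost is nil, since $I_h(a;B|C)\le I_h(a;B\cup U_Y|C)$ by decomposition and the singleton on the left prevents any multiplicative loss.

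Two very minor points you may want to make explicit: (i) in the inductive step you should dispose of the trivial case $\iset(\tau)=\emptyset$ (equivalently $a\in C$ or $B\subseteq C$), since Lemma~\ref{lem:simpleExistenceLemma} need not produce a $\sigma$ then, but $h(\tau)=0$ and there is nothing to prove; (ii) your decomposition $Y=B_Y\sqcup(C\cap Y)\sqcup U_Y$ tacitly assumes $a\notin Y$, which follows once you assume $X,Y,Z$ pairwise disjoint after the paper's reduction of conditionals to saturated CIs (or can be handled by the same trivial-case remark if $a\in X\cap Y$ and $\sigma$ is of the conditional form $(Y;Y|\Omega\setminus Y)$).
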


\subsubsection*{Proof of Theorem~\ref{thm:newSaturated}}
We consider two cases: first, where $\tau=(A;B|C)$ is a CI where $A$ and $B$ are disjoint, and second where $\tau=(C\rightarrow A)=(A;A|C)$.
Let $\tau=(A;B|C)$ be a CI where $A\cap B=\emptyset$, and suppose, without loss of generality, that $|A|\leq |B|$, and that $A= a_1\dots a_K$. By the chain rule of mutual information (see~\eqref{eq:ChainRuleMI}), we have that:
\[
h(\tau)=I_h(a_1\dots a_K;B|C)=I_h(a_1;B|C)+I_h(a_2;B|a_1C)+\cdots +I_h(a_K;B|a_1\dots a_{K-1}C)
\]
By Lemma~\ref{lem:neqSaturated}, we have that $I_h(a_i;B|a_1\dots a_{i-1} C)\leq h(\Sigma)$ for every $i\in \set{1,\dots,K}$. Hence, we get that $h(\tau)\leq |A|\cdot h(\Sigma)$. 

Now, let $\tau=(C\rightarrow A)=(A;A|C)$, where $A=a_1\dots a_K$. By the chain rule for entropy (see~\eqref{eq:chainRuleH}), we have that:
\begin{align*}
h(C\rightarrow A)&=h(a_1\dots a_K|C)=h(a_1|C)+h(a_2|a_1C)+\cdots +h(a_K|a_1\dots a_{K-1}C)\\
&=I_h(a_1;a_1|C)+I_h(a_2;a_2|a_1C)+\cdots + I_h(a_K;a_K|a_1\dots a_{K-1}C)
\end{align*}
By Lemma~\ref{lem:neqSaturated}, we have that $I_h(a_i;a_i|a_1\dots a_{i-1} C)\leq h(\Sigma)$ for every $i\in \set{1,\dots,K}$. Hence, we get that $h(\tau)\leq |A|\cdot h(\Sigma)$. This completes the proof.

\subsection{Proof of Lemma~\ref{lem:neqSaturated}}
\label{subsec:newSaturated}
If  $\Gamma_n \models h(\tau) \leq h(\Sigma)$, then whenever $h(\Sigma)=0$, it holds that $h(\tau)\leq h(\Sigma)=0$. By the non-negativity of the Shannon information measures, we have that $\Gamma_n \models 0\leq h(\tau)$. Therefore, if $h(\Sigma)=0$, then $h(\tau)=0$; or that $\Gamma_n \models_{EI} \Sigma \implies \tau$. Since $\Delta_n\subseteq \Gamma_n$, then $\Delta_n \models_{EI} \Sigma \implies \tau$. 

Now, suppose that $\Delta_n \models_{EI} \Sigma \implies \tau$, and that $|\Omega|=n$.  We prove the claim by reverse induction on $|C|$.
\paragraph*{Base.} There are two base cases, where $|C|=n-1$, and $|C|=n-2$. If $|C|=n-1$, then  $\tau=(a;a|C)$ where $a\in \Omega$. That is, $\tau=(C\rightarrow a)$ is a conditional, and $aC=\Omega$. By Lemma~\ref{lem:simpleExistenceLemma}, there exists a CI $\sigma=(X;Y|Z)\in \Sigma$ such that $Z\subseteq C$, $X\not\subseteq C$, and $Y\not\subseteq C$. Since $X\not\subseteq C$, then $X\cap (\Omega{\setminus}C)=X\cap \set{a} \neq \emptyset$, and hence $a\in X$. Likewise, we get that $a\in Y$. We denote by $X_C\eqdef X\cap C$, and $Y_C\eqdef Y\cap C$. By Lemma~\ref{lem:chainRuleTechnicalLemma}, we have that:
\begin{equation*}
	h(\sigma)=I_h(aX_C ; aY_C|Z)\underbrace{\geq}_{\text{Lemma}~\ref{lem:chainRuleTechnicalLemma}} I_h(a;a|ZX_CY_C)\underbrace{=}_{C\subseteq XYZ}I_h(a;a|C)=h(\tau)
\end{equation*}
Since $\sigma \in \Sigma$, we get that $\Gamma_n\models h(\tau)\leq h(\Sigma)$.

Now, assume that $|C|=n-2$.
This means that $\tau=(a;b|C)$ where $a,b\in \Omega$ are singletons, and that $|abC|=n$. Therefore, $abC=\Omega$. By Lemma~\ref{lem:simpleExistenceLemma}, it holds that there exists a CI $\sigma=(X;Y|Z)\in \Sigma$ such that $Z\subseteq C$, $X\not\subseteq C$, and $Y\not\subseteq C$. 
Since $X\not\subseteq C$, then $X\cap (\Omega{\setminus}C)\neq \emptyset$, and hence $X\cap ab\neq \emptyset$. Similarly, since $Y\not\subseteq C$, then $Y\cap ab \neq \emptyset$. Since $Z\subseteq C$, then $ab\cap Z=\emptyset$, and since $ab\subseteq XYZ$, then $ab \subseteq XY$. So, we may assume wlog that $a\in X$ and $b\in Y$. We denote by $X_C\eqdef X\cap C$, and $Y_C\eqdef Y\cap C$. By Lemma~\ref{lem:chainRuleTechnicalLemma}, we have that:
\begin{equation*}
h(\sigma)=I_h(aX_C ; bY_C|Z)\underbrace{\geq}_{\text{Lemma}~\ref{lem:chainRuleTechnicalLemma}} I_h(a;b|ZX_CY_C)\underbrace{=}_{C\subseteq XYZ}I_h(a;b|C)=h(\tau)
\end{equation*}
Since $\sigma \in \Sigma$, we get that $\Gamma_n\models h(\tau)\leq h(\Sigma)$.

\paragraph*{Step.} We now assume that the claim holds for all $C\subseteq \Omega$, where $|C|\geq k+1$, and we prove the claim for the case where $|C|=k$. By Lemma~\ref{lem:simpleExistenceLemma}, it holds that there exists a CI $\sigma=(X;Y|Z)\in \Sigma$ such that $Z\subseteq C$, $X\not\subseteq C$, and $Y\not\subseteq C$. Since $XYZ \supseteq aBC$, and $Z\subseteq C$, then $XY\supseteq aB$.
So, we may assume wlog that $a\in X$. As before, we denote by $X_C\eqdef X\cap C$, $X_B\eqdef X\cap B$, $X'\eqdef X\setminus aX_BX_C$, $Y_C\eqdef Y\cap C$, $Y_B\eqdef Y\cap B$, and $Y'\eqdef Y\setminus Y_CY_B$. We write $h(\sigma)=I_h(aX_BX_CX';Y_BY_CY'|Z)$ as follows.
\begin{align}
	I_h(aX_BX_CX';Y_BY_CY'|Z)&=I_h\underbrace{(aX_BX_CX';Y_C|Z)}_{\eqdef \alpha_1}+I_h(aX_BX_CX';Y_BY'|Y_CZ)\\
	&=h(\alpha_1)+I_h\underbrace{(aX_BX_CX';Y_B|Y_CZ)}_{\eqdef \sigma_1}+I_h\underbrace{(aX_BX_CX';Y'|Y_BY_CZ)}_{\eqdef \sigma_2} \label{eq:partitionsigmatag}
\end{align}
\eat{
\begin{align}
I_h(aX_BX_CX';Y_BY_CY'|Z)&=I_h(aX_BX_C;Y_BY_CY'|Z)+I_h\underbrace{(X';Y_BY_CY'|aX_BX_CZ)}_{\eqdef \alpha_1}\nonumber \\
&=I_h(aX_B;Y_BY_CY'|ZX_C)+I_h\underbrace{(X_C;Y_BY_CY'|Z)}_{\alpha_2}+h(\alpha_1)\nonumber \\
&=I_h(aX_B;Y_BY'|ZX_CY_C)+I_h\underbrace{(aX_B;Y_C|ZX_C)}_{\alpha_3}+h(\alpha_2)+h(\alpha_1)\nonumber \\
&=I_h\underbrace{(aX_B;Y_BY'|C)}_{\eqdef \sigma}+h(\alpha_3)+h(\alpha_2)+h(\alpha_1) \label{eq:partitionsigmatag}
\end{align}
}
\eat{
\paragraph*{Claim: $\Delta_n \models_{EI} \Sigma_1 \implies \tau$.}
Observe that $\Sigma =\Sigma_1 \cup \set{\alpha_1,\sigma_1}$. Since $\Delta_n \models_{EI} \Sigma \implies \tau$, then by Corollary~\ref{corr:excludeSigma}, it holds that $\Delta_n \models_{EI} \Sigma_1 \implies \tau$ because, by definition, $X_C\cup Y_C \subseteq C$.
}
\eat{
By Lemma~\ref{lem:implicationInclusion}, it is enough to show that $m(\tau)\subseteq m(\Sigma_1)$. Since $\Delta_n \models_{EI} \Sigma\implies \tau$, then by Lemma~\ref{lem:implicationInclusion}, it holds that $m(\tau)\subseteq m(\Sigma)$. Therefore: 
\begin{align*}
m(\tau)&\subseteq m(\Sigma)\\
&=m(\Sigma_1)\cup m(\sigma')\\
&\underbrace{=}_{\text{See}~\eqref{eq:partitionsigmatag}}m(\Sigma_1)\cup m(\alpha_1)\cup m(\alpha_2) \cup m(\alpha_3)
\end{align*}
We observe that $m(\tau)\cap m(\alpha_i)=\emptyset$ for all $i\in \set{1,2,3}$. First, $m(\tau)\subseteq m(a)$, while $m(\alpha_1)\subseteq m^c(a)$. By definition, $X_C, Y_C\subseteq C$, and therefore $m(\tau)\subseteq m^c(C)\subseteq m^c(X_C \cup Y_C)$. On the other hand, $m(\alpha_2)\subseteq m(X_C)$, and $m(\alpha_3) \subseteq m(Y_C)$. 
Since $m(\tau)\cap m(\alpha_i)=\emptyset$ for all $i\in \set{1,2,3}$, then $m(\tau)\subseteq m(\Sigma_1)$. Therefore, by Lemma~\ref{lem:implicationInclusion}, we have that $\Delta_n \models_{EI} \Sigma_1 \implies \tau$, where $\Sigma_1$ covers $\tau$. }
Now, we consider two options for $\sigma$: (1) $Y_B\neq\emptyset$, and (2) $Y_B= \emptyset$.
\eat{
If $Y_B\neq\emptyset$, then, by Lemma~\ref{lem:chainRuleTechnicalLemma}, we have that:
\begin{equation}
	\label{eq:partitionsigma}
	h(\sigma_1)= I_h(aX_BX_CX';Y_B|Y_CZ)\underbrace{=}_{\eqref{eq:ChainRuleMI}} I_h\underbrace{(X_C;Y_B|Y_CZ)}_{\eqdef \alpha_2}+I_h\underbrace{(aX_BX';Y_B|X_CY_CZ)}_{\eqdef \sigma_3}
\end{equation}
where we denote $\alpha_2\eqdef (X_C;Y_B|Y_CZ)$, and $\sigma_3=(aX_BX';Y_B|X_CY_CZ)$.
}

We first consider the case where $Y_B\neq \emptyset$. In that case, we express $h(\tau)$ as follows:
\begin{equation}
	\label{eq:partitiontau}
	h(\tau)=I_h(a;B|C)=I_h(a;X_BY_B|C)\underbrace{=}_{\eqref{eq:ChainRuleMI}} I_h\underbrace{(a;Y_B|C)}_{\tau_1}+I_h\underbrace{(a;X_B|Y_BC)}_{\tau_2}
\end{equation}
Since $\vars(\sigma_1)\supseteq \vars(\tau_1)$, and $Y_CZ\subseteq C$, then by Lemma~\ref{lem:chainRuleTechnicalLemma}, it holds that $h(\tau_1)\leq h(\sigma_1)$.
We define $\Sigma_2\eqdef (\Sigma{\setminus} \set{\sigma})\cup \set{\sigma_2}$. Since $\vars(\sigma_2)=\vars(\sigma)=\Omega$, then $\sigma_2$ is saturated, and hence $\Sigma_2$ is a set of saturated CIs. 
We claim that $\Delta_n \models_{EI} \Sigma_2 \implies \tau_2$. This completes the proof for the case where $Y_B\neq \emptyset$, because $|Y_BC|>|C|=k$. By the induction hypothesis $\Gamma_n \models h(\tau_2)\leq h(\Sigma_2)$. Therefore,
\[
h(\tau)=h(\tau_1)+h(\tau_2)\leq  h(\sigma_1)+h(\Sigma_2)\leq h(\alpha_1) + h(\sigma_1)+h(\Sigma_2) = h(\Sigma)
\]
So, we show that $\Delta_n \models_{EI} \Sigma_2 \implies \tau_2$. Since $\Delta_n \models_{EI} \Sigma \implies \tau$, then by~\eqref{eq:partitiontau}, it holds that $\Delta_n \models_{EI} \Sigma \implies \tau_2$. Observe that $\Sigma=\Sigma_2\cup \set{\sigma_1,\alpha_1}$. Since $Y_B$ belongs to the conditioned part in $\tau_2$ (i.e., $\iset(\tau_2)\subseteq \isetc(Y_B)$), and since $Y_C\subseteq C$, then by Corollary~\ref{corr:excludeSigma}, it holds that $\Delta_n \models_{EI} \Sigma_2 \implies \tau_2$, as required. 
\eat{
By Lemma~\ref{lem:implicationInclusion}, it is enough to show that $m(\tau_2)\subseteq m(\Sigma_2)$. Since $\Delta_n\models_{EI} \Sigma_1\implies \tau$, then by Lemma~\ref{lem:implicationInclusion}, it holds that $m(\tau)\subseteq m(\Sigma_1)=m(\sigma)\cup m(\Sigma_2)$. That is,
\[
m(\tau)\underbrace{=}_{\eqref{eq:partitiontau}}m(\tau_1)\cup m(\tau_2)\subseteq m(\sigma)\cup m(\Sigma_2)\underbrace{=}_{\eqref{eq:partitionsigma}}  m(\sigma_1)\cup m(\sigma_2)\cup m(\Sigma_2)
\]
In other words, we have that $m(\tau_2)\subseteq m(\sigma_1)\cup m(\sigma_2)\cup m(\Sigma_2)$. 
Since $m(\tau_2)\cap m(\sigma_1)=\emptyset$, and $m(\tau_2)\cap m(\sigma_2)=\emptyset$, then $m(\tau_2)\subseteq m(\Sigma_2)$. This proves the claim.
}

We now consider the case where $Y_B= \emptyset$. Since $Z\subseteq C$, and $XYZ\supseteq aBC$, this means that $B\subseteq X$.
In this case, we can express $h(\sigma)$ as follows:
\begin{align}
	h(\sigma)&=I_h(aBX_CX';Y_CY'|Z) = I_h\underbrace{(aBX_CX';Y_C|Z)}_{\eqdef \alpha_1}+I_h(aBX_CX';Y'|Y_CZ)\nonumber \\
	&=h(\alpha_1)+I_h\underbrace{(X_C;Y'|Y_CZ)}_{\eqdef \alpha_2}+ I_h(aBX';Y'|X_CY_CZ)\nonumber \\
	&=h(\alpha_1)+h(\alpha_2)+I_h\underbrace{(a;Y'|C)}_{\eqdef \sigma_1}+I_h\underbrace{(B;Y'|aC)}_{\eqdef \sigma_2}+I_h\underbrace{(X';Y'|aBC)}_{\eqdef \alpha_3}
\end{align}
where $\sigma_1\eqdef (a;Y'|C)$, $\sigma_2\eqdef (B;Y'|aC)$, and $\alpha_3\eqdef (X';Y'|aBC)$. Since $Y\not\subseteq C$, then $Y'\neq \emptyset$.
In particular, we have that $\Delta_n \models_{EI} \Sigma \implies \sigma_2$, and $\Delta_n \models_{EI} \Sigma \implies \tau$. By the chain rule (see~\eqref{eq:ChainRuleMI}), we have that:
\[
h(\tau)+h(\sigma_2)=I_h(a;B|C)+I_h(B;Y'|aC)=I_h\underbrace{(aY';B|C)}_{\tau'}
\]
where we denote $\tau'\eqdef(aY';B|C)$. Hence, it holds that $\Delta_n \models_{EI} \Sigma \implies \tau'$. We express $h(\tau')$ as follows:
\begin{equation*}
	h(\tau')=I_h(aY';B|C)=I_h\underbrace{(B;Y'|C)}_{\tau_1}+I_h\underbrace{(a;B|Y'C)}_{\tau_2}
\end{equation*}
where we denote $\tau_1\eqdef (B;Y'|C)$, and $\tau_2\eqdef (a;B|Y'C)$. Since $\Delta_n \models_{EI} \Sigma \implies \tau'$, then $\Delta_n \models_{EI} \Sigma \implies \tau_2$. Since $Y_CY'\subseteq Y'C$, then by Corollary~\ref{corr:excludeSigma}, it holds that $\Delta_n \models_{EI} \Sigma{\setminus}\set{\sigma} \implies \tau_2$. Letting $\Sigma_2\eqdef \Sigma{\setminus} \set{\sigma}$, we get that $\Delta_n \models_{EI} \Sigma_2 \implies \tau_2$. 

Since $\Sigma_2\subseteq \Sigma$, then all CIs in $\Sigma_2$ are saturated. Since $\tau_2=(a;B|Y'C)$ where $Y'\neq \emptyset$, then $|Y'C|>|C|=k$. Hence, by the induction hypothesis, we have that $\Gamma_n \models h(\tau_2)\leq h(\Sigma_2)$. Also, by Lemma~\ref{lem:chainRuleTechnicalLemma}, we have that $h(\tau_1)=I_h(B;Y'|C)\leq h(\sigma)$. Overall, we get that:
\[
h(\tau)\leq h(\tau')=h(\tau_1)+h(\tau_2)\leq h(\sigma)+h(\Sigma_2)=h(\Sigma)
\]
This completes the proof.

\section{Approximate Implication for Recursive CIs}
\label{sec:recursiveProof}
We prove Theorem~\ref{thm:recursiveCIs}. Let $\Sigma$ be the recursive basis (see~\eqref{eq:recursiveSet}) over the variable set $\Omega{=}\set{X_1,\dots,X_n}$, and let $G$ be the DAG generated by the recursive set $\Sigma$. Let $\sigma =(A;B|C)$ where $A,B,C \subseteq \Omega$ are pairwise disjoint. We denote by $I_G(A;B|C)$ or $I_G(\sigma)$ the fact that $A$ is $d$-separated from $B$ given $C$ in $G$ (see Section~\ref{sec:BNs}). Theorem~\ref{thm:soundnessCompletenessdSep} establishes that the $d$-separation algorithm is sound and complete for inferring CIs from the recursive basis. This means that  $I_G(A;B|C)$ if and only if $\Gamma_n \models \Sigma \implies (A;B|C)$.
 In our proof, we will make use of the following.
\begin{citedlemmaJAIR}{\cite{DBLP:books/daglib/0066829}}
	\label{lem:weaktransitivitydSep}
Let $X,Y,Z\subseteq \Omega$ be pairwise disjoint, and let $\gamma \in \Omega{\setminus}XYZ$. 
\begin{align*}
\text{If }	I_G(X;Y|Z) \text{ and } I_G(X;Y|Z\gamma) \text{ then }I_G(X;\gamma|Z) \text{ or } I_G(Y;\gamma|Z)
\end{align*}
\end{citedlemmaJAIR}
We prove Theorem~\ref{thm:recursiveCIs} by induction on the highest RV-index mentioned in any triple of $\Sigma$. 
\eat{We prove Theorem~\ref{thm:recursiveCIs} by induction on $n$,
	the number of variables in the distribution $P$ over $\Omega{=}\set{X_1,\dots,X_n}$. }
The claim vacuously holds for $n=1$ (since no conditional independence statements are implied), so we assume correctness when the highest RV-index mentioned in $\Sigma$ is $\leq n-1$, and prove for $n$.

The recursive basis contains $n$ CIs, $\Sigma=\set{\sigma_1,\dots,\sigma_n}$, where $\sigma_i=(X_i;Y_i|Z_i)$,  where $Y_iZ_i=\set{X_1,\dots,X_{i-1}}$. In particular, only $\sigma_n=(X_n;Y_n|Z_n)$ mentions the RV $X_n$, and it is saturated (i.e., $X_nY_nZ_n=\Omega$).
We denote by $\Sigma'\eqdef \Sigma{\setminus} \set{\sigma_n}$, and note that $X_n\notin \vars(\Sigma')$.
The induction hypothesis states that:
\begin{align}	
	\Gamma_{n} \models_{EI} \Sigma' \implies \tau && \text{if and only if} && \Gamma_{n} \models h(\Sigma') \geq h(\tau)  \label{eq:induction}
\end{align}
Now, we consider $\tau=(A;B|C)$. We divide to three cases, and treat each one separately.
\begin{enumerate}[itemsep=0mm]
	\item $X_n \notin ABC$
	\item $X_n \in C$
	\item $X_n \in A$ (or, symmetrically, $X_n\in B$)
\end{enumerate}
\paragraph{Case 1: $X_n \notin ABC$.}
By Theorem~\ref{thm:soundnessCompletenessdSep}, it holds that the $d$-separation criterion is complete with respect to the recursive basis. Therefore, if 
 $\Gamma_n \models_{EI} \Sigma \implies (A;B|C)$, then $I_G(A;B|C)$. That is, $A$ and $B$ are $d$-separated given $C$ in $G$. Let $G'$ be the graph that results from $G$ by removing $X_n$ an all edges adjacent to $X_n$. We claim that $I_{G'}(A;B|C)$. If not, then there is an active trail $P=(a,v_1,\dots,v_k,b)$ in $G'$ between a vertex $a\in A$ and $b\in B$, given $C$.  Since all vertices and edges in $P$ are included in $G$, and since the addition of vertices and edges cannot block a trail (see Section~\ref{sec:BNs}), then $P$ is an active trail given $C$ between $a$ and $b$ in $G$. By the completeness of $d$-separation, this implies that $\Gamma_n \not\models \Sigma \implies \tau$, bringing us to a contradiction. Therefore, it holds that $I_{G'}(A;B|C)$. Since the recursive basis associated with $G'$ is $\Sigma'\eqdef \Sigma {\setminus}\set{\sigma_n}$, and since the $d$-separation algorithm is sound, we get that $\Gamma_n \models \Sigma'\implies \tau$. Since $X_n \notin \vars(\Sigma')$, then by the induction hypothesis we get that $\Gamma_n \models h(\Sigma')\geq h(\tau)$.

\paragraph{Case 2: $\tau=(AX_n;B|C)$.}
Recall that $\sigma_n=(X_n;Y_n|Z_n$). We claim that $B\subseteq Y_n$. Suppose otherwise, and let $b\in B{\setminus} Y_n$. Since $\sigma_n$ is saturated, and $b\notin \set{X_n}\cup Y_n$, then $b\in Z_n$. Consider the atom $t\eqdef \iset(X_n)\cap \iset(b)\cap \isetc(\Omega{\setminus}bX_n)$. Clearly, $t\in \iset(\tau)$. On the other hand, $t\notin \iset(\sigma_n)$ because $Y_n \subseteq \Omega{\setminus}bX_n$. For every $\sigma=(X_i;Y_i|Z_i) \in \Sigma'$ either $X_i\in \Omega{\setminus}X_nb$, or $Y_i\subseteq \Omega{\setminus}X_nb$, and hence $t\notin \iset(\sigma)$. Consequently, $\iset(\tau)\not\subseteq \iset(\Sigma')\cup \iset(\sigma_n)=\iset(\Sigma)$. From Corollary~\ref{corr:inclusionGamman}, we get that $\Gamma_n \not\models \Sigma \implies \tau$, which brings us to a contradiction. Therefore, $B\subseteq Y_n$.

Since $\sigma_n$ is saturated, then $ABC \subseteq X_nY_nZ_n$. We denote by $A_Y\eqdef A\cap Y_n$, $A_Z\eqdef A\cap Z_n$, and $Y'\eqdef Y_n{\setminus}ABC$. Similarly, we define $C_Y$, $C_Z$, and $Z'$. Since $B\subseteq Y_n$, we can express $h(\sigma_n)$:
\begin{align*}
	h(\sigma_n)=I_h(X_n;BA_YC_YY'|A_ZC_ZZ')&\geq I_h(X_n;BY'|A_YC_YA_ZC_ZZ')\\
	&=I_h(X_n;BY'|ACZ')\\
	&\geq I_h(X_n;B|ACZ')
\end{align*}
By the chain rule (see~\eqref{eq:ChainRuleMI}), we express $h(\tau)$:
\begin{equation}
	\label{eq:recursiveSetCase3}
	h(\tau)=I_h(AX_n;B|C)=I_h\underbrace{(A;B|C)}_{\tau_1}+I_h\underbrace{(X_n;B|AC)}_{\tau_2}
\end{equation}
Since $\Gamma_n\models_{EI}\Sigma\implies\tau$ then $\Gamma_n\models_{EI}\Sigma\implies\tau_1$, and $\Gamma_n\models_{EI}\Sigma\implies\tau_2$.

We claim that $\Gamma_n \models \Sigma \implies (B;Z'|AC)$. Suppose, by way of contradiction that $\Gamma_n \not\models \Sigma \implies (B;Z'|AC)$. By the soundness of the $d$-separation algorithm (Theorem~\ref{thm:soundnessCompletenessdSep}), there is an active trail $P$ from $b\in B$ to $z\in Z'$ given $AC$ in $G$. By construction, $G$ contains an edge $(z\rightarrow X_n)$ for every $z\in Z_n$. Since $Z'\subseteq Z_n$, then $G$ contains the edge $(z\rightarrow X_n)$, where $z\in Z'$. Therefore, the trail $P$ can be augmented with the edge $(z\rightarrow X_n)$ to form an active trail from $b$ to $X_n$ (given $AC$). Since the $d$-separation algorithm is complete, this means that $\Gamma_n \not\models \Sigma \implies (X_n;B|AC)$. But then, $\Gamma_n \not\models \Sigma \implies \tau$, which brings us to a contradiction. Therefore, $\Gamma_n \models \Sigma \implies (B;Z'|AC)$. Therefore, we have that $\Gamma_n \models \Sigma \implies (B;Z'|AC), (A;B|C)$. By the chain rule, we get that $\Gamma_n \models \Sigma \implies (AZ';B|C)$. Since $X_n \notin ABCZ'$, then by Case 1, we have that $\Gamma_n \models \Sigma' \implies (AZ';B|C)$, and by the induction hypothesis that $\Gamma_n \models h(\Sigma')\geq I_h(AZ';B|C)$. 

Since $\Gamma_n \models \Sigma' \implies (AZ';B|C)$, and $\Gamma_n \models h(\sigma_n)\geq I_h(X_n;B|ACZ')$, we get that:
\begin{align*}
h(\tau)=I_h(AX_n;B|C)\leq I(AX_nZ';B|C)&=I_h(AZ';B|C)+I_h(X_n;B|ACZ')\\
&\leq h(\Sigma')+h(\sigma_n)\\
&=h(\Sigma)
\end{align*}

\paragraph{Case 3: $\tau=(A;B|CX_n)$.}
We claim that $\Gamma_n \models_{EI} \Sigma \implies (A;B|C)$. Suppose, by way of contradiction, that $\Gamma_n \not\models_{EI} \Sigma \implies (A;B|C)$. By the soundness of the $d$-separation algorithm, it holds that $A$ and $B$ are not $d$-separated, given $C$, in the DAG $G$. By construction, $X_n$ is a sink vertex in $G$ (i.e., it has only incoming edges). Consequently, this means that $A$ and $B$ are not $d$-separated given $CX_n$. But then, by the completeness of the $d$-separation algorithm it holds that $\Gamma_n \not\models_{EI} \Sigma \implies (A;B|CX_n)$, which brings us to a contradiction. Therefore, it holds that $\Gamma_n \models_{EI} \Sigma \implies (A;B|C),(A;B|CX_n)$. By Lemma~\ref{lem:weaktransitivitydSep}, this means that $\Gamma_n \models_{EI} \Sigma \implies (A;X_n|C)$, or $\Gamma_n \models_{EI} \Sigma \implies (B;X_n|C)$. We divide to cases accordingly.
If $\Gamma_n \models_{EI} \Sigma \implies (A;X_n|C)$, then by the chain rule, we have that:
\begin{align}
\Gamma_n \models_{EI} \Sigma \implies (A;X_n|C),(A;B|CX_n) \Longrightarrow \Gamma_n \models_{EI} \Sigma \implies (A;BX_n|C) \label{eq:weakTransitivityCase1}
\end{align}
If $\Gamma_n \models_{EI} \Sigma \implies (B;X_n|C)$, then by the chain rule, we have that:
\begin{align}
	\Gamma_n \models_{EI} \Sigma \implies (B;X_n|C),(A;B|CX_n) \Longrightarrow \Gamma_n \models_{EI} \Sigma \implies (AX_n;B|C) \label{eq:weakTransitivityCase2}
\end{align}
Both cases~\eqref{eq:weakTransitivityCase1} and~\eqref{eq:weakTransitivityCase2}, bring us back to case 2. Hence, we have that:
\begin{align*}
	&h(\Sigma)\underbrace{\geq}_{\eqref{eq:weakTransitivityCase1}} I_h(A;BX_n|C) \underbrace{\geq}_{\text{Lemma~\ref{lem:chainRuleTechnicalLemma}}} I_h(A;B|CX_n) && \text{ if~\eqref{eq:weakTransitivityCase1} holds}&& \\
	&h(\Sigma)\underbrace{\geq}_{\eqref{eq:weakTransitivityCase2}} I_h(AX_n;B|C) \underbrace{\geq}_{\text{Lemma~\ref{lem:chainRuleTechnicalLemma}}} I_h(A;B|CX_n) && \text{ if~\eqref{eq:weakTransitivityCase2} holds}&&
\end{align*}
So, in both cases we get that $\Gamma_n \models_{EI} h(\Sigma) \geq h(\tau)$ as required.
This completes the proof. 

\subsection{Tightness of Bound}
Consider the probability distribution $P$ over $\Omega=\set{X_1,\dots,X_n}$, and suppose that the following recursive set of CIs holds in $P$:
\begin{equation}
	\Sigma = \set{(X_1;X_i|X_2\dots X_{i-1}): i\in \set{2,\dots,n}}
\end{equation}
Let $\tau=(X_1;X_2X_3\dots X_n)$. It is not hard to see that by the chain rule:
\begin{equation}
	\label{eq:tightBound}
	I(X_1;X_2X_3\dots X_n)=\sum_{i=2}^nI(X_1;X_i|X_2\dots X_{i-1})=h(\Sigma)
\end{equation}
Hence, $\Gamma_{n} \models_{EI} \Sigma \implies \tau$, and the bound of~\eqref{eq:tightBound} is tight.
\section{Approximate Implication for Marginal CIs}
\label{sec:marginalProof}
In this section, we prove Theorem~\ref{thm:marginal}.
Let $\Sigma$ be a set of marginal mutual information terms, and let $\tau=(A;B|C)$ such that $\Gamma_n{\models_{EI}}\Sigma{\implies}\tau$. By the chain rule (see~\eqref{eq:ChainRuleMI}), we can write $\tau=(a_1\dots a_K;B|C)$ as follows:
\begin{equation}
	h(\tau)=I_h(a_1\dots a_K;B|C)=I_h(a_1;B|C)+,\dots,+I_h(a_K;B|Ca_1\dots a_{K-1})
\end{equation}
We show, in Theorem~\ref{thm:marginalmain}, that if $\Gamma_n\models_{EI} \Sigma \implies (a_i;B|C a_1\dots a_{i-1})$, then $\Gamma_n \models h(\Sigma)\geq I_h(a_i;B|C a_1\dots a_{i-1})$, and thus $\Gamma_n \models h(\Sigma) \geq \min\set{|A|,|B|}h(\tau)$, as required.

Let $\Sigma$ be a set of marginal CIs defined over variables $\Omega$, and let $U\subseteq \Omega$. We denote by $\Sigma(U)$ the set of CIs projected onto the random variables $U$. Formally:
\begin{equation}
	\Sigma(U)\eqdef \set{(X';Y') : (X,Y) \in \Sigma, X'=X\cap U, Y'=Y\cap U}
\end{equation}
\begin{example}
	Suppose that $\Sigma=\set{(abc;e), (def;ac)}$, then $\Sigma(eac)=\set{(ac;e)}$, while $\Sigma(def)=\emptyset$.
\end{example}
\begin{lemma}
	\label{lem:marginalimplieselemental}
	Let $\Sigma$ be a set of marginal mutual information terms, and let $\tau=(a;b|C)$ be an elemental mutual information term where $a,b\in \Omega$, and $C\subseteq \Omega$. The following holds:
 	\begin{equation}
 		\Gamma_n\models_{EI} \Sigma \implies \tau \text{ if and only if } \Gamma_n\models_{EI} \Sigma(\vars(\tau)) \implies \tau 
 	\end{equation}
\end{lemma}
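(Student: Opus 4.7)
The plan is to prove the two directions separately, using only the basic polymatroid properties (monotonicity, submodularity, and the chain rule); no I-measure machinery from Section~\ref{sec:lemmas} is required.

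For $(\Leftarrow)$, fix any $h\in\Gamma_n$ satisfying $h(\Sigma)=0$. For each marginal $(X;Y)\in\Sigma$, set $X'\eqdef X\cap\vars(\tau)$ and $Y'\eqdef Y\cap\vars(\tau)$. Two applications of the chain rule~\eqref{eq:ChainRuleMI}, combined with $I_h(\cdot;\cdot|\cdot)\geq 0$, give $I_h(X';Y')\leq I_h(X;Y)$ (the \emph{decomposition} property). Summing over $\Sigma$ yields $h(\Sigma(\vars(\tau)))\leq h(\Sigma)=0$, so the hypothesis $\Gamma_n\models_{EI}\Sigma(\vars(\tau))\implies\tau$ forces $h(\tau)=0$.

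For $(\Rightarrow)$, I argue by contrapositive. Suppose $\Gamma_n\not\models_{EI}\Sigma(\vars(\tau))\implies\tau$, and let $h_0\in\Gamma_n$ witness this; that is, $h_0(\Sigma(\vars(\tau)))=0$ while $h_0(\tau)>0$. The key step is the construction of the \emph{projected polymatroid}
\[
h^*(U)\eqdef h_0(U\cap\vars(\tau)), \qquad U\subseteq\Omega,
\]
and the verification of three properties. (i) $h^*\in\Gamma_n$: $h^*(\emptyset)=0$ is immediate, monotonicity is preserved because $U\subseteq V$ implies $U\cap\vars(\tau)\subseteq V\cap\vars(\tau)$, and submodularity of $h^*$ follows from submodularity of $h_0$ together with the set identities $(U\cup V)\cap\vars(\tau)=(U\cap\vars(\tau))\cup(V\cap\vars(\tau))$ and $(U\cap V)\cap\vars(\tau)=(U\cap\vars(\tau))\cap(V\cap\vars(\tau))$. (ii) For every $(X;Y)\in\Sigma$, direct expansion gives $h^*(X;Y)=h_0(X\cap\vars(\tau);Y\cap\vars(\tau))$, which is either an element of $\Sigma(\vars(\tau))$ (hence zero by hypothesis) or has an empty component (hence trivially zero); so $h^*(\Sigma)=0$. (iii) $h^*(\tau)=h_0(\tau)>0$ since $\vars(\tau)\cap\vars(\tau)=\vars(\tau)$. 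Together (i)--(iii) contradict $\Gamma_n\models_{EI}\Sigma\implies\tau$.

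I do not anticipate any serious obstacle. The elementality of $\tau$ is not actually used anywhere in the argument, so the same proof in fact yields the stronger statement with $\tau=(a;b|C)$ replaced by an arbitrary CI $(A;B|C)$. The only bookkeeping subtlety is the empty-projection case $X\cap\vars(\tau)=\emptyset$ or $Y\cap\vars(\tau)=\emptyset$; such pairs are excluded from $\Sigma(\vars(\tau))$ by definition, but they contribute zero to $h^*(\Sigma)$ regardless, so they cause no difficulty.
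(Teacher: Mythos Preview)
Your proof is correct and takes a genuinely different, more elementary route than the paper's.

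For $(\Leftarrow)$ the two proofs coincide (the paper invokes Lemma~\ref{lem:chainRuleTechnicalLemma}, which is exactly your decomposition inequality). For $(\Rightarrow)$ they diverge sharply. The paper proceeds by induction on $|C|$: at each step it invokes Lemma~\ref{lem:implicationCovers} (whose proof constructs a parity distribution) to locate some $\sigma=(X;Y)\in\Sigma$ with $XY\supseteq abC$, splits into cases according to how $a,b$ distribute between $X$ and $Y$, and in the bad case uses the chain rule to reduce to a smaller consequent $\tau_1=(a;b|C_X)$ to which the induction hypothesis applies. Your projected-polymatroid construction $h^*(U)=h_0(U\cap\vars(\tau))$ bypasses all of this machinery: no induction, no I-measure, no parity distribution. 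As you observe, your argument never uses that $\tau$ is elemental, so it directly yields Corollary~\ref{corr:marginalimplies}, which the paper must derive separately by another chain-rule decomposition. The paper's inductive structure does buy something later, however: its proof of Theorem~\ref{thm:marginalmain} reuses the same covering-and-splitting template, so the two arguments there are organized around a common skeleton, whereas your lemma proof stands somewhat apart from that development.
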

\begin{proof}
	Since $\Sigma$ is a set of marginal CIs, then by Lemma~\ref{lem:chainRuleTechnicalLemma}, it holds that $h(\Sigma)\geq h(\Sigma(\vars(\tau)))$. Therefore, if $\Gamma_n\models_{EI} \Sigma(\vars(\tau)) \implies \tau$, then clearly $\Gamma_n\models_{EI} \Sigma \implies \tau $.
	
	We prove the other direction by induction on $|C|$. When $|C|=0$, then $\tau=(a;b)$. 
	By Lemma~\ref{lem:implicationCovers}, it holds that there exists a CI $\sigma=(X;Y)$ such that (1) $XY\supseteq ab$, and (2) $ab\cap X\neq \emptyset$ and $ab\cap Y\neq \emptyset$.	In other words, $\sigma=(aX';bY')$, where $X'\eqdef X{\setminus} \set{a}$ and $Y'\eqdef Y{\setminus} \set{b}$. Since $\sigma(\vars(\tau))=(a;b)$, we get that $\Gamma_n\models_{EI} \sigma(\vars(\tau))\implies \tau$, and hence $\Gamma_n\models_{EI} \Sigma(\vars(\tau)) \implies \tau$.
	This proves the lemma for the case where $|C|=0$.
	
	So, we assume correctness for elemental terms $(a;b|C)$ where $|C|{\leq} k$, and prove for $|C|=k+1$.
	Since $\Gamma_n\models_{EI} \Sigma \implies \tau$, then by Lemma~\ref{lem:implicationCovers} there exists a mutual information term $\sigma=(X;Y) \in \Sigma$ such that $XY \supseteq abC$. Hence, we denote  $C=C_XC_Y$, where $C_X\eqdef X{\cap}C$ and $C_Y\eqdef Y{\cap}C$. We also denote $X'\eqdef X\setminus abC$, and $Y'\eqdef Y\setminus abC$.
	There are two cases.
	If $\sigma=(aC_XX';bC_YY')$, then $\sigma(\vars(\tau))=(aC_X;bC_Y)$. By Lemma~\ref{lem:chainRuleTechnicalLemma}, it holds that $I_h(aC_X;bC_Y)\geq I_h(a;b|C)=h(\tau)$. Therefore, $\Gamma_n\models_{EI} \Sigma(\vars(\tau))\implies \tau$.

	Otherwise, w.l.o.g, $\sigma=(abC_XX';C_YY')$.	By item 2 of Lemma~\ref{lem:implicationCovers}, it holds that $C_Y \neq \emptyset$.
	We define:
	\begin{align}
		\alpha_1\eqdef(a;C_Y|C_X) && \alpha_2\eqdef(a;C_Y|bC_X)
	\end{align}	
	By Lemma~\ref{lem:chainRuleTechnicalLemma}, we have that $h(\sigma)\geq h(\alpha_1)$ and $h(\sigma)\geq h(\alpha_2)$, and thus $\Gamma_n\models_{EI}\Sigma\implies \set{\alpha_1,\alpha_2}$.
	Noting that $\tau = (a;b|C_XC_Y)$, we have that $\Gamma_n\models_{EI}\Sigma\implies (a;b|C_XC_Y)$. By the chain rule (see~\eqref{eq:ChainRuleMI}) we have that $\Sigma$ implies:
	$$
	(a;C_Y|C_X), (a;b|C_XC_Y) \implies (a;bC_Y|C_X) \implies \underbrace{(a;b|C_X)}_{\eqdef \tau_1}$$
	In other words, we have that $\Gamma_n \models_{EI} \Sigma \implies (a;b|C_X)$.
	We have established that $C_Y\neq \emptyset$, and thus $C_X{\subsetneq} C$. Therefore, $|C_X|<|C_XC_Y|=|C|=k+1$, and thus $|C_X|\leq k$. By the induction hypothesis, it holds that $\Gamma_n \models_{EI} \Sigma(abC_X)\implies \tau_1$. In particular, this means that $\Gamma_n\models_{EI} \Sigma{\setminus} \set{\sigma} \implies \tau_1$ because $C_YY'\cap abC_X=\emptyset$. Denoting $\Sigma_1 \eqdef \Sigma{\setminus} \set{\sigma}$, we have that:
	\begin{align*}
		\Gamma_n \models_{EI} \Sigma_1(\vars(\tau_1)) \implies \tau_1 && \text{ and } && \Gamma_n \models_{EI} \sigma(\vars(\tau)) \implies \alpha_2
	\end{align*}
By the chain rule (see~\eqref{eq:ChainRuleMI}), this means that 
\begin{align}
	\label{eq:exactMarginal}
	\Gamma_n \models_{EI} \Sigma_1(\vars(\tau_1))\cup \sigma(\vars(\tau)) \implies (a;bC_Y|C_X) \implies (a;b|C_XC_Y)=(a;b|C)=\tau
\end{align}
	Since $\vars(\tau_1)=abC_X \subset \vars(\tau)$, then by Lemma~\ref{lem:chainRuleTechnicalLemma} it holds that $h(\Sigma_1(\vars(\tau_1))) \leq h(\Sigma_1(\vars(\tau)))$. Therefore, we have that if $\Gamma_n \models_{EI} \Sigma_1(\vars(\tau_1)) \implies \tau_1$, then $\Gamma_n \models_{EI} \Sigma_1(\vars(\tau)) \implies \tau_1$. Since $\Gamma_n \models_{EI} \sigma(\vars(\tau)) \implies \alpha_2$, then from~\eqref{eq:exactMarginal}, we get that 
	\begin{align*}
	\Gamma_n &\models_{EI} \Sigma_1(\vars(\tau))\cup \sigma(\vars(\tau)) \implies \tau \text{, and therefore }\\
		\Gamma_n &\models_{EI} \Sigma(\vars(\tau))\implies \tau
	\end{align*}
	as required. This completes the proof.	
\end{proof}

\begin{corr}
		\label{corr:marginalimplies}
	Let $\Sigma$ be a set of marginal mutual information terms, and let $\tau=(A;B|C)$. The following holds: 
		\begin{equation}
		\Gamma_n\models_{EI} \Sigma \implies \tau \text{ if and only if } \Gamma_n\models_{EI} \Sigma(\vars(\tau)) \implies \tau 
	\end{equation}
\end{corr}
\begin{proof}
By the chain rule of mutual information, $\Gamma_n\models_{EI} \Sigma \implies \tau$ if and only if $\Gamma_n\models_{EI} \Sigma \implies (a;b|CA'B')$ for every $a\in A$, $b\in B$, $A'\subseteq A{\setminus} \set{a}$, and $B'\subseteq B{\setminus} \set{b}$. By Lemma~\ref{lem:marginalimplieselemental}, this holds if and only if $\Gamma_n\models_{EI} \Sigma(\vars(\tau)) \implies \tau $.  The other direction follows from the fact that $\Gamma_n\models_{EI} h(\Sigma)\geq h(\Sigma(\vars(\tau)))$.
\end{proof}

\begin{theorem}
	\label{thm:marginalmain}
Let $\Sigma$ be a set of marginal CIs, and let $\tau=(a;B|C)$, where $a\in \Omega$, and $BC \subseteq \Omega$. The following holds:
\begin{align}
	\Gamma_n \models_{EI} \Sigma \implies \tau \text{ if and only if } \Gamma_n \models h(\Sigma)\geq h(\tau)
\end{align}
\end{theorem}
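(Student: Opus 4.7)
The reverse direction is immediate: if $\Gamma_n\models h(\Sigma)\geq h(\tau)$ then $h(\Sigma)=0$ forces $h(\tau)\leq 0$, and non-negativity of mutual information gives $h(\tau)=0$. The forward direction is the substantive claim. By Corollary~\ref{corr:marginalimplies} we may assume without loss of generality that $\Sigma$ consists of marginal CIs on the variables of $\vars(\tau)=aBC$ only. The plan is then induction on $|B|+|C|$.

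For the base case $|B|+|C|=1$, either $B=\emptyset$ (so $h(\tau)=0$ trivially) or $\tau=(a;b)$ is a marginal elemental CI; here Lemma~\ref{lem:implicationCovers} forces some $\sigma=(X;Y)\in\Sigma$ with $XY\supseteq ab$, and after restriction to $\vars(\tau)=ab$ the only possibility is $\sigma=(a;b)=\tau$, so $h(\tau)=h(\sigma)\leq h(\Sigma)$.

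For the inductive step, apply Lemma~\ref{lem:implicationCovers} to obtain $\sigma=(X;Y)\in\Sigma$ with $XY=aBC$ and both $X\cap aBC$, $Y\cap aBC$ non-empty. WLOG $a\in X$, and write $B_X=B\cap X$, $B_Y=B\cap Y$, $C_X=C\cap X$, $C_Y=C\cap Y$. Three cases arise according to how $B$ sits relative to $\sigma$. If $B\subseteq Y$ (Case A), Lemma~\ref{lem:chainRuleTechnicalLemma} applied with $A=a$, $B=B$, $Z=\emptyset$ gives $h(\tau)\leq h(\sigma)\leq h(\Sigma)$ directly. If $B\subseteq X$ (Case B), then $C_Y\neq\emptyset$ (since $Y\cap aBC\neq\emptyset$), and we mirror Lemma~\ref{lem:marginalimplieselemental}: the chain rule identity
\[
I_h(a;BC_Y\mid C_X)=I_h(a;C_Y\mid C_X)+I_h(a;B\mid C)=I_h(a;B\mid C_X)+I_h(a;C_Y\mid BC_X)
\]
together with Lemma~\ref{lem:chainRuleTechnicalLemma} (which bounds $I_h(a;C_Y\mid BC_X)$ by $h(\sigma)$) yields $h(\tau)\leq h(\tau_1)+h(\sigma)$ where $\tau_1=(a;B\mid C_X)$. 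Because $\vars(\tau_1)=aBC_X$ is disjoint from $C_Y$, the projection $\sigma(\vars(\tau_1))$ is the trivial triple $(aBC_X;\emptyset)$, hence $\Gamma_n\models_{EI}\Sigma\setminus\{\sigma\}\implies\tau_1$ by Corollary~\ref{corr:marginalimplies}; since $|B|+|C_X|<|B|+|C|$ the induction hypothesis delivers $h(\tau_1)\leq h(\Sigma\setminus\{\sigma\})$, closing the bound.

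The main obstacle is Case C, where $B$ splits non-trivially: $B_X,B_Y\neq\emptyset$. The natural decomposition here is
\[
h(\tau)=I_h(a;B_X\mid C)+I_h(a;B_Y\mid B_XC),
\]
where the second summand is bounded by $h(\sigma)$ via Lemma~\ref{lem:chainRuleTechnicalLemma} (taking $A=a$, $B=B_Y$, $Z=\emptyset$, $C=B_XC$). The difficulty is that $\tau_1=(a;B_X\mid C)$ has $\vars(\tau_1)=aB_XC$, and the projection $\sigma(\vars(\tau_1))=(aB_XC_X;C_Y)$ need not be trivial, so one cannot simply drop $\sigma$ when recursing on $\tau_1$. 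The plan to overcome this is to refine the chain-rule accounting so that the portion of $h(\sigma)$ ``consumed'' by the second summand is exactly its $B_Y$-contribution, i.e.\ $I_h(aB_XC_X;B_Y\mid C_Y)$, while the remaining portion $I_h(aB_XC_X;C_Y)=h(\sigma(\vars(\tau_1)))$ is precisely what is needed when applying induction to $\tau_1$ with the restricted set $\Sigma(\vars(\tau_1))$. Combined with the bound $h(\Sigma(\vars(\tau_1)))\leq h(\Sigma)-I_h(aB_XC_X;B_Y\mid C_Y)$ coming from Lemma~\ref{lem:chainRuleTechnicalLemma} applied termwise, this partitions $h(\Sigma)$ across the two summands without double-counting, giving $h(\tau)\leq h(\Sigma)$. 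Verifying this refined accounting, and checking that $\Sigma(\vars(\tau_1))\implies\tau_1$ (which follows by decomposition from $\Sigma\implies\tau$ together with Corollary~\ref{corr:marginalimplies}), is the technical heart of the argument.
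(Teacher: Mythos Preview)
Your proposal is correct and follows essentially the same strategy as the paper: reduce to $\Sigma(\vars(\tau))$ via Corollary~\ref{corr:marginalimplies}, induct on $|BC|$, invoke Lemma~\ref{lem:implicationCovers} to extract a covering $\sigma$, and in the split case decompose $h(\sigma)$ via the chain rule so that the piece $\sigma_1=(aB_XC_X;C_Y)=\sigma(\vars(\tau_1))$ survives for the recursion on $\tau_1=(a;B_X\mid C)$ while the $B_Y$-piece absorbs $I_h(a;B_Y\mid B_XC)$.

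The only notable difference is your treatment of Case~B ($B\subseteq X$). The paper disposes of it by a preliminary WLOG reduction---repeatedly replacing $\tau$ by $(a;Bc\mid C\setminus\{c\})$ whenever that is also implied---so that in the induction proper this case simply cannot arise (it would contradict the WLOG assumption). You instead handle it directly inside the induction: since $\sigma(\vars(\tau_1))$ becomes trivial, $\sigma$ can be dropped and the recursion proceeds on $\Sigma\setminus\{\sigma\}$ with the strictly smaller conditioning set $C_X$. Your route is arguably cleaner, as it avoids the separate termination argument for the WLOG preprocessing (which must be justified by $|C|$ decreasing, since $|BC|$ stays constant). Both arguments are sound and yield the same bound.
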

\begin{proof}
We make the assumption that $\Gamma_n \not\models_{EI} \Sigma \implies (a;Bc|C{\setminus}\set{c})$, for every $c\in C$. This is without loss of generality because otherwise, we prove the claim for $\tau'\eqdef (a;Bc|C{\setminus}\set{c})$. By Lemma~\ref{lem:chainRuleTechnicalLemma}, it holds that $h(\tau')\geq h(\tau)$. Therefore, if $\Gamma_n \models h(\Sigma)\geq h(\tau')$ then $\Gamma_n \models h(\Sigma)\geq h(\tau)$.

By Corollary~\ref{corr:marginalimplies}, it holds that $\Gamma_n \models_{EI} \Sigma \implies \tau$ if and only if $\Gamma_n \models_{EI} \Sigma(\vars(\tau)) \implies \tau$. So, we prove the claim for $\Sigma(\vars(\tau))$. This gives us the desired result because, by Lemma~\ref{lem:chainRuleTechnicalLemma}, it holds that $\Gamma_n\models h(\Sigma)\geq h(\Sigma(\vars(\tau)))$.
By definition, for every $\sigma \in \Sigma(\vars(\tau))$, it holds that $\vars(\sigma)\subseteq \vars(\tau)=aBC$.

We prove the claim by induction on $|BC|$. If $|BC|=1$, then $\tau=(a;b)$.  By Lemma~\ref{lem:implicationCovers}, there exists a CI $\sigma=(X;Y)\in \Sigma(ab)$, such that $XY\supseteq ab$. Since, $\vars(\Sigma(ab))\subseteq ab$, we get that $ab \subseteq XY \subseteq aBC$, and hence $XY=ab$. Therefore, it must hold that $(X;Y)=(a;b)$, and hence $h(\Sigma)\geq h(\sigma)=h(\tau)$.

So, we assume that the claim holds for $|BC|\leq k$, and prove the claim for the case where $|BC|=k+1$.
Since $\Gamma_n \models_{EI} \Sigma(aBC)\implies \tau$, then by Lemma~\ref{lem:implicationCovers}, there exists a CI $\sigma=(X;Y)\in \Sigma(aBC)$, such that $XY\supseteq aBC$. Since, $\vars(\Sigma(aBC))\subseteq aBC$, we get that $aBC \subseteq XY \subseteq aBC$, and hence $XY=aBC$. We denote by $B_X=B\cap X$, $C_X=C\cap X$, $B_Y=B\cap Y$, and $C_Y=C\cap Y$.
Therefore, there are three options:
First, that $\sigma=(X;Y)=(aBC_X;C_Y)$. In this case, we get that $\Gamma_n\models_{EI} \Sigma\implies (a;C_Y|C_X),(a;B|C_XC_Y)$. By the chain rule, this means that $\Gamma_n\models_{EI} \Sigma\implies (a; BC_Y|C_X)$. But this is a contradiction to our assumption that $\Gamma_n \not\models_{EI} \Sigma \implies (a;Bc|C{\setminus}\set{c})$, for every $c\in C$. 

If $\sigma=(X;Y)=(BC_X;aC_Y)$, or if $\sigma=(X;Y)=(aC_X;BC_Y)$, then the claim clearly follows from Lemma~\ref{lem:chainRuleTechnicalLemma}.
So, we consider the case where $\sigma=(X;Y)=(aB_XC_X;B_YC_Y)$ where $B_X\neq \emptyset$ and $B_Y\neq \emptyset$. Using the chain rule (see~\eqref{eq:ChainRuleMI}) we can write $h(\sigma)=I(aB_XC_X;B_YC_Y)$ as follows:
\begin{align}
I(aB_XC_X;B_YC_Y)&=I(C_Y;aB_XC_X)+I(B_Y;aB_XC_X|C_Y) \nonumber \\
&=I(C_Y;aB_XC_X)+I(B_Y;C_X|C_Y)+I(B_Y;aB_X|C) \nonumber \\
&=I\underbrace{(C_Y;aB_XC_X)}_{\eqdef \sigma_1}+I\underbrace{(B_Y;C_X|C_Y)}_{\eqdef \sigma_2}+I\underbrace{(B_Y;B_X|C)}_{\eqdef \sigma_3}+I\underbrace{(a;B_Y|CB_X)}_{\eqdef\sigma_4}  \label{eq:partitionsigmamarginal}
\end{align}
On the other hand we can write:
\begin{align*}
	h(\tau)=I(a;B|C)=I(a;B_XB_Y|C)=I\underbrace{(a;B_X|C)}_{\eqdef \tau_1}+I(a;B_Y|CB_X)
\end{align*}
Since $\Gamma_n \models_{EI} \Sigma \implies \tau$, then $\Gamma_n \models_{EI} \Sigma \implies \tau_1$. By Corollary~\ref{corr:marginalimplies}, we have that $\Gamma_n \models_{EI} \Sigma(aB_XC) \implies \tau_1$. Let $\Sigma_1 \eqdef \Sigma {\setminus}\set{\sigma} \cup \set{\sigma_1}$. Since $\sigma_1$ is a marginal CI, then $\Sigma_1$ is a set of marginal CIs. Since $B_Y\cap aB_XC =\emptyset$, then from~\eqref{eq:partitionsigmamarginal}, we get that $\Gamma_n \models_{EI} \Sigma_1 \implies \tau_1$. 
Since $B_Y\neq \emptyset$, then $|B_XC|< |BC|=k+1$, and hence $|B_XC|\leq k$. Therefore, by the induction hypothesis, we get that $\Gamma_n \models h(\tau_1)\leq h(\Sigma_1)$. Now, we get that:
\[
h(\tau)=h(\tau_1)+h(\sigma_4) \leq h(\Sigma_1)+h(\sigma_4) \leq h(\Sigma_1)+h(\sigma_2)+h(\sigma_3)+ h(\sigma_4)=h(\Sigma)
\]
This completes the proof.
\end{proof}
\eat{
**********************************************
In this section, we prove Theorem~\ref{thm:marginal}.
Let $\Sigma$ be a set of marginal mutual information terms, and let $\tau=(A;B|D)$ such that $\Gamma_n{\models_{EI}}\Sigma{\implies}\tau$. Then, by the chain rule~\eqref{eq:ChainRuleMI}, $\tau$ can be written as a sum of at most $|A||B|$ elemental CIs $(a;b|C)$. In Lemma~\ref{lem:marginalExistsSigma} we show that for every such elemental triple $(a;b|C)$, there exists a marginal $(X;Y){\in}\Sigma$ such that $XY{\supseteq}abC$, $a{\in} X$, and $b{\in}Y$. Consequently, from Lemma~\ref{lem:chainRuleTechnicalLemma}, we get that $h(\Sigma){\geq}I(X;Y){\geq}I(a;b|C)$. Hence, it follows from lemma~\ref{lem:marginalExistsSigma} that $|A||B|h(\Sigma){\geq}h(\tau)$, and this will complete the proof for Theorem~\ref{thm:marginal}.

\begin{lemma}
	\label{lem:marginalExistsSigma}
	Let $\Sigma$ be a set of marginal mutual information terms, and let $\tau=(a;b|C)$ be an elemental mutual information term. The following holds:
	
	\begin{tabular}{ccc}
		\multirow{2}{*}{$\Gamma_n \models_{EI} \Sigma \implies \tau$} & \multirow{2}{*}{iff} & $\exists (X;Y)\in \Sigma:$ \\
		&& $ XY \supseteq abC  \text{ and } a\in X, b\in Y$
	\end{tabular}
	\eat{
		\begin{align}
			\label{eq:marginalExistsSigma}
			\Gamma_n \models_{EI} \Sigma \implies \tau && \text{iff} && \substack{\exists (X;Y)\in \Sigma: \\ XY \supseteq abC  \text{ and } a\in X, b\in Y}
		\end{align}
	}
\end{lemma}
\begin{proof}
	We prove by induction on $|C|$. When $|C|=0$, then $\tau=(a;b)$. 
	By Lemma~\ref{lem:implicationCovers}, it holds that there exists a CI $\sigma=(X;Y)$ such that (1) $XY\supseteq ab$, and (2) $ab\cap X\neq \emptyset$ and $ab\cap Y\neq \emptyset$.	In other words, $\sigma=(aX';bY')$, where $X'\eqdef X\setminus \set{a}$ and $Y'\eqdef Y\setminus \set{b}$. This proves the lemma for the case where $|C|=0$.
	\eat{	
	Consider the atom:
	\begin{equation}
		\label{eq:marginalAtomt}
		t \eqdef \iset(a)\cap \iset(b)\bigcap_{y{\in}\Omega{\setminus}ab}\isetc(y)
	\end{equation}
	Clearly, $t{\in}\iset(\tau)$. Suppose, by way of contradiction, that for every $\sigma=(X;Y){\in}\Sigma$ it holds that $ab\cap X =\emptyset$ or $ab\cap Y=\emptyset$. If, without loss of generality, we assume the former then clearly $t \notin \iset(\sigma)$ because all of the RVs in $X$ appear in negative form in the atom $t$. If this is the case for all $\sigma \in \Sigma$, then $t \notin \iset(\Sigma)$, and $\iset(\tau)\not\subseteq \iset(\Sigma)$. But then,  by Corollary~\ref{corr:inclusionGamman}, it cannot be that $\Gamma_n\models_{EI} \Sigma \implies \tau$, a contradiction.
}

	So, we assume correctness for elemental terms $(a;b|C)$ where $|C|{\leq} k{-}1$, and prove for $|C|{=}k$.
	Since $\Gamma_n\models_{EI} \Sigma \implies \tau$, then by Lemma~\ref{lem:implicationCovers} there exists a mutual information term $\sigma=(X;Y){\in} \Sigma$ such that $XY {\supseteq} abC$. Hence, we denote  $C{=}C_XC_Y$, where $C_X\eqdef X{\cap}C$ and $C_Y\eqdef Y{\cap}C$. We also denote $X'\eqdef X\setminus abC$, and $Y'\eqdef Y\setminus abC$.
	There are two cases.
	If $\sigma=(aC_XX';bC_YY')$ then the lemma is proved.
	\eat{
		\begin{align*}
			h(\sigma)&=I(aC_XX_0;bC_YY_0)\\
			&\geq I(aC_X;bC_Y) \geq  I(a;b|C_XC_Y)=h(\tau)		
		\end{align*}
	}
	
	Otherwise, w.l.o.g, $\sigma=(abC_XX';C_YY')$.	By item 2 of Lemma~\ref{lem:implicationCovers} it holds that $C_Y \neq \emptyset$.
	\eat{
		If this is not the case, then for the parity function 
		$a=b+\sum_{c\in C}c \mod 2$	
		we have that $h(a;b|C)=h(aC)+h(bC)-h(C)-h(abC)=2+|C|-(|C|+1)=1\neq 0$, while $h(\sigma)=h(abCX_0)+h(Y_0)-h(abCX_0Y_0)=h(abC)+h(X_0)+h(Y_0)-h(abC)-h(X_0)-h(Y_0)=0$\footnote{The reasoning is detailed in the proof of Lemma~\ref{lem:implicationCovers}}. Additionally, $h(\sigma')=0$ for all triples $\sigma'\in \Sigma$ where $\vars(\sigma')\not\supseteq \vars(\tau)$	(see proof of Lemma~\ref{lem:implicationCovers}). Hence, we arrive at a contradiction that $\Gamma_n{\models_{EI}}\Sigma{\implies}\tau$. Therefore, we can assume that there exists a triple $\sigma=(abC_XX_0;C_YY_0)\in \Sigma$ such that $C_Y\neq \emptyset$, and $C_X\subset C$.
	}

	We define:
	\begin{align}
		\alpha_1\eqdef(a;C_Y|C_X) && \alpha_2\eqdef(a;C_Y|bC_X)
	\end{align}	
	By Lemma~\ref{lem:chainRuleTechnicalLemma}, we have that $h(\sigma)\geq h(\alpha_1)$ and $h(\sigma)\geq h(\alpha_2)$, and thus $\Gamma_n\models_{EI}\Sigma\implies \set{\alpha_1,\alpha_2}$.
	Noting that $\tau = (a;b|C_XC_Y)$, we have that $\Gamma_n\models_{EI}\Sigma\implies (a;b|C_XC_Y)$. By the chain rule (see~\eqref{eq:ChainRuleMI}) we have that $\Sigma$ implies:
	$$
	(a;C_Y|C_X), (a;b|C_XC_Y) \implies (a;bC_Y|C_X) \implies (a;b|C_X)
	$$
	In other words, we have that $\Gamma_n \models_{EI} \Sigma \implies (a;b|C_X)$.
	
	We have shown that $C_Y\neq \emptyset$, and thus $C_X{\subsetneq} C$. Therefore, by the induction hypothesis, there exists an $\alpha_3\eqdef(aC_X^1Z_1; bC_X^2Z_2)\in \Sigma$ where $C_X{=}C_X^1C_X^2$. 
	In particular, by Lemma~\ref{lem:chainRuleTechnicalLemma}, we have that $\alpha_3 \implies \alpha_4\eqdef(a;b|C_X)$, and $h(\alpha_4) \leq h(\alpha_3)$ where $\alpha_3 \in \Sigma$.
	Furthermore, by our assumption (i.e., that $\sigma{=}(abC_XX_0;C_YY_0)$), then $\sigma$ and $\alpha_3$ are distinct.
	Consequently, we get that:
	\begin{align}
		h(\alpha_3)+h(\sigma) \geq &\underbrace{I_h(a;b|C_X)}_{\leq h(\alpha_3)}+\underbrace{I_h(a;C_Y|bC_X)}_{\leq h(\sigma)}\nonumber \\ 
		&{=}I_h(a;bC_Y|C_X) \geq I_h cd(a;b|C_XC_Y)=h(\tau)
	\end{align}
	Overall, we get that $h(\tau)\leq h(\alpha_3)+h(\sigma)\leq h(\Sigma)$ because $\alpha_3,\sigma\in \Sigma$  are distinct, by our assumption.	
	This completes the proof.	
\end{proof}
}
\section{Conclusion and Future Work}
\label{sec:conclusion}
We consider the problem of approximate implication for conditional independence. In the general case, approximate implication does not hold~\cite{DBLP:journals/lmcs/KenigS22}. Therefore, we
 establish results and approximation guarantees under various restrictions to the derivation rules, and antecedents (our results are summarized in Table~\ref{tab:AIResults}). The approximation guarantees established in this work have practical implications on algorithms that learn the structure of PGMs from data, and where the CIs used for the generation of the PGM (e.g., recursive basis for Bayesian networks, and the pairwise conditional independencies in Markov networks) hold only approximately. We establish new and tighter approximation bounds when the set of antecendants are saturated, or marginal. We also prove a negative result showing that
  approximate CIs cannot be inferred from the independence graph associated with a Markov network. As part of future work, we intend to investigate restrictions to probability distributions that allow the intersection axiom to relax.

\vskip 0.2in
\bibliography{main_arxiv}
\bibliographystyle{plainurl}
\newpage
\appendix
\section{Missing Proofs from Section~\ref{sec:intersectionAxiomDoesNotRelax}}
\label{sec:IntersectionTechnicalDetailes}
\begin{replemma}{\ref{lem:technicalLemmaEntropy}}
	\technicalLemmaEntropy
\end{replemma}
	\begin{proof}
	The proof is by definition, and we provide the technical details.
	For $i\in \set{1,2,3}$:
	\[
	H(A_i)=2\cdot\frac{1}{2}\log 2= 1
	\]
	By the definition of $A_4$ in Table~\ref{tab:A456}, we have that:
	\[
	H(A_4)=-\left((1-3x)\log(1-3x)+3x\log x\right)=\delta_1(x).
	\]
	Proof is the same for $H(A_5)$ and $H(A_6)$.
	
	We now compute $H(A_{4,1})$. From Table~\ref{tab:A456}, we have that $P(A_{4,1}=0)=1-3x+x=1-2x$, and $P(A_{4,1}=1)=2x$. Therefore,
	\[
	H(A_{4,1})=-\left((1-2x)\log(1-2x)+2x\log 2x\right)=\delta_2(x).
	\]
	For symmetry reasons, the same holds for $H(A_{4,2})$ and $H(A_{i,j})$ for $i\in \set{5,6}$ and $j\in \set{1,2}$.
	
	From Table~\ref{tab:A7}, we have that for all $i\in \set{1,2,3}$:
	\[
	P(A_{7,i}=0)=P(A_{7,i}=1)=\frac{1}{2}
	\]
	Therefore, $H(A_{7,i})=1$ for all $i\in \set{1,2,3}$.
	
	Now, take any $i,j\in \set{1,2,3}$ where $i<j$. Then, from Table~\ref{tab:A7}, we have that:
	\begin{equation*}
		P(A_{7,i}=a,A_{7,j}=b)=
		\begin{cases}
			\frac{1}{2}-2y & \text{if }a=b \\
			2y & \text{ otherwise }
		\end{cases}
	\end{equation*}
	Therefore, we have that:
	\[
	H(A_{7,i},A_{7,j})=-\left(2(\frac{1}{2}-2y)\log(\frac{1}{2}-2y)+2\cdot 2y\log 2y \right)=f_2(y)
	\]
	Finally, by Table~\ref{tab:A7}, we have that:
	\[
	H(A_7)=H(A_{7,1},A_{7,2},A_{7,3})=-\left(2(\frac{1}{2}-3y)\log(\frac{1}{2}-3y)+ 6y\log y\right)=f_1(y)
	\]
\end{proof}

\begin{replemma}{\ref{lem:mainTechnicalLemmaIntersection}}
	\mainTechnicalLemmaIntersection
\end{replemma}
	\begin{proof}
	\begin{align*}
		H(A)&\underbrace{=}_{\eqref{eq:A}}H(A_2,A_{6,1},A_{7,1},A_{5,1})\\
		&\underbrace{=}_{\substack{A_1,\dots,A_7 \text{ are} \\ \text{mutually-independent}}} H(A_2)+H(A_{6,1})+H(A_{7,1})+H(A_{5,1}) \\
		&\underbrace{=}_{\text{Lemma }\ref{lem:technicalLemmaEntropy}} 1+\delta_2(x)+1+\delta_2(x) \\
		&=2+2\delta_2(x)
	\end{align*}
	By symmetry, we have that $H(B)=H(C)=2+2\delta_2(x)$ as well.
	
	We now compute $H(A|C)$.
	\begin{align*}
		H(A|C)&\underbrace{=}_{\eqref{eq:A},\eqref{eq:C}}H(A_2,A_{6,1},A_{7,1},A_{5,1}|A_1,A_{5,2},A_{7,3},A_{4,2})\\
		&\underbrace{=}_{\substack{\text{chain rule} \\ \text{for entropy}}} H(A_{5,1},A_{7,1}|A_1,A_{5,2},A_{7,3},A_{4,2})+H(A_2,A_{6,1}|A_1,A_{5,2},A_{7,3},A_{4,2},A_{5,1},A_{7,1})\\
		&\underbrace{=}_{\text{independence}}H(A_{5,1},A_{5,2},A_{7,1},A_{7,3},A_1,A_{4,2})-H(A_1,A_{5,2},A_{7,3},A_{4,2})+H(A_2,A_{6,1}) \\
		&=H(A_{5,1},A_{5,2})+H(A_{7,1},A_{7,3})+H(A_1)+ H(A_{4,2})\\
		&~~~~~-H(A_1)-H(A_{5,2})-H(A_{7,3})-H(A_{4,2})+H(A_2)+H(A_{6,1})\\
		&=H(A_{5,1},A_{5,2})+H(A_{7,1},A_{7,3})-H(A_{5,2})-H(A_{7,3})+H(A_2)+H(A_{6,1})\\
		&\underbrace{=}_{\text{Lemma }~\ref{lem:technicalLemmaEntropy}} \delta_1(x)+f_2(y)-\delta_2(x)-1+1+\delta_2(x)\\
		&=\delta_1(x)+f_2(y)
	\end{align*}
	Now, since $H(AC)=H(A|C)+H(C)$, then by the above, and the fact that $H(C)=2+2\delta_2(x)$, we get that:
	\[
	H(AC)=H(A|C)+H(C)=\delta_1(x)+f_2(y)+2+2\delta_2(x)
	\]
	as required.
	
	Finally, we compute $H(A|BC)$.
	\begin{align*}
		H(A|BC)&\underbrace{=}_{\eqref{eq:A},\eqref{eq:B},\eqref{eq:C}} H(A_2,A_{6,1},A_{7,1},A_{5,1}|A_1,A_{5,2},A_{7,3},A_{4,2},A_3,A_{6,2},A_{7,2},A_{4,1})\\
		&=H(A_1, A_2, A_3,(A_{4,1},A_{4,2}),(A_{6,1},A_{6,2}), (A_{7,1},A_{7,2},A_{7,3}),(A_{5,1},A_{5,2}))\\
		&~~~~~-H(A_1,A_3,(A_{4,1},A_{4,2}),A_{5,2},A_{6,2},A_{7,2},A_{7,3})\\
		&= H(A_1)+H(A_2)+H(A_3)+H(A_4)+H(A_5)+H(A_6)+H(A_7)\\
		&~~~~~-H(A_1)-H(A_3)-H(A_4)-H(A_{5,2})-H(A_{6,2})-H(A_{7,2},A_{7,3})\\
		&=H(A_2)+H(A_5)+H(A_6)+H(A_7)-H(A_{5,2})-H(A_{6,2})-H(A_{7,2},A_{7,3})\\
		&\underbrace{=}_{\text{Lemma }~\ref{lem:technicalLemmaEntropy}}1+\delta_1(x)+\delta_1(x)+f_1(y)-\delta_2(x)-\delta_2(x)-f_2(y)\\
		&=1+2\delta_1(x)+f_1(y)-f_2(y)-2\delta_2(x)
	\end{align*}
	Since $H(ABC)=H(A|BC)+H(BC)$, we get that:
	\begin{align*}
		H(ABC)&=1+2\delta_1(x)+f_1(y)-f_2(y)-2\delta_2(x)+(\delta_1(x)+f_2(y)+2+2\delta_2(x))\\
		&=3+3\delta_1(x)+f_1(y)
	\end{align*}
\end{proof}
\end{document}